\documentclass{article} % For LaTeX2e

% Recommended, but optional, packages for figures and better typesetting:
\usepackage{microtype}
\usepackage{graphicx}
\usepackage{subfigure}
\usepackage{booktabs} % for professional tables

\usepackage{hyperref}

\usepackage[accepted]{icml2024}
\usepackage{times}

% Optional math commands from https://github.com/goodfeli/dlbook_notation.
%%%%% NEW MATH DEFINITIONS %%%%%

\usepackage{amsmath,amsfonts,bm}

% Mark sections of captions for referring to divisions of figures

% Highlight a newly defined term

% Figure reference, lower-case.

% Figure reference, capital. For start of sentence

% Section reference, lower-case.

% Section reference, capital.

% Reference to two sections.

% Reference to three sections.

% Reference to an equation, lower-case.
%\def\eqref#1{equation~\ref{#1}}
% Reference to an equation, upper case
%\def\Eqref#1{Equation~\ref{#1}}
% A raw reference to an equation---avoid using if possible

% Reference to a chapter, lower-case.

% Reference to an equation, upper case.

% Reference to a range of chapters

% Reference to an algorithm, lower-case.

% Reference to an algorithm, upper case.

% Reference to a part, lower case

% Reference to a part, upper case

\def\1{\bm{1}}

% Random variables

% rm is already a command, just don't name any random variables m

% Random vectors

% Elements of random vectors

% Random matrices

% Elements of random matrices

% Vectors

% Elements of vectors

% Matrix

% Tensor
\DeclareMathAlphabet{\mathsfit}{\encodingdefault}{\sfdefault}{m}{sl}
\SetMathAlphabet{\mathsfit}{bold}{\encodingdefault}{\sfdefault}{bx}{n}

% Graph

% Sets

% Don't use a set called E, because this would be the same as our symbol
% for expectation.

% Entries of a matrix

% entries of a tensor
% Same font as tensor, without \bm wrapper

% The true underlying data generating distribution

% The empirical distribution defined by the training set

% The model distribution

% Stochastic autoencoder distributions

 % Laplace distribution

\newcommand{\R}{\mathbb{R}}

% Wolfram Mathworld says $L^2$ is for function spaces and $\ell^2$ is for vectors
% But then they seem to use $L^2$ for vectors throughout the site, and so does
% wikipedia.

 % See usage in notation.tex. Chosen to match Daphne's book.

\usepackage{url}
\usepackage{amsmath}
\usepackage{amssymb}
\usepackage{mathtools}
\usepackage{amsthm}
\usepackage{mathrsfs}
\usepackage{stfloats}
\usepackage{comment}
\usepackage{cleveref}
\usepackage{enumerate}
\usepackage{mathrsfs}

\newtheorem{definition}{Definition}[section]
\newtheorem{theorem}{Theorem}[section]

\newtheorem{lemma}{Lemma}[section]

\theoremstyle{definition}

\theoremstyle{remark}

\numberwithin{equation}{section}

\Crefformat{equation}{eq.(#2#1#3)}

% Authors must not appear in the submitted version. They should be hidden
% as long as the \iclrfinalcopy macro remains commented out below.
% Non-anonymous submissions will be rejected without review.

\icmltitlerunning{HERTA: A High-Efficiency and Rigorous Training Algorithm for Unfolded Graph Neural Networks}

\begin{document}

\twocolumn[
\icmltitle{HERTA: A High-Efficiency and Rigorous Training Algorithm\\ for Unfolded Graph Neural Networks}

% It is OKAY to include author information, even for blind
% submissions: the style file will automatically remove it for you
% unless you've provided the [accepted] option to the icml2024
% package.

% List of affiliations: The first argument should be a (short)
% identifier you will use later to specify author affiliations
% Academic affiliations should list Department, University, City, Region, Country
% Industry affiliations should list Company, City, Region, Country

% You can specify symbols, otherwise they are numbered in order.
% Ideally, you should not use this facility. Affiliations will be numbered
% in order of appearance and this is the preferred way.
% \icmlsetsymbol{equal}{*}

\begin{icmlauthorlist}
\icmlauthor{Yongyi Yang}{umich}
\icmlauthor{Jiaming Yang}{umich}
\icmlauthor{Wei Hu}{umich}
\icmlauthor{Michał Dereziński}{umich}
\end{icmlauthorlist}

\icmlaffiliation{umich}{Department of Computer Science and Engineering at Michigan
, University of Michigan}

\icmlcorrespondingauthor{Michał Dereziński}{derezin@umich.edu}
% \icmlcorrespondingauthor{Firstname2 Lastname2}{first2.last2@www.uk}

% You may provide any keywords that you
% find helpful for describing your paper; these are used to populate
% the "keywords" metadata in the PDF but will not be shown in the document
\icmlkeywords{Graph Neural Networks, Spectral Sparsificaton, Optimization, Matrix Sketching}

\vskip 0.3in
]
\printAffiliationsAndNotice{} % otherwise use the standard text.

\begin{abstract}
    As a variant of Graph Neural Networks (GNNs), Unfolded GNNs offer enhanced interpretability and flexibility over traditional designs.
    Nevertheless, they still suffer from scalability challenges when it comes to the training cost. Although many methods have been proposed to address the scalability issues, they mostly focus on per-iteration efficiency, without worst-case convergence guarantees. Moreover, those methods typically add components to or modify the original model, thus possibly breaking the interpretability of Unfolded GNNs. In this paper, we propose HERTA: a High-Efficiency and Rigorous Training Algorithm for Unfolded GNNs that accelerates the whole training process, achieving a nearly-linear time worst-case training guarantee. Crucially, HERTA converges to the optimum of the original model, thus preserving the interpretability of Unfolded GNNs. Additionally, as a byproduct of HERTA, we propose a new spectral sparsification method applicable to normalized and regularized graph Laplacians that ensures tighter bounds for our algorithm than existing spectral sparsifiers do. Experiments on real-world datasets verify the superiority of HERTA as well as its adaptability to various loss functions and optimizers.
\end{abstract}

\section{Introduction}\label{sec:introduction}

Graph Neural Networks (GNNs) have become a powerful modern tool for handling graph data because of their strong representational capabilities and ability to explore the relationships between data points \cite{gcn,gnn-intro,GIN}. Like many deep learning models, GNNs are generally designed by heuristics and experience, which makes analyzing and understanding them a difficult task. \emph{Unfolded GNNs} \cite{ivsu} are a type of GNNs that are rigorously derived from an optimization problem, which makes their inductive bias and training dynamics easier to interpret. 

Despite their enhanced interpretability, training unfolded models can be extremely expensive, especially when the graph is large and (relatively) dense. This issue can come from two aspects: \begin{enumerate}[1)]
    \item \textit{Slow iterations}: in every iteration the whole graph must be fed into the model, and the interactive nature of graphs prevents trivially utilizing techniques like mini-batch training;\label{stat:slow-iterations}
    \item \textit{Slow convergence}: the training convergence rate is related to the connectivity of the graph and the well-conditionedness of the node features, and the model converges slowly when the data is ill-conditioned.\label{stat:many-iterations}
\end{enumerate}

Many methods have been proposed to address the high cost of training unfolded models \cite{lesongstochastic,clustergcn,decoupling,vqgnn,gnnautoscale,musegnn}, primarily by using graph sampling to enable mini-batch training schemes that reduce the per-iteration cost  (Issue \ref{stat:slow-iterations}).
However, these approaches typically require distorting the underlying optimization objective explicitly or implicitly, thus diminishing the rigorous and interpretable nature of Unfolded GNNs. Moreover, existing works have mostly focused on per-iteration efficiency, while the convergence rate of the optimization in training Unfolded GNNs (Issue \ref{stat:many-iterations}) remains un-addressed, leading to methods that are not robust to ill-conditioned data. 

% \michal{I don't think we should talk about TWIRLS at all here. This makes our algorithm seem narrow. We should say that our algorithm is designed to accelerate optimization for unfolded GNNs in general, and then later specify that for concreteness we focus on TWIRLS. Also, can we not use the name TWIRLS, and just point to the equation?}
% In this work, we focus on TWIRLS$_{\text{base}}$ \footnote{Throughout this paper, we consider model TWIRLS$_{\text{base}}$. For simplicity, we will simply refer to this model as TWIRLS.} proposed in \cite{twirls}, a specific implementation of unfolded models that is powerful and concise. 

We propose HERTA: a High-Efficiency and Rigorous Training Algorithm for Unfolded GNNs, which is an algorithmic framework designed to address both Issues \ref{stat:slow-iterations} and \ref{stat:many-iterations}, while at the same time preserving the rigorous and interpretable nature of Unfolded GNNs.
% This method focus on issue \ref{stat:many-iterations}), aiming to reduce the training epochs needed to train the model, as opposed to existing methods that mostly focus on issue \ref{stat:slow-iterations}) and thus is able be combined with those methods. 
Unlike many existing methods that require changing the GNN model or defining a new objective, HERTA converges to the optimum of the original training problem, and thus preserves the interpretability of the unfolded models. Moreover, HERTA uses a specialized preconditioner to ensure fast linear convergence to the optimum, requiring only a logarithmic number of passes over the data. We show empirically that HERTA can be used to accelerate training for a variety of GNN objectives, as an extension of popular optimization techniques.

A key novelty of our algorithm lies in the construction of a preconditioner for the GNN objective, which accelerates the convergence rate. To construct this preconditioner, we design a spectral sparsification method for approximating the squared inverse of a normalized and regularized Laplacian matrix, by relying on an extension of the notion of effective resistance to overcome the limitations of existing graph sparsification tools. Below, we present an informal statement of our main result.

\begin{theorem}[Informal version of \Cref{thm:main}]\label{thm:main-informal}
    HERTA solves the $\lambda$-regularized Unfolded GNN objective \eqref{eq:bilevel-outer} with $n$ nodes, $m$ edges and $d$-dimensional node features to within accuracy $\epsilon$ in time $\tilde O\left( (m+nd) \left( \log \frac 1\epsilon\right)^2 + d^3\right)$ as long as the number of large eigenvalues of the graph Laplacian is $O(n/\lambda^2)$. 
\end{theorem}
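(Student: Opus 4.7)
The plan is to cast the $\lambda$-regularized Unfolded GNN training problem, after eliminating the inner variable via the closed-form fixed point of the energy function, as a (generalized) ridge regression problem in the model parameters $W\in\mathbb R^{d\times k}$. Concretely, if the inner layer returns $Y^\star(W) = (I+\lambda L)^{-1} X W$, then the outer problem reduces to optimizing $\mathcal L(W)$ whose Hessian is controlled by $H := X^\top (I+\lambda L)^{-2} X + \mu I$. Solving the normal equations with this $H$ directly costs $\Omega(n^3)$, so the strategy is to run a preconditioned first-order method (e.g.\ preconditioned Richardson / Nesterov / CG) whose convergence rate depends on how well we can approximate $H$ by a cheaper matrix $\tilde H$. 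The two costs to balance are then (i) the one-time cost of building $\tilde H$ and factoring it, and (ii) the per-iteration cost of applying $H$ and $\tilde H^{-1}$, multiplied by the number of iterations $\tilde O(\log(1/\epsilon))$.

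The heart of the argument is the construction of $\tilde H$ via spectral sparsification of the quadratic form $u\mapsto u^\top (I+\lambda L)^{-2} u$. The plan is to invoke the new sparsifier advertised in the abstract: sample rows of $X$ (i.e.\ nodes) with probabilities proportional to a generalized effective-resistance-type score for the matrix $(I+\lambda L)^{-2}$, producing a sparsified $\tilde H$ with $\tilde O(s)$ support, where $s$ is the effective dimension $\mathrm{tr}((I+\lambda L)^{-2})/\|(I+\lambda L)^{-2}\|$. The eigenvalue assumption that the Laplacian has only $O(n/\lambda^2)$ large eigenvalues is exactly the structural input that bounds this effective dimension by $\tilde O(d)$ (up to the regularization level), so $\tilde H$ can be stored and inverted in $\tilde O(d^3)$ time. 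To compute the sampling scores, I would use Johnson–Lindenstrauss sketching together with fast Laplacian linear-system solvers to apply $(I+\lambda L)^{-1}$ in $\tilde O(m)\cdot \log(1/\epsilon)$ time per solve; this is where the second $\log(1/\epsilon)$ factor in the theorem's running time originates (from inner Laplacian solves nested inside the outer preconditioned iterations).

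Once $\tilde H$ spectrally approximates $H$ up to a constant factor, standard preconditioned iteration yields linear convergence $\tilde O(\log(1/\epsilon))$, and each outer iteration requires one matvec with $H$ (two Laplacian solves plus one multiplication by $X$ and $X^\top$, total $\tilde O(m+nd)$ using fast Laplacian solvers) and one solve with $\tilde H$ (amortized $\tilde O(d^2)$ after the $d^3$ factorization). Multiplying the per-iteration cost $\tilde O(m+nd)\cdot \log(1/\epsilon)$ (including the inner Laplacian-solve factor) by the $\tilde O(\log(1/\epsilon))$ outer iterations gives the stated $\tilde O((m+nd)(\log(1/\epsilon))^2 + d^3)$ bound.

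The main obstacle is step two: proving that the new effective-resistance proxy for $(I+\lambda L)^{-2}$ (rather than $L^{+}$) yields a small, computable sampling distribution and that the resulting sparsifier is a genuine spectral approximation in the sense needed to precondition $H$. Classical effective-resistance sparsifiers apply to $L$ itself, not to powers of a regularized inverse, so I would need to either (a) reduce $(I+\lambda L)^{-2}$ to a weighted edge-incidence form through an intermediate decomposition, and then apply matrix Chernoff, or (b) develop the bound directly via a leverage-score argument using the tighter trace bound that the $O(n/\lambda^2)$-eigenvalue assumption enables. Controlling the approximation quality uniformly over $X$—so that $\tilde H\preceq H \preceq O(1)\tilde H$—while keeping the sampling complexity independent of $n$ and proportional only to the effective dimension is the crucial technical step on which the whole running-time bound rests.
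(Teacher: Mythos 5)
Your overall architecture matches the paper's: eliminate the inner variable, precondition the outer least-squares problem by a constant-factor $d\times d$ spectral approximation of $X^\top(I+\lambda \hat L)^{-2}X$, run an approximate-gradient outer loop with nested SDD/Laplacian solves (which is indeed where the $(\log\frac1\epsilon)^2$ comes from), and pay $\tilde O(d^3)$ to factor the preconditioner. However, the crucial step -- building the preconditioner within the stated budget -- has a genuine gap, and your account of what the eigenvalue assumption buys is wrong. First, the quantity you propose as the sample complexity, $\mathrm{tr}\bigl((I+\lambda \hat L)^{-2}\bigr)/\|(I+\lambda \hat L)^{-2}\|$, is \emph{not} made small by the assumption: since $\hat L$ has eigenvalue $0$ and the eigenvalues of $(I+\lambda\hat L)^{-2}$ are $(1+\lambda\lambda_i)^{-2}$, this ratio counts the \emph{small} Laplacian eigenvalues and is $\Theta(n)$ precisely when only $O(n/\lambda^2)$ eigenvalues are large. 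In the paper the assumption plays a different role: the regularized sparsifier (\Cref{lem:regularized-spectral-sprsification}) must be run at accuracy $\Theta(1/\lambda)$ -- because a $\beta$-spectral approximation of $I+\lambda\hat L$ degrades to roughly $\beta\lambda$ after squaring the inverse (the "unavoidable $\lambda^2$" issue, which your route (a) would also face and does not address) -- so the sparsified graph has $\tilde O(n_\lambda\lambda^2)$ edges, and the $d$ SDD solves needed to form $Q\approx(I+\lambda\tilde L)^{-1}X$ cost $\tilde O(n_\lambda\lambda^2 d)$; the assumption $n_\lambda=O(n/\lambda^2)$ is exactly what turns this into $\tilde O(nd)$.

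Second, your plan never explains how to obtain the sampled rows (or the sampling scores) of $(I+\lambda \hat L)^{-1}X$ within budget. Estimating node leverage scores of this matrix, or extracting even $\tilde O(d)$ of its rows, requires applying the solver on the \emph{full} graph either to the $d$ columns of $X$ or to one unit vector per sampled row, i.e.\ $\tilde O(md)$ time, which exceeds the claimed $\tilde O\bigl((m+nd)(\log\frac1\epsilon)^2+d^3\bigr)$ whenever $d$ is large relative to $\log^2\frac1\epsilon$ (and there is also a chicken-and-egg issue, since those ridge leverage scores involve the very matrix $X^\top(I+\lambda\hat L)^{-2}X$ you are trying to approximate). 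The paper avoids both problems by sparsifying \emph{edges} first -- sampling rows of the normalized incidence matrix by regularized effective resistances, estimated in $\tilde O(m)$ total time via JL sketches plus SDD solves against $\hat L+\lambda^{-1}I$ -- and only then solving the $d$ systems on the sparsified graph and compressing $Q$ to a $d\times d$ matrix by SRHT (uniform row sampling after a randomized Hadamard rotation), which sidesteps data-dependent score computation entirely (\Cref{lem:preconditioner}). Without such a mechanism, your step two does not yield the running time of \Cref{thm:main}.
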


In practice, the node feature dimensionality $d$ is generally much smaller than the graph size $m$, in which case, the running time of HERTA is $\tilde O\left( (m + nd) \left( \log \frac{1}{\epsilon}\right)^2\right)$. Notice that to describe a typical graph dataset with $m$ edges, $n$ nodes and $d$-dimensional node features, we need at least $O(m + nd)$ float or integer numbers. This shows the essential optimality of HERTA: its running time is of the same magnitude as reading the input data (up to logarithm factors).

The condition that the graph Laplacian has $O(n/\lambda^2)$ large eigenvalues is not strictly necessary; it is used here to simplify the time complexity (see Theorem \ref{thm:main} for the full statement). This condition is also not particularly restrictive, since in most practical settings the GNN parameter $\lambda$ is chosen as an absolute constant. At the same time, it is natural that the complexity of training a GNN depends on the number of large eigenvalues in the graph Laplacian, since this quantity can be interpreted as the effective dimension in the Laplacian energy function \eqref{eq:energy} used to define the Unfolded GNN (see more discussion in Section \ref{sec:efficient-twirls-training}).

\textbf{Outline.} The paper is organized as follows. In \Cref{sec:background} we introduce related work. In \Cref{sec:preliminaries} we introduce the mathematical notation and concepts used in this paper. We define our problem setting in \Cref{sec:problem-setting}. In \Cref{sec:efficient-twirls-training} we present and analyze our algorithm HERTA, and introduce the techniques that are used in proving the main result.
% \Cref{sec:discussion} discuss some related phenomenons and possible extensions of our method.
We conduct experiments on real-world datasets and show the results in \Cref{sec:experiments}. Finally, we conclude with some potential future directions in \Cref{sec:conclusion}. 

\section{Related Work}\label{sec:background}

% In this section we introduce relative concepts used in this paper as well as some of the related work.

\textbf{Unfolded Graph Neural Networks.}~ Unlike conventional GNN models, which are designed mainly by heuristics, the forward layers of Unfolded GNNs are derived by explicitly optimizing a graph-regularized target function \cite{revisiting-unfolding,ivsu,unified-view,elastic-gnn,twirls,unifying2,descent-step}. This optimization nature of Unfolded GNNs allows developing models in a more interpretable and controllable way. The presence of an explicit optimization problem allows  designers to inject desired properties into the model and better understand it. This approach has been used to overcome known GNN issues such as over-smoothing \cite{oono-oversmoothing,deeper-gnn,revisiting-oversmoothing} and sensitivity towards spurious edges \cite{adversarial,h2gcn}. 

\textbf{Efficient Graph Neural Network Training.}~ In order to address the scalability issue of GNNs, many techniques have been proposed. There are two major directions in the exploration of the solutions of this issue. The first direction is to adopt sampling methods. These methods include node-wise sampling \cite{graphsage}, layer-wise sampling \cite{fastgcn} and subgraph-wise sampling \cite{clustergcn,musegnn}. The second direction is to utilize embeddings from past iterations to enhance current iterations and to obtain more accurate representation with mini-batches and fewer forward layers \cite{gnnautoscale,vqgnn,lazygnn}.

As we indicated in \Cref{sec:introduction}, all of the  above 
methods aim at addressing Issue \ref{stat:slow-iterations} (\textit{Slow Iterations}), and none of them achieves a worst-case convergence guarantee without affecting the underlying GNN model. In contrast, HERTA addresses both Issue \ref{stat:slow-iterations} and also Issue \ref{stat:many-iterations} (\textit{Slow Convergence}), and possesses a theoretical guarantee on the training convergence rate while preserving the original target model. 

\textbf{Matrix Sketching and Subsampling.}~
Our techniques and analysis are closely related to matrix sketching and subsampling, which are primarily studied in the area of Randomized Numerical Linear Algebra (RandNLA, \citealt{wood14, randnla-book, martinsson2020randomized, murray2023randomized}). Given a large matrix, by sketching or sampling we compress it to a much smaller matrix with certain properties preserved, thus accelerating the solution by conducting expensive computations on the smaller matrix. This type of methods lead to improved randomized algorithms for tasks including low rank approximation \cite{halko2011finding, ridge-LSS, clarkson2017low}, linear system solving \cite{peng2021solving, solve-ls}, least squares regression \cite{rokhlin2008fast, meng2014lsrn} and so on \cite{cohen2021solving}. Our usage of these methods includes constructing spectral sparsifiers of graphs, obtained by edge subsampling \cite{spectral-sparifier-effective-resistance, first-spectral-sparsifier,spectral-sparsifier}, which are central in designing fast solvers for Laplacian linear systems \cite{vishnoi2013lx,sdd-solver-comb, sdd-solver}.

% \yyy{are there other related work (topic) to introduce?}
% \yyy{order the citations in a asceding time order}

\begin{comment}
    
\paragraph{Optimization Induced Graph Neural Networks}

Unlike sentences or images which have regular structures, graph data is much more flexible in modern science that can be used to represent various data type, and is also more challenging to process. Graph Neural Network (GNN) is a type of efficient method to model and process graph data that utilizes the powerfulness of deep learning. A typical GNN often consists of two parts: message-passing and aggregation. 
\end{comment}

\section{Preliminaries}\label{sec:preliminaries}
% \paragraph{Notations.}
For a vector ${\boldsymbol{x}}$, we denote its $\ell_2$ norm as $\|{\boldsymbol{x}}\|$. We use $\mathrm{diag}(\boldsymbol{x})$ to denote diagonalization of $\boldsymbol{x}$ into a matrix. For a matrix ${\boldsymbol{M}}$, we use $\|{\boldsymbol{M}}\|_{\mathcal F}$ and $\|\boldsymbol{M}\|$ to denote its Frobinius norm and operator norm. We also denote its largest and smallest singular value as $\sigma_{\max}({\boldsymbol{M}})$ and $\sigma_{\min}({\boldsymbol{M}})$ respectively, and denote its condition number as $\kappa({\boldsymbol{M}}) = \frac{\sigma_{\max}({\boldsymbol{M}})}{\sigma_{\min}({\boldsymbol{M}})}$. We use $\mathrm{nnz}(\boldsymbol{M})$ to denote the number of non-zero entries of $\boldsymbol{M}$. For two positive semidefinite (PSD) matrices ${\boldsymbol{\Sigma}}$ and $\widetilde {\boldsymbol{\Sigma}}$, if there exists $\epsilon \in (0,1)$ such that \begin{align}
(1-\epsilon) \widetilde {\boldsymbol{\Sigma}} \preceq {\boldsymbol{\Sigma}} \preceq (1+\epsilon) \widetilde{{\boldsymbol{\Sigma}}},
\end{align}
then we say ${\boldsymbol{\Sigma}} \approx_{\epsilon} \widetilde{{\boldsymbol{\Sigma}}}$, where $\preceq$ refers to Loewner order.

We use ${\boldsymbol{1}}$ to denote a vector with all entries equal to $1$, and the dimension of which is determined by the context if not specified. We use ${\boldsymbol{\delta}}_u$ to denote a vector with $u$-th entry equal to $1$ and all other entries equal to $0$. We also use $\delta_{u,v}$ to represent the $v$-th entry of ${\boldsymbol{\delta}}_u$, i.e.: 
$$\delta_{u,v} = \begin{cases} 1 & u = v \\ 0 & u \neq v \end{cases}.$$

For a function $\ell: \mathbb R^d \to \mathbb R$ that is bounded from bottom, a point ${\boldsymbol{w}}_0$ is called a solution of $\ell$ with $\epsilon$ error rate if it satisfies $\ell({\boldsymbol{w}}_0) \leq (1+\epsilon) \ell^*$ where $\ell^* = \inf_{{\boldsymbol{w}}} \ell({\boldsymbol{w}})$.

We adopt big-O notations in the time complexity analysis. Moreover, we use the notation $\tilde{O}(\cdot)$ to hide polynomial logarithmic factors of the problem size. As an example, $O(\log(n))$ can be expressed as $\tilde{O}(1)$. 

For a matrix ${\boldsymbol{\Pi}} \in \mathbb R^{s \times n}$, we call it a Gaussian sketch if its entries are i.i.d. $\mathcal N\left(0, 1/s\right)$ random variables. For a diagonal matrix ${\boldsymbol{R}} \in \mathbb R^{n \times n}$, we call it a diagonal Rademacher matrix if its entries are i.i.d. Rademacher random variables. See \Cref{sec:intro-to-math-tools} for more detailed introductions.

\section{Problem Setting} \label{sec:problem-setting}

Throughout this paper, we consider a graph $\mathcal G$ with $n$ nodes and $m$ edges. We denote its adjacency matrix by ${\boldsymbol{A}} \in \mathbb R^{n\times n}$, degree matrix by ${\boldsymbol{D}} = \mathop{ \mathrm{ diag } }\left({\boldsymbol{A}}{\boldsymbol{1}}\right) \in \mathbb R^{n \times n}$, and Laplace matrix (or Laplacian) by ${\boldsymbol{L}} = {\boldsymbol{D}} - {\boldsymbol{A}} \in \mathbb R^{n\times n}$. We also use the normalized adjacency $\hat {\boldsymbol{A}} = {\boldsymbol{D}}^{-1/2}{\boldsymbol{A}}{\boldsymbol{D}}^{-1/2}$ and normalized Laplacian $\hat {\boldsymbol{L}} = {\boldsymbol{I}} - \hat {\boldsymbol{A}}$. Notice that we assume there's a self-loop for each node to avoid $0$-degree nodes, which is commonly ensured in GNN implementations \citep{gcn,graphsage}, and therefore ${\boldsymbol{D}}^{-1/2}$ always exists. We also use ${\boldsymbol{B}} \in \mathbb R^{m \times n}$ to represent the incidence matrix of graph $\mathcal G$, that is, if the $i$-th edge in $\mathcal G$ is $(u_i,v_i)$, then the $i$-th row of ${\boldsymbol{B}}$ is ${\boldsymbol{\delta}}_{u_i} - {\boldsymbol{\delta}}_{v_i}$ (the order of $u_i$ and $v_i$ is arbitrary). It is not hard to see that ${\boldsymbol{L}} = {\boldsymbol{B}}^\top {\boldsymbol{B}}$.

We also assume that for each node $u$ in the graph, there is a feature vector ${\boldsymbol{x}}_u \in \mathbb R^d$, and a label vector\footnote{We allow the label to be a vector to make the concept more general. When labels are one-hot vectors, the task is a classification task, and when labels are scalars, it is a regression task.} ${\boldsymbol{y}}_u \in \mathbb R^{c}$ attached with it. Let ${\boldsymbol{X}} \in \mathbb R^{n \times d}$ and ${\boldsymbol{Y}} \in \mathbb R^{n \times c}$ be the stack of ${\boldsymbol{x}}_i$-s and ${\boldsymbol{y}}_i$-s.  We assume $n \geq \max\{d,c\}$ and ${\boldsymbol{X}}$ and ${\boldsymbol{Y}}$ are both column full rank.

The Unfolded GNN we consider in this paper is based on TWIRLS \citep{twirls}, which
%The forward process of TWIRLS
is defined by optimizing the following objective (called the ``energy function''): \begin{equation}E({\boldsymbol{Z}}; {\boldsymbol{W}}) := \frac{\lambda }{2} \mathrm{Tr} \left({\boldsymbol{Z}}^\top \hat {\boldsymbol{L}} {\boldsymbol{Z}}\right) + \frac{1}{2} \left\|{\boldsymbol{Z}} - f({\boldsymbol{X}};{\boldsymbol{W}})\right\|_\mathcal F^2.\label{eq:energy}\end{equation}
Here ${\boldsymbol{Z}} \in \mathbb R^{n \times c}$ is the optimization variable that can be viewed as the hidden state of the model, ${\boldsymbol{W}}$ is the learnable parameter of the model, and $\lambda > 0$ is a hyper-parameter to balance the graph structure information and node feature information. Note that $f: \mathbb R^d \to \mathbb R^c$ is a function of $\boldsymbol{X}$ and is parameterized by matrix ${\boldsymbol{W}}$. 

For the model training, given a loss function $\mathcal \ell: \mathbb R^{n \times c} \times \mathbb R^{n \times c} \to \mathbb R$, the training of TWIRLS can be represented by the following bi-level optimization problem \begin{align}
{\boldsymbol{W}}^* \in \arg\min_{{\boldsymbol{W}} \in \mathbb R^{d \times c}} \ell\left[{\boldsymbol{Z}}^*({\boldsymbol{W}}); {\boldsymbol{Y}}\right] \label{eq:bilevel-outer}
\\ \text{s.t.}\quad   {\boldsymbol{Z}}^*({\boldsymbol{W}}) \in \arg\min_{{\boldsymbol{Z}} \in \mathbb R^{n \times c}} E({\boldsymbol{Z}};{\boldsymbol{W}}), \label{eq:bilevel-inner}
\end{align}
where the solution of inner problem \cref{eq:bilevel-inner} (i.e. ${\boldsymbol{Z}}^*({\boldsymbol{W}})$) is viewed as the model output. The outer problem \cref{eq:bilevel-outer} is viewed as the problem of training.

In \citep{twirls}, the forward process of TWIRLS is obtained by unfolding the gradient descent algorithm for minimizing $E$ given a fixed ${\boldsymbol{W}}$: \begin{align}
{\boldsymbol{Z}}^{(t+1)} & = {\boldsymbol{Z}}^{(t)} - \alpha  \nabla E\left({\boldsymbol{Z}}^{(t)}\right)
\\ & = (1 - \alpha \lambda - \alpha) {\boldsymbol{Z}}^{(t)} + \alpha \hat {\boldsymbol{A}} {\boldsymbol{Z}}^{(t)} + \alpha f\left({\boldsymbol{X}};{\boldsymbol{W}}\right), \label{eq:unfolding-twirls}
\end{align}
where $\alpha > 0$ is the step size of gradient descent. Notice that we fix ${\boldsymbol{W}}$ and view $E$ as a function of ${\boldsymbol{Z}}$. The model output is ${\boldsymbol{Z}}^{(T)}$ for a fixed iteration number $T$. When $\alpha$ is set to a suitable value and $T$ is large, it is clear that ${\boldsymbol{Z}}^{(T)}$ is an approximation of ${\boldsymbol{Z}}^*$ defined in \cref{eq:bilevel-inner}. 

In this paper, we always assume $T\to \infty$ and thus the output of TWIRLS is exactly the unique global optimum  of \cref{eq:energy}. Therefore, we directly consider the explicit optimal solution of \cref{eq:energy}: 
\begin{align}
{\boldsymbol{Z}}^*({\boldsymbol{W}}) = \arg\min_{{\boldsymbol{Z}}} E({\boldsymbol{Z}}) = \left({\boldsymbol{I}} + \lambda \hat {\boldsymbol{L}}\right)^{-1} f({\boldsymbol{X}}; {\boldsymbol{W}}).
\end{align}

In \cite{twirls}, the implementation of $f$ is arbitrary as long as it can be trained (i.e. computable, smooth, etc.). In this paper we consider a simple implementation of $f$ which is a linear function, namely $f({\boldsymbol{X}};{\boldsymbol{W}}) = {\boldsymbol{X}}{\boldsymbol{W}}$ where ${\boldsymbol{W}} \in \mathbb R^{d \times c}$. Notice that this is also the practical setting used in \citep{twirls} for many datasets. For loss function we choose the mean squared error (MSE) loss defined as $\ell({\boldsymbol{Z}}, {\boldsymbol{Y}}) := \frac{1}{2}\left\|{\boldsymbol{Z}} - {\boldsymbol{Y}}\right\|_\mathcal F^2$. 
\begin{algorithm}[tbp]
\caption{HERTA: A High-Efficiency and Rigorous Training Algorithm for TWIRLS}\label{alg:herta}
\begin{algorithmic}
    \STATE {\bfseries Input:}  $\hat {\boldsymbol{L}}, {\boldsymbol{X}}, {\boldsymbol{Y}}$, $\lambda$, $c$, $K>0$, step size $\eta$ and number of iteration $T$.

    \STATE Set $\beta = \frac{1}{64}$, $s = \frac{Kd}{\beta^2} \log n$, $\mu = \min\left\{ \frac{\epsilon^{1/2}}{50 \kappa({\boldsymbol{X}})\lambda^2} , 1\right\}$ and ${\boldsymbol{R}} \in \mathbb R^{n \times n}$ a diagonal Rademacher matrix;
    
    \STATE $\tilde {\boldsymbol{L}} \gets \mathsf{Sparsify}_{\frac{\beta}{3\lambda}}(\hat {\boldsymbol{L}})$ by calling \Cref{alg:sparse};
    %{$\quad$ {\color{gray}//\Cref{alg:sparse}}}
    \STATE ${\boldsymbol{Q}} \gets \mathsf{SDDSolver}_{\frac{\beta}{\sqrt{3\lambda}}}\left({\boldsymbol{I}} + \lambda \tilde {\boldsymbol{L}}; {\boldsymbol{X}}\right)$;
    \STATE ${\boldsymbol{Q}}' \gets \mathsf{Hadamard}\left({\boldsymbol{R}} {\boldsymbol{Q}}\right)$;   
    \STATE Subsample $s$ rows of ${\boldsymbol{Q}}'$ uniformly and obtain $\tilde {\boldsymbol{Q}}$;
    \STATE ${\boldsymbol{P}} \gets \tilde {\boldsymbol{Q}}^\top \tilde {\boldsymbol{Q}}$;
    \STATE ${\boldsymbol{P}}' \gets {\boldsymbol{P}}^{-1/2}$;

    \FOR{$i = 1$ {\bfseries to} $c$}
    \STATE $\hat {\boldsymbol{y}}_i \gets ({\boldsymbol{I}} + \lambda \hat {\boldsymbol{L}}) {\boldsymbol{y}}_i$, where ${\boldsymbol{y}}_i$ is the $i$-th column of ${\boldsymbol{Y}}$;
    \STATE{Initialize ${\boldsymbol{w}}^{(0)}_i$ by all zeros;}
   \FOR{$t=1$ {\bfseries to} $T$}
        \STATE ${\boldsymbol{u}}^{(t)}_i \gets {\boldsymbol{X}} {\boldsymbol{P}}^{'} {\boldsymbol{w}}_i^{(t-1)} - \hat {\boldsymbol{y}}_i$;
        \STATE ${\boldsymbol{u}}_i^{(t)'} \gets \mathsf{SDDSolver}_{\mu}\left({\boldsymbol{I}} + \lambda \hat {\boldsymbol{L}}; {\boldsymbol{u}}_i^{(t)}\right)$;
        \STATE ${\boldsymbol{u}}_i^{(t)''} \gets \mathsf{SDDSolver}_{\mu}\left({\boldsymbol{I}} + \lambda \hat {\boldsymbol{L}}; {\boldsymbol{u}}_i^{(t)'}\right)$;
        \STATE ${\boldsymbol{g}}_i^{(t)} \gets {\boldsymbol{P}}^{'} {\boldsymbol{X}}^\top {\boldsymbol{u}}_i^{(t)''}$;
        \STATE ${\boldsymbol{w}}_i^{(t)} \gets {\boldsymbol{w}}_i^{(t-1)} - \eta {\boldsymbol{g}}_i^{(t)}$;
    \ENDFOR
    \ENDFOR

    \STATE {\bfseries Output:} $\left\{ {\boldsymbol{w}}^{(T)}_i\right\}_{i=1}^c$.
\end{algorithmic}
\end{algorithm}

By combining the above formulations and assumptions, the overall target function we consider in this paper has the following form: \begin{align}
\ell\left[ {\boldsymbol{Z}}^*({\boldsymbol{W}}) ; {\boldsymbol{Y}}\right] & = \frac{1}{2}\left\|\left({\boldsymbol{I}} + \lambda \hat {\boldsymbol{L}}\right)^{-1}{\boldsymbol{X}} {\boldsymbol{W}} - {\boldsymbol{Y}}\right\|_\mathcal F^2.\label{eq:final-loss}
\end{align}
This loss can be viewed as a summation of $c$ independent sub-loss functions. Define $\ell_i({\boldsymbol{w}}_i) := \frac{1}{2}\|({\boldsymbol{I}} + \lambda \hat {\boldsymbol{L}})^{-1} {\boldsymbol{X}}{\boldsymbol{w}}_i - {\boldsymbol{y}}_i\|^2$ where ${\boldsymbol{w}}_i$ and ${\boldsymbol{y}}_i$ are the $i$-th column of ${\boldsymbol{W}}$ and ${\boldsymbol{Y}}$ respectively, then we have \begin{align}\ell\left[{\boldsymbol{Z}}^*({\boldsymbol{W}}); {\boldsymbol{Y}}\right] = \sum_{i=1}^c \ell_i({\boldsymbol{w}}_i).\end{align} 
Note that we view $\ell_i$ as a function of ${\boldsymbol{w}}_i$. Moreover, by denoting $\hat {\boldsymbol{y}}_i = ({\boldsymbol{I}} + \lambda \hat {\boldsymbol{L}}) {\boldsymbol{y}}_i$ we can rewrite $\ell_i$ as \begin{align}
\ell_i({\boldsymbol{w}}_i) = \frac{1}{2} \left\| \left({\boldsymbol{I}} + \lambda \hat {\boldsymbol{L}}\right)^{-1}\left({\boldsymbol{X}}{\boldsymbol{w}}_i - \hat {\boldsymbol{y}}_i\right) \right\|^2.\label{eq:sub-loss}
\end{align}

\begin{algorithm}[tbp]
\caption{$\mathsf{Sparsify}_{\epsilon}(\hat{\boldsymbol{L}})$: Regularized Spectral Sparsifier}\label{alg:sparse}
\begin{algorithmic}
    \STATE {\bfseries Input:}  $\hat {\boldsymbol{L}}, \hat {\boldsymbol{B}}, \lambda$, $C > 0$ and expected error rate $\epsilon$.
    \STATE Set $k = C\log m$ and $s = C n_{\lambda}\log n/\epsilon^2$;
    %\Jiaming{Specify the constant}\yyy{should mention  after subsampling we rescale the matrix by probility inverse (perhaps in appendix)}
    \STATE Construct Gaussian sketch $\boldsymbol{\Pi}_1 \in\R^{k\times m}, \boldsymbol{\Pi}_2 \in\R^{k\times n}$;
    \STATE  $\boldsymbol{B}_{\mathcal{S}} \gets \mathsf{SDDSolver}_{2^{1/4}}(\hat{\boldsymbol{L}} + \lambda^{-1}{\boldsymbol{I}}; (\boldsymbol{\Pi}_1\hat{\boldsymbol{B}})^\top)$;
    \STATE  $\boldsymbol{\Pi}_{\mathcal{S}} \gets  \mathsf{SDDSolver}_{2^{1/4}}(\hat{\boldsymbol{L}} + \lambda^{-1}{\boldsymbol{I}}; \boldsymbol{\Pi}_2^\top)$;
   \FOR{$i=1$ {\bfseries to} $m$}
        % \STATE Compute $\boldsymbol{D}^{-1/2} \boldsymbol{\delta}_i$;
        % \STATE Compute $\hat{\boldsymbol{b}}_i = \boldsymbol{D}^{-1/2}\boldsymbol{\delta}_{u_i} - \boldsymbol{D}^{-1/2}\boldsymbol{\delta}_{v_i}$;
        \STATE  $\tilde{l}_i \gets  \|\boldsymbol{B}_{\mathcal{S}}\hat{\boldsymbol{b}}_i\|^2 + \lambda^{-1}\|\boldsymbol{\Pi}_{\mathcal{S}}\hat{\boldsymbol{b}}_i\|^2$, where $\hat{\boldsymbol{b}}_i^\top$ is the $i$-th row of $\hat{\boldsymbol{B}}$;
    \ENDFOR
    \STATE  $Z \gets  \sum_{i=1}^m \tilde{l}_i$;
    % \STATE Construct subsampling matrix $\boldsymbol{S}\in\R^{s\times m}$ with probability $\left\{\tilde l_i / Z\right\}_{i=1}^m$;
    \STATE Subsample $s$ rows of $\hat{\boldsymbol{B}}$ with probabilities $\left\{\tilde{l}_i/Z \right\}_{i=1}^m$ and obtain $\tilde{\boldsymbol{B}}$;
    \STATE $\tilde{\boldsymbol{L}} \gets  \tilde{\boldsymbol{B}}^\top\tilde{\boldsymbol{B}}$;
    \STATE {\bfseries Output:} $\tilde{\boldsymbol{L}}$.
\end{algorithmic}
\end{algorithm}

\section{Algorithm and Analysis}\label{sec:efficient-twirls-training}

In this section we present our main algorithm HERTA, give the convergence result and analyze its time complexity. We also introduce our key techniques and give a proof sketch for the main result. To start we introduce the following notion.

\begin{definition}[Effective Laplacian dimension]
For normalized Laplacian $\hat{\boldsymbol{L}}$ and regularization term $\lambda>0$, define \begin{align}
   n_\lambda := \mathrm{Tr} \left[ \hat {\boldsymbol{L}} \left(\hat {\boldsymbol{L}} + \lambda^{-1} {\boldsymbol{I}}\right)^{-1}\right]
    \end{align}
    as the effective Laplacian dimension of the graph.
\end{definition}
Denote $\{\lambda_i\}_{i=1}^n$ as the eigenvalues of $\hat{\boldsymbol{L}}$. Then the effective Laplacian dimension can be written as $n_\lambda = \sum_i \frac{\lambda_i}{\lambda_i+\lambda^{-1}}$. We can see that $n_\lambda$ is roughly the number of eigenvalues of $\hat {\boldsymbol{L}}$ which are of order $\Omega(\lambda^{-1})$, i.e., the number of ``large eigenvalues'' mentioned in \Cref{thm:main-informal}. It is not hard to see that $n_\lambda \leq n$ and $n_\lambda \to n$ as $\lambda \to \infty$. 

Intuitively, $n_\lambda$ represents the number of eigen-directions for which the Laplacian regularizer $\frac{\lambda }{2} \mathrm{Tr} ({\boldsymbol{Z}}^\top \hat {\boldsymbol{L}} {\boldsymbol{Z}})$ is large, thus having a significant impact on  the energy function \eqref{eq:energy}. For the remaining eigen-directions which are less significant, the effect of the regularizer is minimal. This can also be seen from the loss function \eqref{eq:sub-loss}: when we apply matrix $({\boldsymbol{I}} + \lambda \hat {\boldsymbol{L}})^{-1}$ to a vector, the effect of the Laplacian is dominated by the ${\boldsymbol{I}}$ term if this vector is aligned with a ``small'' Laplacian eigen-direction (i.e., with an eigenvalue significantly smaller than $\lambda^{-1}$), and thus the objective is close to the usual unregularized least squares loss.

Based on the above discussion, it stands to reason that the larger the effective Laplacian dimension $n_\lambda$ of the graph, the more work we must do during the training to preserve the graph structure of the GNN. With the above discussion, we formally give our main result for HERTA.

\begin{theorem}[Main result]\label{thm:main} 
For any $\epsilon > 0$, with a proper step size $\eta$, number of iterations $T$ and constant $K > 0$, \Cref{alg:herta} finds a solution $\hat {\boldsymbol{W}}\in\R^{d\times c}$ in time
\begin{align}
\tilde O\left(\left(m + nd\right)c \left(\log\frac{1}{\epsilon}\right)^2 +n_{\lambda}\lambda^2d+d^3\right)
\end{align}
such that with probability $1-\frac{1}{n}$,
\begin{align}
 \ell\left[ {\boldsymbol{Z}}^*({\boldsymbol{\hat W}}) ; {\boldsymbol{Y}}\right] \leq (1+\epsilon) \cdot \ell^*, 
 \end{align}
 where $\ell^* := \min_{{\boldsymbol{W}}} \ell\left[{\boldsymbol{Z}}^*\left( {\boldsymbol{W}}; {\boldsymbol{Y}}\right)\right]$.
% $\ell_i$ defined in \cref{eq:sub-loss} and any $\epsilon \in (0,1/2)$, there exists an algorithm that finds a solution ${\boldsymbol{w}}_T \in \mathbb R^d$ in time $\tilde O(m +  n_{\lambda}\lambda^2 d + nd  + d^3)$ such that ${\boldsymbol{w}}_T$ is a $\epsilon$-optimal solution of $\ell_i$.
\end{theorem}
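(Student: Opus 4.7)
The plan is to analyze HERTA as a preconditioned gradient descent method for the quadratic sub-loss $\ell_i(\boldsymbol w_i)=\tfrac12\|(\boldsymbol I+\lambda\hat{\boldsymbol L})^{-1}(\boldsymbol X\boldsymbol w_i-\hat{\boldsymbol y}_i)\|^2$, whose Hessian is $\boldsymbol H=\boldsymbol X^\top(\boldsymbol I+\lambda\hat{\boldsymbol L})^{-2}\boldsymbol X$. If the preconditioner $\boldsymbol P^{-1/2}$ satisfies $\boldsymbol P\approx_{c_0}\boldsymbol H$ for a small absolute constant $c_0$, then gradient descent in the preconditioned coordinates converges linearly with a constant rate, giving $T=O(\log(1/\epsilon))$ iterations to reach the claimed accuracy. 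The whole argument then reduces to (a) showing that the construction of $\boldsymbol P$ produces such a spectral approximation of $\boldsymbol H$, (b) controlling the error introduced by the two approximate SDD solves inside each gradient step, and (c) summing the resulting per-subroutine time complexities.

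First I would verify that the regularized sparsifier of \Cref{alg:sparse}, on output of $\mathsf{Sparsify}_{\beta/(3\lambda)}$, yields $\tilde{\boldsymbol L}+\lambda^{-1}\boldsymbol I\approx_{\beta/3}\hat{\boldsymbol L}+\lambda^{-1}\boldsymbol I$ (equivalently $\boldsymbol I+\lambda\tilde{\boldsymbol L}\approx_{\beta/3}\boldsymbol I+\lambda\hat{\boldsymbol L}$), via a matrix Bernstein argument on the edge-sampling distribution $\{\tilde l_i/Z\}$; the key point is to show that $\tilde l_i$ approximates the \emph{regularized effective resistance} $\hat{\boldsymbol b}_i^\top(\hat{\boldsymbol L}+\lambda^{-1}\boldsymbol I)^{-1}\hat{\boldsymbol b}_i$ using the Johnson--Lindenstrauss identity applied to the two Gaussian sketches $\boldsymbol\Pi_1,\boldsymbol\Pi_2$ and the SDD-solver outputs $\boldsymbol B_{\mathcal S},\boldsymbol \Pi_{\mathcal S}$. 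Inverting and squaring this spectral approximation then gives $(\boldsymbol I+\lambda\tilde{\boldsymbol L})^{-2}\approx_{\beta}(\boldsymbol I+\lambda\hat{\boldsymbol L})^{-2}$, so that $\boldsymbol Q^\top\boldsymbol Q\approx_{2\beta/3}\boldsymbol H$ once the SDD-solver tolerance $\beta/\sqrt{3\lambda}$ is composed in. Next I would apply the standard subspace-embedding guarantee for the Subsampled Randomized Hadamard Transform: with $s=\Theta(d\log n/\beta^2)$, $\tilde{\boldsymbol Q}^\top\tilde{\boldsymbol Q}\approx_{\beta/3}\boldsymbol Q^\top\boldsymbol Q$ with probability $1-1/\mathrm{poly}(n)$. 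Composing all three approximations gives $\boldsymbol P\approx_{c_0}\boldsymbol H$ for $c_0=O(\beta)=1/O(1)$, which is the desired preconditioner quality.

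Second, I would analyze the inner loop. Conditioned on the preconditioner being good, one step of exact preconditioned gradient descent with step size $\eta=\Theta(1)$ contracts the suboptimality gap by a constant factor; the issue is that $\boldsymbol g_i^{(t)}$ is computed using two approximate SDD solves applied to $\boldsymbol I+\lambda\hat{\boldsymbol L}$ (once each for $\boldsymbol u^{(t)\prime}_i$ and $\boldsymbol u^{(t)\prime\prime}_i$), introducing an $O(\mu)$ relative error per step. I would show by an inductive (perturbation) argument that choosing $\mu\le\epsilon^{1/2}/(50\kappa(\boldsymbol X)\lambda^2)$ keeps the inexact iteration contracting at essentially the same rate, so $T=O(\log(1/\epsilon))$ suffices. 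Translating sub-loss accuracy to the original loss requires bounding the gap between $\ell_i(\boldsymbol w_i)$ and $\ell_i^*$ in terms of parameter error through $\boldsymbol H$, which is controlled by the preconditioner bound together with strong convexity of the regularized least squares objective; summing over $c$ columns and taking a union bound on the randomness gives the overall high-probability guarantee.

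Finally, for the runtime I would tally: $\mathsf{Sparsify}$ costs $\tilde O(m+n)$ for the two SDD solves with $O(\log n)$ right-hand sides plus $O(m\log n)$ for the $\tilde l_i$'s, producing $\tilde{\boldsymbol L}$ with $\tilde O(n_\lambda\lambda^2)$ nonzeros; the SDD solve for $\boldsymbol Q$ then costs $\tilde O((n_\lambda\lambda^2+n)d)$; the Hadamard transform is $O(nd\log n)$; forming $\boldsymbol P$ and $\boldsymbol P^{-1/2}$ costs $\tilde O(d^3)$; each of the $cT$ inner iterations does one $\boldsymbol X$ multiply ($O(nd)$), two $\tilde O(m\log(1/\mu))$ SDD solves, and two $O(d^2)$ preconditioner multiplies, for a total of $\tilde O((m+nd)c(\log(1/\epsilon))^2)$. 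Adding everything yields the bound in the theorem. The main obstacle I anticipate is step (a): pushing through a spectral approximation of the \emph{squared regularized inverse} $(\boldsymbol I+\lambda\hat{\boldsymbol L})^{-2}$ from an effective-resistance sparsifier that only directly controls the regularized Laplacian itself — this is exactly why the paper introduces a modified effective resistance combining $\|\boldsymbol B_{\mathcal S}\hat{\boldsymbol b}_i\|^2$ and $\lambda^{-1}\|\boldsymbol\Pi_{\mathcal S}\hat{\boldsymbol b}_i\|^2$, and I expect the bulk of the technical effort to go into justifying that this composite score yields the right matrix concentration bound with an $n_\lambda$-dependent sample count.
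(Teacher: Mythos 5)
Your overall route is the same as the paper's: build $\boldsymbol P$ from the regularized sparsifier, SDD solves and SRHT so that $\boldsymbol P \approx_{c_0} \boldsymbol X^\top(\boldsymbol I+\lambda\hat{\boldsymbol L})^{-2}\boldsymbol X$ (\Cref{lem:preconditioner}), transfer accuracy through the change of variables $\boldsymbol P^{-1/2}$ (\Cref{thm:after-preconditioning}), run the inexact-gradient convergence analysis with solver tolerance $\mu\le \epsilon^{1/2}/(50\kappa(\boldsymbol X)\lambda^2)$ to get $T=O(\log(1/\epsilon))$ (\Cref{lem:outer-analysis-naive}), and tally per-step costs; your runtime accounting matches the paper's.

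However, there is a genuine gap in your step (a), precisely at the point you flag as the anticipated obstacle but then dismiss too quickly. You write that the sparsifier output satisfies $\tilde{\boldsymbol L}+\lambda^{-1}\boldsymbol I\approx_{\beta/3}\hat{\boldsymbol L}+\lambda^{-1}\boldsymbol I$ and that ``inverting and squaring'' then gives $(\boldsymbol I+\lambda\tilde{\boldsymbol L})^{-2}\approx_{\beta}(\boldsymbol I+\lambda\hat{\boldsymbol L})^{-2}$. That implication is false: a spectral approximation at a constant rate does \emph{not} survive squaring at a constant rate. In the worst case the error of the squared matrices inflates by a factor of the condition number of $\boldsymbol I+\lambda\hat{\boldsymbol L}$, which is $\Theta(\lambda)$; the paper proves the controlled version of this in \Cref{lem:square} (an $\approx_{\beta/\kappa}$ approximation yields $\approx_{8\beta}$ for the squares) and shows in \Cref{sec:sqaure-tight} that the condition-number loss is tight. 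This is exactly why \Cref{alg:herta} calls $\mathsf{Sparsify}_{\beta/(3\lambda)}$ rather than $\mathsf{Sparsify}_{\beta/3}$, and why the sparsifier must keep $\tilde O(n_\lambda\lambda^2)$ edges, producing the $n_\lambda\lambda^2 d$ term in the runtime. If your claimed implication held, you could sparsify at a $\lambda$-independent constant rate and delete that term, contradicting the paper's tightness discussion. To repair the argument, invoke the full strength of the $\beta/(3\lambda)$ guarantee from \Cref{lem:regularized-spectral-sprsification}, apply \Cref{lem:square} with $\kappa(\boldsymbol I+\lambda\hat{\boldsymbol L})\le 3\lambda$ to get $(\boldsymbol I+\lambda\tilde{\boldsymbol L})^{2}\approx_{8\beta}(\boldsymbol I+\lambda\hat{\boldsymbol L})^{2}$ (hence the same for the inverses), and only then compose with the SDD-solver error (where the $\beta/\sqrt{3\lambda}$ tolerance is needed because the solver's error is measured in the $\tilde{\boldsymbol H}$-norm and converting to the Euclidean norm costs a $\sqrt{3\lambda}$ factor) and the SRHT guarantee. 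The rest of your outline then goes through as in the paper.
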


Notice that if we assume $n_\lambda = O(n/\lambda^2)$, then $\tilde{O}(n_\lambda \lambda^2 d) = \tilde{O}(nd)$. By further assuming the number of classes $c=O(1)$, we recover the bound given by \Cref{thm:main-informal}.

Before heading into the details of HERTA (\Cref{alg:herta}), we first analyze the original (standard) implementation of training TWIRLS and point out the efficiency bottlenecks. We then introduce the key techniques and methods we need to prove our main result. Finally we present our main algorithm HERTA, and give a proof sketch for \Cref{thm:main}.

\subsection{Analysis of TWIRLS Training}\label{sec:naive}

In this subsection, we introduce the framework that we use to analyze the complexity of TWIRLS training. We also provide a complexity analysis of the implementation used in \cite{twirls}. 

As noted in \Cref{sec:problem-setting}, the whole optimization problem can be viewed as a bi-level optimization problem, where the inner-problem \cref{eq:bilevel-inner} approximates the linear system solver $\left({\boldsymbol{I}} + \lambda \hat {\boldsymbol{L}}\right)^{-1}({\boldsymbol{X}} {\boldsymbol{w}}_i - \hat {\boldsymbol{y}}_i)$, and the outer problem \cref{eq:bilevel-outer} uses the linear system solution to approximate the optimal solution of the training loss \cref{eq:final-loss}. When we discuss the solution of outer problems, we treat the solver of the inner problem (linear solver) as a black box. We formally define a linear solver as follows.
\begin{definition}[Linear solver]
    For a positive definite matrix ${\boldsymbol{H}} \in \mathbb R^{n \times n}$ and a real number $\epsilon > 0$, we call $f: \mathbb R^n \to \mathbb R^n$ a linear solver for ${\boldsymbol{H}}$ with $\epsilon$ error rate if it satisfies \begin{align}
        \forall {\boldsymbol{u}} \in \mathbb R^n,~ \left\| f({\boldsymbol{u}}) - {\boldsymbol{H}}^{-1}{\boldsymbol{u}}\right\|_{{\boldsymbol{H}}} \leq \epsilon \left\| {\boldsymbol{H}}^{-1} {\boldsymbol{u}} \right\|_{{\boldsymbol{H}}}.
    \end{align}
\end{definition}

In \citep{twirls}, the outer problem is also solved by standard gradient descent. The gradient of $\ell_i\left({\boldsymbol{w}}_i\right)$ is \begin{align}\label{eq:true-grad}
\nabla \ell_i({\boldsymbol{w}}_i) = {\boldsymbol{X}}^\top \left({\boldsymbol{I}} + \lambda \hat {\boldsymbol{L}}\right)^{-2} \left( {\boldsymbol{X}} {\boldsymbol{w}}_i - \hat {\boldsymbol{y}}_i\right).
\end{align}
In actual implementation, the matrix inverse is calculated by linear solvers. Suppose that we have a linear solver for $({\boldsymbol{I}} + \lambda \hat {\boldsymbol{L}})$ denoted by $\mathcal S$, then by embedding it into \cref{eq:true-grad} we can obtain the gradient approximation as 
\begin{align}
\widetilde {\nabla \ell_i({\boldsymbol{w}}_i)} :=  {\boldsymbol{X}}^\top \mathcal S\left[ \mathcal S\left( {\boldsymbol{X}}{\boldsymbol{w}}_i - \hat {\boldsymbol{y}}_i\right)\right]. \label{eq:gradient-approximator-naive}
\end{align}

The following convergence result can be derived for this approximate gradient descent. See Appendix for the proof.

\begin{lemma}[Convergence]\label{lem:outer-analysis-naive}
    If we minimize \cref{eq:sub-loss} using gradient descent with the gradient approximation defined in \cref{eq:gradient-approximator-naive}, where $\mathcal S$ is a linear solver of ${\boldsymbol{I}} + \lambda \hat {\boldsymbol{L}}$ with $\mu$ error rate satisfying $\mu \leq \min \left\{ \frac{\epsilon^{1/2}}{50 \kappa({\boldsymbol{X}})\lambda^2} , 1\right\}$, then, to obtain an $\epsilon \in (0,1)$ error rate the outer problem \cref{eq:sub-loss}, the number of iterations needed is \begin{align}
        T = O\left(\kappa \log\frac{1}{\epsilon}\right),
    \end{align}
    where $\kappa := \kappa({\boldsymbol{X}}^\top (\boldsymbol{I} + \lambda \hat{\boldsymbol{L}})^{-2} {\boldsymbol{X}})$.
\end{lemma}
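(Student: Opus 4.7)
The strategy is to view $\ell_i$ as a strongly convex quadratic in ${\boldsymbol{w}}_i$ with Hessian ${\boldsymbol{H}} := {\boldsymbol{X}}^\top ({\boldsymbol{I}} + \lambda \hat{\boldsymbol{L}})^{-2} {\boldsymbol{X}}$, analyze exact gradient descent on this quadratic, and then treat the approximation \eqref{eq:gradient-approximator-naive} as a perturbation. Since ${\boldsymbol{H}}$ is symmetric positive definite with condition number $\kappa$, exact GD with step size $\eta = 1/\sigma_{\max}({\boldsymbol{H}})$ (or $\eta = 2/(\sigma_{\max}+\sigma_{\min})$) is known to contract the iterate error in the ${\boldsymbol{H}}$-norm by a factor $\rho = 1 - \Theta(1/\kappa)$ per step, yielding $T = O(\kappa \log(1/\epsilon))$ iterations to reach $\epsilon$-accuracy in the loss. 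The remaining task is to show that the gradient approximation error, incurred by applying the linear solver $\mathcal{S}$ twice, is small enough to preserve this rate.

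The first technical step is to bound the per-iteration gradient error. Writing ${\boldsymbol{M}} := {\boldsymbol{I}} + \lambda \hat{\boldsymbol{L}}$ and ${\boldsymbol{r}} := {\boldsymbol{X}}{\boldsymbol{w}}_i - \hat{\boldsymbol{y}}_i$, I would split the error as
\begin{align*}
\mathcal{S}[\mathcal{S}({\boldsymbol{r}})] - {\boldsymbol{M}}^{-2}{\boldsymbol{r}} = \bigl(\mathcal{S}[\mathcal{S}({\boldsymbol{r}})] - {\boldsymbol{M}}^{-1}\mathcal{S}({\boldsymbol{r}})\bigr) + {\boldsymbol{M}}^{-1}\bigl(\mathcal{S}({\boldsymbol{r}}) - {\boldsymbol{M}}^{-1}{\boldsymbol{r}}\bigr),
\end{align*}
apply the linear solver guarantee $\|\mathcal{S}({\boldsymbol{u}}) - {\boldsymbol{M}}^{-1}{\boldsymbol{u}}\|_{\boldsymbol{M}} \leq \mu \|{\boldsymbol{M}}^{-1}{\boldsymbol{u}}\|_{\boldsymbol{M}}$ to each term, and use ${\boldsymbol{M}} \succeq {\boldsymbol{I}}$ with $\|{\boldsymbol{M}}\| \leq 1 + 2\lambda$ to convert between the $\|\cdot\|_{\boldsymbol{M}}$, $\|\cdot\|$, and $\|\cdot\|_{{\boldsymbol{M}}^{-1}}$ norms. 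This should give $\|\widetilde{\nabla\ell_i({\boldsymbol{w}})} - \nabla\ell_i({\boldsymbol{w}})\| = O(\mu \lambda^2 \|{\boldsymbol{X}}\| \cdot \|{\boldsymbol{r}}\|)$, where the two factors of $\lambda$ come from paying for ${\boldsymbol{M}}^{-1}$ twice.

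With this bound in hand, the second technical step is the perturbed GD analysis. Writing the iteration as
\begin{align*}
{\boldsymbol{w}}_i^{(t+1)} - {\boldsymbol{w}}_i^* = ({\boldsymbol{I}} - \eta {\boldsymbol{H}})({\boldsymbol{w}}_i^{(t)} - {\boldsymbol{w}}_i^*) - \eta {\boldsymbol{e}}^{(t)},
\end{align*}
with ${\boldsymbol{e}}^{(t)}$ the gradient error, I would bound $\|{\boldsymbol{e}}^{(t)}\|$ in terms of $\|{\boldsymbol{w}}_i^{(t)} - {\boldsymbol{w}}_i^*\|_{\boldsymbol{H}}$ by expressing $\|{\boldsymbol{r}}\|$ as a seminorm of ${\boldsymbol{X}}({\boldsymbol{w}}_i - {\boldsymbol{w}}_i^*)$ plus a term involving the optimal residual (which controls $\ell_i^*$). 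Rolling out the recursion then gives
$\|{\boldsymbol{w}}_i^{(T)} - {\boldsymbol{w}}_i^*\|_{\boldsymbol{H}}^2 \lesssim \rho^T \|{\boldsymbol{w}}_i^{(0)} - {\boldsymbol{w}}_i^*\|_{\boldsymbol{H}}^2 + (\mu \kappa({\boldsymbol{X}}) \lambda^2)^2 \cdot \ell_i^*$,
which after translating to loss sub-optimality, $\ell_i({\boldsymbol{w}}_i^{(T)}) - \ell_i^* = \tfrac12 \|{\boldsymbol{w}}_i^{(T)} - {\boldsymbol{w}}_i^*\|_{\boldsymbol{H}}^2$, yields exactly the claim: the contribution from the perturbation is at most $\epsilon \cdot \ell_i^*$ provided $\mu \leq \epsilon^{1/2}/(50 \kappa({\boldsymbol{X}})\lambda^2)$, while $T = O(\kappa \log(1/\epsilon))$ drives the first term below $\epsilon \cdot \ell_i^*$.

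The main obstacle I anticipate is the bookkeeping in the second step: converting the solver's $\|\cdot\|_{\boldsymbol{M}}$-relative guarantee into a bound on the gradient in the Euclidean norm used by outer GD requires several norm conversions, each of which pays a factor of $\lambda$ or $\kappa({\boldsymbol{X}})$, and these factors must line up exactly to explain the prescribed scaling $\mu \leq \epsilon^{1/2}/(50\kappa({\boldsymbol{X}})\lambda^2)$. In particular, the $\epsilon^{1/2}$ (rather than $\epsilon$) arises because the target is \emph{relative} loss error, so we need only $\|{\boldsymbol{w}}^{(T)} - {\boldsymbol{w}}^*\|_{\boldsymbol{H}} \lesssim \epsilon^{1/2}\sqrt{\ell^*}$; properly exploiting this means tying the error term ${\boldsymbol{e}}^{(t)}$ to $\sqrt{\ell^*}$ rather than to an absolute constant, which is the most delicate part of the argument.
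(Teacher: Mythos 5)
Your first step coincides with the paper's: the same two-term splitting of $\mathcal S[\mathcal S(\boldsymbol r)]-\boldsymbol M^{-2}\boldsymbol r$ and the same norm conversions appear in the paper's proof, and they give a gradient error whose persistent part is of order $\mu\,\mathrm{poly}(\lambda)\,\sigma_{\max}(\boldsymbol X)\sqrt{\ell_i(\boldsymbol w)}$. The genuine gap is in your second step, the perturbed-iterate roll-out. Rolling out ${\boldsymbol w}^{(t+1)}-{\boldsymbol w}^*=({\boldsymbol I}-\eta\boldsymbol H)({\boldsymbol w}^{(t)}-{\boldsymbol w}^*)-\eta\boldsymbol e^{(t)}$ with the triangle inequality in the $\boldsymbol H$-norm contributes $\eta\|\boldsymbol e^{(t)}\|_{\boldsymbol H}\le\|\boldsymbol e^{(t)}\|/\sqrt{\sigma_{\max}(\boldsymbol H)}$ per step, and the geometric summation multiplies the persistent part by $1/(1-\rho)\approx\kappa$, where $\kappa=\kappa(\boldsymbol H)$ can be as large as $9\lambda^2\kappa(\boldsymbol X)^2$. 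Even granting the sharp per-step bound $\|\boldsymbol e^{(t)}\|\lesssim\mu\sqrt{\lambda}\,\sigma_{\max}(\boldsymbol X)\sqrt{\ell_i^*}$ (your own bound, stated in terms of $\|\boldsymbol r\|$ rather than $\|\boldsymbol M^{-1}\boldsymbol r\|=\sqrt{2\ell_i}$, is looser still), the resulting error floor is of order $\kappa(\boldsymbol H)\,\mu\sqrt{\lambda}\,\sigma_{\max}(\boldsymbol X)\sqrt{\ell_i^*}/\sqrt{\sigma_{\max}(\boldsymbol H)}\approx\mu\lambda^{5/2}\kappa(\boldsymbol X)^2\sqrt{\ell_i^*}$; plugging in the prescribed $\mu\le\epsilon^{1/2}/(50\kappa(\boldsymbol X)\lambda^2)$ gives a loss contribution of order $\epsilon\lambda\kappa(\boldsymbol X)^2\ell_i^*$, not $\epsilon\ell_i^*$. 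The distance-proportional part of the error suffers the same amplification: keeping the contraction requires its coefficient to be $O(1/\kappa(\boldsymbol H))$, i.e.\ roughly $\mu\lesssim1/(\kappa(\boldsymbol H)\lambda^{3/2})$, which the stated threshold does not guarantee for constant-order $\epsilon$. So the bound you claim after rolling out, with floor $(\mu\kappa(\boldsymbol X)\lambda^2)^2\ell_i^*$, does not follow from the recursion as you set it up, and the prescribed $\mu$ is insufficient for that route.

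The paper avoids any error accumulation: by strong convexity (\Cref{lem:bound-loss-by-grad}) it argues that as long as $\ell_i(\boldsymbol w)>(1+\epsilon)\ell_i^*$ one has $\|\nabla\ell_i(\boldsymbol w)\|^2\ge\epsilon\,\sigma_{\min}(\boldsymbol H)\,\ell_i(\boldsymbol w)$, so the gradient dominates both the current loss and $\ell_i^*$; this converts the absolute error into a \emph{relative} one, $\|\boldsymbol e^{(t)}\|\le\gamma\|\nabla\ell_i(\boldsymbol w^{(t)})\|$ with $\gamma\le\tfrac12$ under the stated $\mu$, and then the inexact-GD descent lemma (\Cref{lem:optimization}) immediately gives the $1-\Theta(1/\kappa)$ rate. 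In that comparison only one factor $\kappa(\boldsymbol X)\lambda\approx\sqrt{\kappa(\boldsymbol H)}$ is paid, versus $\kappa(\boldsymbol H)$ in your roll-out; this is exactly why the threshold scales as $\kappa(\boldsymbol X)\lambda^2$ rather than $\kappa(\boldsymbol X)^2\lambda^{5/2}$. If you want to keep the iterate-recursion route you must recover the same sharpness, e.g.\ by weighting the accumulated errors as $\eta\sum_{s}(\boldsymbol I-\eta\boldsymbol H)^{s}\preceq\boldsymbol H^{-1}$ so that they are measured in $\|\cdot\|_{\boldsymbol H^{-1}}$ (costing only $1/\sqrt{\sigma_{\min}(\boldsymbol H)}$), together with a bootstrap handling the distance-dependent part and the early iterations, where the residual scales with $\sqrt{\ell_i(\boldsymbol w^{(t)})}\gg\sqrt{\ell_i^*}$; as written, that refinement is missing and the stated threshold on $\mu$ cannot be matched.
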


Based on \Cref{lem:outer-analysis-naive}, in \Cref{sec:complexity-of-naive} we provide a detailed analysis of the time complexity of the implementation used by \cite{twirls}. From this analysis, one can figure out two bottlenecks of the computational complexity for solving TWIRLS: 1) The number of iterations needed for the inner loop is dependent on $\lambda$; 2) The number of iterations needed for the outer loop is dependent on the condition number of the outer problem. Our acceleration algorithm is based on the following two observations, which correspond to the aforementioned bottlenecks: \begin{enumerate}
    \item The data matrix in the inner loop is a graph Laplacian plus identity. It is possible to solve the linear system defined by this matrix faster by exploiting its structure.
    \item A good enough preconditioner can be constructed for the outer iteration using techniques from RandNLA.
\end{enumerate}

\subsection{Key Techniques}

In this subsection, we introduce the mathematical tools that we will use in the proof of our main result.

\paragraph{SDD Solvers.} 

If a symmetric matrix  ${\boldsymbol{H}} \in \mathbb R^{n \times n}$ satisfies \begin{align}
\forall 1\leq  k \leq n,~~ {\boldsymbol{H}}_{k,k} \geq \sum_{i=1}^n \left| {\boldsymbol{H}}_{k,i}\right|,
\end{align}
it is called a symmetric diagonal dominated (SDD) matrix. It has been shown that the linear system defined by sparse SDD matrices can be solved in an efficient way, via a SDD solver \cite{sdd-solver-comb, sdd-solver}.

Since ${\boldsymbol{I}} + \lambda \hat {\boldsymbol{L}}$ is indeed a SDD matrix in our problem, we can apply off-the-shelf SDD solvers. In the following, we use $\mathsf{SDDSolver}_{\epsilon}({\boldsymbol{H}}, {\boldsymbol{u}})$ to denote the SDD solver that calculates ${\boldsymbol{H}}^{-1} {\boldsymbol{u}}$ with error rate less or equal to $\epsilon$\footnote{Principally, $\mathsf{SDDSolver}({\boldsymbol{H}};\cdot)$ is a vector function, but for convenience we sometimes also apply it to matrices, in which case we mean applying it column-wisely. We also use the same convention for other vector functions.}. From \Cref{lem:sdd-solver}, the time complexity of calculating $\mathsf{SDDSolver}_{\epsilon}({\boldsymbol{H}}; {\boldsymbol{u}})$ is $\tilde O\left[\mathrm{nnz}({\boldsymbol{H}})\right]$. See \Cref{sec:sdd-solver} for details.

\paragraph{Fast Matrix Multiplication.}

For a matrix ${\boldsymbol{Q}} \in \mathbb R^{n \times d}$ where $n > d$, it is known that calculating ${\boldsymbol{Q}}^\top {\boldsymbol{Q}}$  takes $O(nd^2)$ time if implemented by brute force. In HERTA, we manage to achieve a faster calculation of ${\boldsymbol{Q}}^\top {\boldsymbol{Q}}$ through Subsampled Randomized Hadamard Transformation (SRHT) \cite{srht_tropp, wood14}. The key idea of SRHT is to first apply a randomized Hadamard transformation to the matrix to make the ``information'' evenly distributed in each row, then uniformly sample the rows of ${\boldsymbol{Q}}$ to reduce the dimension. In the following, we use $\mathsf{Hadamard}$ to represent the Hadamard transformation, see \Cref{sec:fast-matrix-multiplication} for the definition of $\mathsf{Hadamard}$ as well as a detailed introduction. 

\subsection{Regularized Spectral Sparsifier}

It has been shown that for any connected graph with $n$ nodes, (no matter how dense it is) there is always a sparsified graph with $\tilde O\left(\frac{n}{\epsilon^2}\right)$ edges whose Laplacian is an $\epsilon$-spectral approximation of the original graph Laplacian \cite{first-spectral-sparsifier}. Although the existence is guaranteed, it is generally hard to construct such a sparsified graph. An algorithm that finds such a sparsified graph 
%with high probability and tolerable time complexity
is called a spectral sparsifier.

While there has been many existing explorations on constructing spectral sparsifier of a given graph with high probability and tolerable time complexity \cite{spectral-sparifier-effective-resistance, first-spectral-sparsifier,spectral-sparsifier}, the setting considered in this paper is  different from the standard one: instead of graph Laplacian $\boldsymbol{L}$, the matrix we concern is the normalized and regularized Laplacian $\hat{\boldsymbol{L}} + \lambda^{-1} {\boldsymbol{I}}$.

At first glance, a spectral sparsifier also works for our problem since  $\tilde {{\boldsymbol{L}}} \approx_{\epsilon} \hat {\boldsymbol{L}}$ implies $\tilde {{\boldsymbol{L}}} + \lambda^{-1} {\boldsymbol{I}} \approx_{\epsilon} \hat {\boldsymbol{L}} + \lambda^{-1} {\boldsymbol{I}}$. However, it turns out that it is possible to make the bound even tighter if we take into account  the regularization term $\lambda^{-1} {\boldsymbol{I}}$ here. A similar argument is also made in \cite{ridge-laplacian-sparifier}. We note that the setting explored in \cite{ridge-laplacian-sparifier} is also different from ours: we consider the normalized Laplacian of a graph, which is not a Laplacian of any graph (even allowing weighted edges).

To this end, we propose a new spectral sparsifier that works for $\hat {\boldsymbol{L}} +\lambda^{-1}  {\boldsymbol{I}}$ which is used in our problem. This spectral sparsifier reduces the number of edges to $O\left(\frac{n_\lambda}{\epsilon^2} \log n\right)$, which, as we discussed before, is smaller than $O\left(\frac{n}{\epsilon^2} \log n\right)$ obtained by standard spectral sparsifier.

\begin{lemma}[Regularized spectral sparsifier]\label{lem:regularized-spectral-sprsification}
    Let $\hat {\boldsymbol{L}}$ be a (normalized) graph Laplacian with $m$ edges and $n$ nodes. There exists a constant $C$ such that for any $\epsilon > 0$ and $\lambda > 0$, \Cref{alg:sparse} outputs 
    %$\tilde {\boldsymbol{L}} = \mathsf{Sparsify}_\epsilon(\hat {\boldsymbol{L}})$ 
    a (normalized) graph Laplacian $\tilde {\boldsymbol{L}}$ with $n$ nodes and $O\left(\frac{n_\lambda}{\epsilon^2} \log n\right)$ edges in time $\tilde{O}(m)$, such that
    \begin{align}
    \tilde {\boldsymbol{L}} + \lambda^{-1} {\boldsymbol{I}} \approx_{\epsilon} \hat {\boldsymbol{L}} + \lambda^{-1} {\boldsymbol{I}}
    \end{align}
    holds with probability at least $1 - \frac{1}{2n}$.
\end{lemma}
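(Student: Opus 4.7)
The plan is to reinterpret Algorithm \ref{alg:sparse} as sampling edges with probabilities proportional to their \emph{ridge leverage scores} for the matrix $\boldsymbol{M}:=\hat{\boldsymbol{L}}+\lambda^{-1}\boldsymbol{I}$, and then invoke a standard matrix--Chernoff argument to deduce the $\approx_\epsilon$ guarantee. For each edge $i$, let $\tau_i=\hat{\boldsymbol{b}}_i^\top \boldsymbol{M}^{-1}\hat{\boldsymbol{b}}_i$. Using $\boldsymbol{M}=\hat{\boldsymbol{B}}^\top\hat{\boldsymbol{B}}+\lambda^{-1}\boldsymbol{I}$ and inserting $\boldsymbol{M}^{-1}\boldsymbol{M}\boldsymbol{M}^{-1}$ in place of $\boldsymbol{M}^{-1}$ gives the two-term decomposition
\begin{equation}
\tau_i \;=\; \|\hat{\boldsymbol{B}}\boldsymbol{M}^{-1}\hat{\boldsymbol{b}}_i\|^2 + \lambda^{-1}\|\boldsymbol{M}^{-1}\hat{\boldsymbol{b}}_i\|^2, \qquad \sum_{i=1}^m \tau_i = \mathrm{Tr}(\hat{\boldsymbol{L}}\boldsymbol{M}^{-1}) = n_\lambda.
\end{equation}
This exactly mirrors the two-term estimator $\tilde l_i$ computed in the algorithm, with the two terms arising from Gaussian sketches of the edge-direction contribution and the identity-regularization contribution, respectively.

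The next step is to prove that $\tilde l_i \in [c_1\tau_i,c_2\tau_i]$ simultaneously for all $i\in[m]$ with probability at least $1-\tfrac{1}{4n}$. I would combine two ingredients. First, the Johnson--Lindenstrauss lemma: for $k=C\log m$-dimensional Gaussian sketches $\boldsymbol{\Pi}_1,\boldsymbol{\Pi}_2$, a union bound preserves the Euclidean norms of every $\hat{\boldsymbol{B}}\boldsymbol{M}^{-1}\hat{\boldsymbol{b}}_i$ and $\boldsymbol{M}^{-1}\hat{\boldsymbol{b}}_i$ to within a constant factor. Second, the constant-precision SDD solver: its $\boldsymbol{M}$-norm error guarantee, together with $\boldsymbol{M}\succeq\lambda^{-1}\boldsymbol{I}$, can be converted into constant-factor control over the Euclidean norms of $\boldsymbol{B}_{\mathcal{S}}\hat{\boldsymbol{b}}_i$ and $\boldsymbol{\Pi}_{\mathcal{S}}\hat{\boldsymbol{b}}_i$ relative to the ideal sketches. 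Multiplying the constants and summing the two terms yields the desired two-sided bound on $\tilde l_i$.

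The sparsifier guarantee is then a direct leverage-score-sampling argument. Writing $\tilde{\boldsymbol{L}}=\frac{1}{s}\sum_{j=1}^s \frac{1}{p_{i_j}}\hat{\boldsymbol{b}}_{i_j}\hat{\boldsymbol{b}}_{i_j}^\top$ with $p_i=\tilde l_i/Z$, one has $\mathbb{E}[\tilde{\boldsymbol{L}}]=\hat{\boldsymbol{L}}$ and, because $Z=\Theta(n_\lambda)$ and $p_i\gtrsim \tau_i/n_\lambda$, each rescaled rank-one summand $\boldsymbol{M}^{-1/2}\hat{\boldsymbol{b}}_i\hat{\boldsymbol{b}}_i^\top\boldsymbol{M}^{-1/2}/(sp_i)$ has spectral norm $O(n_\lambda/s)$, while its expectation is at most $\boldsymbol{I}/s$ in Loewner order. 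Matrix Chernoff then gives $\boldsymbol{M}^{-1/2}(\tilde{\boldsymbol{L}}+\lambda^{-1}\boldsymbol{I})\boldsymbol{M}^{-1/2}\approx_\epsilon\boldsymbol{I}$, i.e.\ $\tilde{\boldsymbol{L}}+\lambda^{-1}\boldsymbol{I}\approx_\epsilon\boldsymbol{M}$, once $s=\Omega(n_\lambda\log n/\epsilon^2)$, matching the sample count in the algorithm. The running time is dominated by $O(\log m)$ SDD solves at $\tilde{O}(m)$ each, plus $O(m\log m)$ to compute the $\tilde l_i$ (since every $\hat{\boldsymbol{b}}_i$ has only two nonzeros), for a total of $\tilde{O}(m)$.

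The main obstacle will be the coupled error tracking in the second step. The SDD solver's error guarantee is stated in the $\boldsymbol{M}$-norm rather than the Euclidean norm appearing inside $\tilde l_i$, and the naive conversion picks up a factor of $\|\boldsymbol{M}^{-1/2}\|\le\sqrt{\lambda}$; care is needed to absorb this factor cleanly into the leverage-score constants so that it does not contaminate either the failure probability or the sample count. In addition, the JL and solver approximations must compose in such a way that the one-sided lower bound $\tilde l_i\gtrsim\tau_i$---which is exactly what bounds the per-summand spectral norm in matrix Chernoff---survives with the explicit solver precision $2^{1/4}$ chosen in Algorithm \ref{alg:sparse}, rather than forcing a tighter (and slower) solver call.
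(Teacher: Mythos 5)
Your proposal follows essentially the same route as the paper's proof: decompose the ridge leverage scores of $\hat{\boldsymbol{L}}+\lambda^{-1}\boldsymbol{I}$ into exactly the two terms the algorithm estimates, approximate them to constant factors via the Gaussian (JL) sketches plus constant-precision SDD solves with a union bound over all $m$ edges, and then conclude via leverage-score sampling with $s=O(n_\lambda\log n/\epsilon^2)$ and the same $\tilde O(m)$ time accounting. The only difference is that you sketch the matrix-Chernoff/Bernstein concentration argument directly, whereas the paper invokes an off-the-shelf ridge-leverage-score sampling lemma (its Lemma C.5) for that final step; both are sound, and the $\boldsymbol{M}$-norm-to-Euclidean conversion you flag is indeed the delicate point that the constant-factor leverage-score guarantee must absorb.
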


The basic idea behind \Cref{lem:regularized-spectral-sprsification} is to use ridge leverage score sampling for the normalized incidence matrix ${\boldsymbol{B}}{\boldsymbol{D}}^{-1/2}$ \cite{ridge-LSS}, which reduces the problem to estimating a regularized version of the effective resistance. We modify the existing methods of estimating effective resistance \cite{laplacian-solver} to make it work for the normalized and regularized Laplacian.

\subsection{Main Algorithm}\label{sec:main-algorithm}

Now we are ready to present our main algorithm HERTA, see \Cref{alg:herta}. As mentioned before, HERTA is composed of two major components: constructing a preconditioner matrix ${\boldsymbol{P}}$ and applying it to the optimization problem. Below we introduce each part.

\paragraph{Constructing the Preconditioner.}

For the outer problem which can be ill-conditioned, we first precondition it using a preconditioner ${\boldsymbol{P}} \in \mathbb R^{d \times d}$ which is a constant level spectral approximation of the Hessian, i.e. ${\boldsymbol{X}}^\top \left({\boldsymbol{I}} + \lambda \hat {\boldsymbol{L}}\right)^{-2} {\boldsymbol{X}}$. We claim that the matrix ${\boldsymbol{P}}$ constructed inside \Cref{alg:herta} indeed satisfies this property.

\begin{lemma}[Preconditioner]\label{lem:preconditioner}
    Let ${\boldsymbol{P}}$ be the matrix constructed in \Cref{alg:herta}. With probability at least $1-\frac{1}{n}$, \begin{align}
        {\boldsymbol{P}} \approx_{\frac{1}{2}} {\boldsymbol{X}}^\top \left({\boldsymbol{I}} + \lambda \hat {\boldsymbol{L}}\right) ^{-2} {\boldsymbol{X}}.
    \end{align}
\end{lemma}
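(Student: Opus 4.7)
The approach is to write the construction of $\boldsymbol{P}$ as a chain of three intermediate approximations to the target $\boldsymbol{X}^\top\boldsymbol{M}^{-2}\boldsymbol{X}$, one per randomized or approximate operation in \Cref{alg:herta}, and bound each link in Loewner order. Setting $\boldsymbol{M}:=\boldsymbol{I}+\lambda\hat{\boldsymbol{L}}$, $\tilde{\boldsymbol{M}}:=\boldsymbol{I}+\lambda\tilde{\boldsymbol{L}}$, and $\boldsymbol{Y}:=\tilde{\boldsymbol{M}}^{-1}\boldsymbol{X}$ so that $\boldsymbol{X}^\top\tilde{\boldsymbol{M}}^{-2}\boldsymbol{X}=\boldsymbol{Y}^\top\boldsymbol{Y}$, the chain is
\begin{align*}
\boldsymbol{P}=\tilde{\boldsymbol{Q}}^\top\tilde{\boldsymbol{Q}}\;\stackrel{(\mathrm{i})}{\approx}\;\boldsymbol{Q}^\top\boldsymbol{Q}\;\stackrel{(\mathrm{ii})}{\approx}\;\boldsymbol{Y}^\top\boldsymbol{Y}\;\stackrel{(\mathrm{iii})}{\approx}\;\boldsymbol{X}^\top\boldsymbol{M}^{-2}\boldsymbol{X},
\end{align*}
where (i) corresponds to the SRHT+subsample step, (ii) to the SDD-solver step, and (iii) to the regularized sparsifier. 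I will argue that each link holds at Loewner tolerance at most $\beta/3+O(\beta^2)$ and fails with probability at most $1/(3n)$; composing via $(1\pm\epsilon_1)(1\pm\epsilon_2)(1\pm\epsilon_3)\subseteq 1\pm 1/2$ once $\beta=1/64$ is plugged in, and union-bounding the randomized events, produces the stated conclusion.

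For (i), since $\boldsymbol{Q}\in\mathbb{R}^{n\times d}$ and $s=\Theta(d(\log n)/\beta^2)$ Hadamard-mixed rows are subsampled, the standard SRHT subspace-embedding theorem (Tropp) yields $\tilde{\boldsymbol{Q}}^\top\tilde{\boldsymbol{Q}}\approx_{\beta/3}\boldsymbol{Q}^\top\boldsymbol{Q}$ with the required failure probability, provided the constant $K$ is large enough. For (ii), the solver's guarantee is the column-wise bound $\|\boldsymbol{Q}_j-\boldsymbol{Y}_j\|_{\tilde{\boldsymbol{M}}}\le(\beta/\sqrt{3\lambda})\|\boldsymbol{Y}_j\|_{\tilde{\boldsymbol{M}}}$ which, for iterative SDD solvers of preconditioned-CG type, strengthens to the operator form $\|\boldsymbol{E}\boldsymbol{v}\|_{\tilde{\boldsymbol{M}}}\le(\beta/\sqrt{3\lambda})\|\boldsymbol{Y}\boldsymbol{v}\|_{\tilde{\boldsymbol{M}}}$ for every $\boldsymbol{v}\in\mathbb{R}^d$ (with $\boldsymbol{E}=\boldsymbol{Q}-\boldsymbol{Y}$). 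Expanding $\boldsymbol{Q}^\top\boldsymbol{Q}-\boldsymbol{Y}^\top\boldsymbol{Y}=\boldsymbol{Y}^\top\boldsymbol{E}+\boldsymbol{E}^\top\boldsymbol{Y}+\boldsymbol{E}^\top\boldsymbol{E}$ and applying the two inequalities $\|\boldsymbol{E}\boldsymbol{v}\|_2\le\|\boldsymbol{E}\boldsymbol{v}\|_{\tilde{\boldsymbol{M}}}$ (since $\tilde{\boldsymbol{M}}\succeq\boldsymbol{I}$) and $\|\boldsymbol{Y}\boldsymbol{v}\|_{\tilde{\boldsymbol{M}}}^2\le(1+2\lambda)\|\boldsymbol{Y}\boldsymbol{v}\|_2^2$ (since $\tilde{\boldsymbol{M}}\preceq(1+2\lambda)\boldsymbol{I}$), the $\sqrt\lambda$ baked into the solver tolerance precisely cancels the $\sqrt{1+2\lambda}$ blowup and produces $\boldsymbol{Q}^\top\boldsymbol{Q}\approx_{\beta/3+O(\beta^2)}\boldsymbol{Y}^\top\boldsymbol{Y}$ deterministically.

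For (iii), \Cref{lem:regularized-spectral-sprsification} yields $\tilde{\boldsymbol{L}}+\lambda^{-1}\boldsymbol{I}\approx_{\beta/(3\lambda)}\hat{\boldsymbol{L}}+\lambda^{-1}\boldsymbol{I}$ with probability $\ge 1-1/(2n)$; multiplying through by $\lambda$ gives $\tilde{\boldsymbol{M}}\approx_{\beta/(3\lambda)}\boldsymbol{M}$, an approximation an order of magnitude tighter than $\beta$. The plan is to propagate this through to the quadratic form by first invoking L\"owner--Heinz monotonicity of $x\mapsto x^{-1/2}$ to obtain $\tilde{\boldsymbol{M}}^{-1/2}\approx_{O(\beta/\lambda)}\boldsymbol{M}^{-1/2}$, then using the factorization $\tilde{\boldsymbol{M}}^{-2}=\tilde{\boldsymbol{M}}^{-1/2}\tilde{\boldsymbol{M}}^{-1}\tilde{\boldsymbol{M}}^{-1/2}$ to reduce $\boldsymbol{X}^\top\tilde{\boldsymbol{M}}^{-2}\boldsymbol{X}$ to a symmetrically conjugated copy of $\tilde{\boldsymbol{M}}^{-1}$, and finally applying the single-inverse Loewner transfer $\tilde{\boldsymbol{M}}^{-1}\approx_{O(\beta/\lambda)}\boldsymbol{M}^{-1}$ together with a further pass to swap the outer $\tilde{\boldsymbol{M}}^{-1/2}$'s for $\boldsymbol{M}^{-1/2}$'s. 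This step is the main technical obstacle, because $x\mapsto x^{-2}$ is not operator monotone and naive squaring of a Loewner approximation fails for non-commuting matrices; the escape route is precisely that the sparsifier tolerance is $\beta/(3\lambda)$ (not $\beta$), giving a full factor of $\lambda$ of headroom that absorbs the $\|\tilde{\boldsymbol{M}}\|=O(\lambda)$ factor incurred by changes of norm and ensures the overall tolerance stays $O(\beta)$ rather than $O(\beta\lambda)$. Composing the three links and union-bounding the SRHT and sparsifier failures then gives the desired $\approx_{1/2}$ bound with probability $\ge 1-1/n$.
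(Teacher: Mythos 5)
Your proposal follows essentially the same route as the paper's proof: the identical three-link chain (regularized sparsifier $\to$ SDD solve of ${\boldsymbol{X}}$ against ${\boldsymbol{I}}+\lambda\tilde{\boldsymbol{L}}$ $\to$ SRHT subsampling), the same use of the solver's linearity to upgrade the column-wise guarantee to a guarantee on ${\boldsymbol{Q}}{\boldsymbol{v}}$ with the $\sqrt{3\lambda}$ baked into the tolerance cancelling the $\|\cdot\|_2\leftrightarrow\|\cdot\|_{\tilde{\boldsymbol{M}}}$ norm change, and the same key observation that the sparsifier's $\beta/(3\lambda)$ tolerance supplies exactly the condition-number headroom needed to control the squared inverse. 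The only divergence is in executing link (iii): the paper isolates this as \Cref{lem:square} (an $\approx_{\beta/\kappa}$ approximation of ${\boldsymbol{I}}+\lambda\hat{\boldsymbol{L}}$ yields an $\approx_{8\beta}$ approximation of its square, then invert and conjugate by ${\boldsymbol{X}}$), whereas your L\"owner--Heinz/factorization sketch is a workable variant of the same headroom argument --- the ``outer swap'' you leave open does go through since $\bigl\|({\boldsymbol{I}}+\lambda\hat{\boldsymbol{L}})^{-1/2}\bigr\|\le 1$ and $\bigl\|{\boldsymbol{I}}+\lambda\hat{\boldsymbol{L}}\bigr\|\le 3\lambda$ --- and although your per-link constants are off (e.g.\ link (ii) gives $2\beta+\beta^2$ rather than $\beta/3$), with $\beta=1/64$ the composition still lands comfortably under $\tfrac12$, matching the paper's $12\beta+C'<\tfrac12$ accounting.
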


\begin{figure*}[htbp]
\hspace{-6mm}\begin{minipage}{0.35\linewidth}
    \centering
    \includegraphics[width=\linewidth]{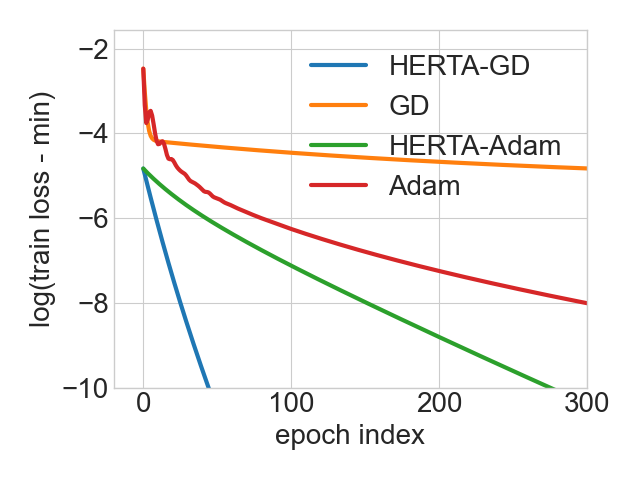}
\end{minipage}
\begin{minipage}{0.35\linewidth}
    \centering
    \includegraphics[width=\linewidth]{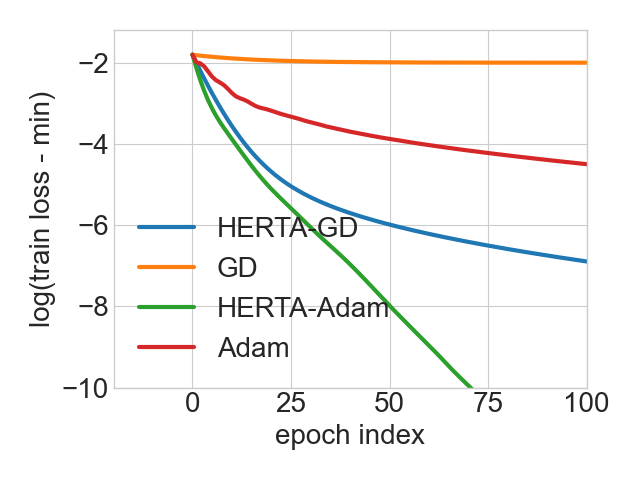}
\end{minipage}
\begin{minipage}{0.35\linewidth}
    \centering
    \includegraphics[width=\linewidth]{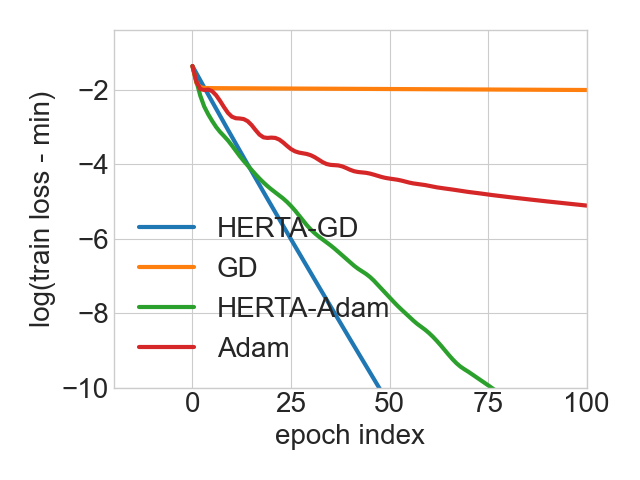}
\end{minipage}
    \caption{The training loss comparison between HERTA and standard optimizers on MSE loss with $\lambda = 1$. Dataset used from left to right: ogbn-arxiv, citeseer, pubmed.}
    \label{fig:ours-mse-1}
\end{figure*}

\paragraph{Solving the Outer Problem.}

After obtaining the preconditioner ${\boldsymbol{P}}$ which approximates the Hessian by a constant level, we use it to precondition the outer problem and provably reduce the condition number to a constant. With the new problem being well-conditioned, iterative methods (like gradient descent) take much less steps to converge.

\begin{lemma}[Well-conditioned Hessian]\label{thm:after-preconditioning}
    Let $\ell'$ be such that $\ell'({\boldsymbol{w}}_i) = \ell({\boldsymbol{P}}^{-1/2} {\boldsymbol{w}}_i)$. Suppose for some constant $c_0 \in (0,1)$ we have ${\boldsymbol{P}} \approx_{c_0} {\boldsymbol{X}}^\top \left({\boldsymbol{I}} + \lambda \hat {\boldsymbol{L}}\right)^{-2} {\boldsymbol{X}}$, then we can bound the condition number of Hessian of $\ell'$ as
    \begin{align}\kappa\left(\nabla^2 \ell'\right) \leq (1 + c_0)^2.\end{align}
    Moreover, if ${\boldsymbol{w}}'$ is a solution of $\ell'$ with $\epsilon$ error rate, then ${\boldsymbol{P}}^{-1/2} {\boldsymbol{w}}'$ is a solution of $\ell$ with $\epsilon$ error rate.
\end{lemma}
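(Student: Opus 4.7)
Since $\ell_i$ is quadratic in ${\boldsymbol{w}}_i$ by \eqref{eq:sub-loss}, a direct calculation gives $\nabla^2 \ell_i = {\boldsymbol{X}}^\top ({\boldsymbol{I}} + \lambda \hat {\boldsymbol{L}})^{-2} {\boldsymbol{X}} =: {\boldsymbol{H}}$. The plan is then a short chain of three routine observations.

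First, I would compute $\nabla^2 \ell'$ via the chain rule. Since ${\boldsymbol{P}}^{-1/2}$ is a fixed symmetric invertible linear map, applying it to $\ell$ gives $\nabla \ell'({\boldsymbol{w}}) = {\boldsymbol{P}}^{-1/2} \nabla \ell({\boldsymbol{P}}^{-1/2} {\boldsymbol{w}})$ and hence $\nabla^2 \ell'({\boldsymbol{w}}) = {\boldsymbol{P}}^{-1/2} {\boldsymbol{H}} {\boldsymbol{P}}^{-1/2}$, which is ${\boldsymbol{w}}$-independent as one expects for a quadratic.

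Second, I would plug in the spectral-approximation hypothesis. The definition of $\approx_{c_0}$ from \Cref{sec:preliminaries} gives $(1-c_0){\boldsymbol{H}} \preceq {\boldsymbol{P}} \preceq (1+c_0){\boldsymbol{H}}$. Conjugating the whole chain by ${\boldsymbol{P}}^{-1/2}$ (which preserves Loewner order since ${\boldsymbol{P}}^{-1/2}$ is symmetric PD) yields
\begin{equation*}
(1-c_0)\,{\boldsymbol{P}}^{-1/2} {\boldsymbol{H}} {\boldsymbol{P}}^{-1/2} \;\preceq\; {\boldsymbol{I}} \;\preceq\; (1+c_0)\,{\boldsymbol{P}}^{-1/2} {\boldsymbol{H}} {\boldsymbol{P}}^{-1/2},
\end{equation*}
so $\tfrac{1}{1+c_0}{\boldsymbol{I}} \preceq \nabla^2 \ell' \preceq \tfrac{1}{1-c_0}{\boldsymbol{I}}$. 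Taking the ratio of the extreme eigenvalues gives the claimed condition-number bound (the stated form $(1+c_0)^2$ follows by bounding $\tfrac{1+c_0}{1-c_0}$ under the working convention on $\approx_{c_0}$, or equivalently by using the multiplicative form $(1+c_0)^{\pm 1}$ of the approximation).

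Third, for the ``solution-transfer'' part, I would simply note that the reparameterization ${\boldsymbol{v}} = {\boldsymbol{P}}^{-1/2} {\boldsymbol{w}}$ is a bijection on $\mathbb R^d$ since ${\boldsymbol{P}}$ is PD (by \Cref{lem:preconditioner}, ${\boldsymbol{P}} \succeq \tfrac{1-c_0}{?}{\boldsymbol{H}} \succ 0$ using that ${\boldsymbol{X}}$ is column-full-rank and $({\boldsymbol{I}} + \lambda\hat{\boldsymbol{L}})^{-2}$ is PD). Therefore $\inf_{{\boldsymbol{w}}} \ell'({\boldsymbol{w}}) = \inf_{{\boldsymbol{v}}} \ell({\boldsymbol{v}}) = \ell^*$, and for any ${\boldsymbol{w}}'$ with $\ell'({\boldsymbol{w}}') \leq (1+\epsilon)\ell'^*$ we immediately get $\ell({\boldsymbol{P}}^{-1/2}{\boldsymbol{w}}') = \ell'({\boldsymbol{w}}') \leq (1+\epsilon)\ell^*$, giving the second claim.

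There is no substantial obstacle here — the lemma is essentially a sanity check that a spectral approximation to the Hessian makes a valid right-preconditioner. The only minor care needed is to be explicit about (i) the convention of $\approx_{c_0}$ used when converting the Loewner sandwich into the exact numerical bound $(1+c_0)^2$, and (ii) that ${\boldsymbol{P}}$ is genuinely PD (not merely PSD) so that ${\boldsymbol{P}}^{-1/2}$ exists and the chain rule applies.
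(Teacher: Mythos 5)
Your proposal is correct and follows essentially the same route as the paper's proof: compute $\nabla^2 \ell' = {\boldsymbol{P}}^{-1/2}\,{\boldsymbol{X}}^\top({\boldsymbol{I}}+\lambda\hat{\boldsymbol{L}})^{-2}{\boldsymbol{X}}\,{\boldsymbol{P}}^{-1/2}$, conjugate the two-sided Loewner bounds from ${\boldsymbol{P}} \approx_{c_0} {\boldsymbol{X}}^\top({\boldsymbol{I}}+\lambda\hat{\boldsymbol{L}})^{-2}{\boldsymbol{X}}$ by ${\boldsymbol{P}}^{-1/2}$ to bound the extreme eigenvalues, and transfer solutions through the bijection ${\boldsymbol{w}} \mapsto {\boldsymbol{P}}^{-1/2}{\boldsymbol{w}}$ using $\ell({\boldsymbol{P}}^{-1/2}{\boldsymbol{w}}') = \ell'({\boldsymbol{w}}')$. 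One small correction to your parenthetical: the constant $(1+c_0)^2$ comes only from the multiplicative convention ${\boldsymbol{X}}^\top({\boldsymbol{I}}+\lambda\hat{\boldsymbol{L}})^{-2}{\boldsymbol{X}} \preceq (1+c_0){\boldsymbol{P}}$ and ${\boldsymbol{P}} \preceq (1+c_0)\,{\boldsymbol{X}}^\top({\boldsymbol{I}}+\lambda\hat{\boldsymbol{L}})^{-2}{\boldsymbol{X}}$ (which is what the paper uses), not from "bounding $\tfrac{1+c_0}{1-c_0}$," since $\tfrac{1+c_0}{1-c_0} > (1+c_0)^2$ for $c_0 \in (0,1)$; under the additive definition of $\approx_{c_0}$ from the preliminaries the honest bound is $\tfrac{1+c_0}{1-c_0}$, a looseness shared by the paper itself.
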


Notice that, the problem $\ell'$ defined in \Cref{thm:after-preconditioning} can be viewed as the original problem $\ell$ with ${\boldsymbol{X}}$ being replaced by $ {\boldsymbol{X}}{\boldsymbol{P}}^{-1/2}$, therefore we can use \Cref{lem:outer-analysis-naive} and obtain the convergence rate of HERTA. With the convergence result and analysis of the running time for each step, we are able to prove \Cref{thm:main}. See \Cref{sec:proof-main} for the full proof.

\textbf{Remark on the Unavoidable $\lambda^2$ in Runtime.}~ Notice that $\lambda^2$ appears in the time bound given in \Cref{thm:main}. From the proof in \Cref{sec:proof-lem-precond} we can see that this term originates from the squared inverse of ${\boldsymbol{I}} + \lambda \hat {\boldsymbol{L}}$. In \Cref{sec:sqaure-tight}, we show that in the worst case even if we approximate a matrix to a very high precision, the approximation rate of the corresponding squared version can still be worse by a factor of the condition number of the matrix, which in our case leads to the unavoidable $\lambda^2$ in the runtime. 

\textbf{Remark on Graph Sparsifying in Each Iteration.}~ Note that in \Cref{alg:herta}, we use the complete Laplacian $\hat {\boldsymbol{L}}$ for gradient iterations (i.e., the for-loop), and the graph sampling only occurs when constructing the preconditioner ${\boldsymbol{P}}$. We note that sampling $\hat {\boldsymbol{L}}$ in the for-loop can lead to extra loss in the gradient estimation, which forces us to sparsify the graph to a very high precision to ensure an accurate enough gradient for convergence. This could result in a suboptimal running time. Moreover, as discussed in \Cref{sec:introduction}, the current running time bound for HERTA is optimal up to logarithmic factors, which means that there is little to gain from performing extra sampling in for-loops. See \Cref{sec:applying-graph-sparsification-in-each-iteration} for a more detailed and quantitative discussion.

\vspace{-1em}
\section{Experiments}\label{sec:experiments}
\vspace{-0.5em}

In this section, we verify our theoretical results through experiments on real world datasets. Since for the inner problem we use off-the-shelf SDD solvers, in this section we focus on the outer problem, i.e. the training loss convergence rate. In each setting we compare the training loss convergence rate of TWIRLS trained by our method against that trained by standard gradient descent using exactly the same training hyper-parameters.

\textbf{Datasets.} We conduct experiments on all of the datasets used in the Section 5.1 of \cite{twirls}: Cora, Citeseer and Pubmed collected by \cite{coraetc}, as well as the ogbn-arxiv dataset from the Open Large Graph Benchmark (OGB, \citealp{ogb}). 

Notice that for Cora, Citeseer and Pubmed, common practice uses the semi-supervised setting where there is a  small training set and relatively large validation set. As we are comparing training convergence rate, we find it more comparative to use a larger training set (which makes solving the optimization problem of training more difficult). Therefore, we randomly select 80\% of nodes as training set for Cora, Citeseer and Pubmed. For OGB, we use the standard training split.

Due to limited space, we only include the results with  $\lambda = 1$ and with datasets Citeseer, Pubmed and ogbn-arxiv in this section, and delay additional results to \Cref{sec:appendix-experiment}.

\subsection{Convergence Rate Comparison Under MSE Loss}

In this section, we compare the convergence rate for models trained with MSE loss, which is well aligned with the setting used to derive our theory. We also adopt a variation of HERTA that allows using it on other optimizers and apply it on Adam optimizer. See \Cref{fig:ours-mse-1} for the results. Notice that for each figure, we shift the curve by the minimum value of the loss\footnote{The minimum value of the loss is approximated by running HERTA for more epochs.} and take logarithm scale to make the comparison clearer.

From the results, it is clear that on all datasets and all optimizers we consider, HERTA converges much faster than standard methods.  It generally requires $\leq 10$ iterations to get very close to the smallest training loss. This shows that the guarantee obtained in our theoretical results (\Cref{thm:main}) not only matches the experiments, but also holds when the setting is slightly changed (using other optimizers). Thus, these experimental results verify the universality of HERTA as an optimization framework.

We also conduct extensive experiments with larger $\lambda$. See \Cref{sec:appendix-experiment} for the results. These results verify that even when $\lambda$ is relatively large, HERTA converges very fast, which suggests that the dominant term in \Cref{thm:main} is the $\tilde O(m)$ term instead of the $\tilde O(n_\lambda \lambda^2 d)$ term.

\subsection{Convergence Rate Comparison under Cross Entropy Loss}\label{sec:exp-ce-loss}

Note that in the theoretical analysis of \Cref{sec:efficient-twirls-training}, we focus on the MSE loss. However, for graph node classification tasks, MSE is not the most commonly used loss. Instead, most node classification models use the cross entropy (CE) loss. For example, the original TWIRLS paper \cite{twirls} uses CE loss. 

To this end, we also adopt HERTA on training problems with CE loss. See \Cref{sec:implementation-details} for the details of the implementation. The results are displayed in \Cref{fig:ce-fastce-1}. From these results, it is clear that HERTA also significantly speeds up training with CE loss. The results demonstrate that although originally developed on MSE loss, HERTA is not limited to it and have the flexibility to be applied to other type of loss functions.

\begin{figure*}[htbp]
\hspace{-6mm}\begin{minipage}{0.35\linewidth}
    \centering
    \includegraphics[width=\linewidth]{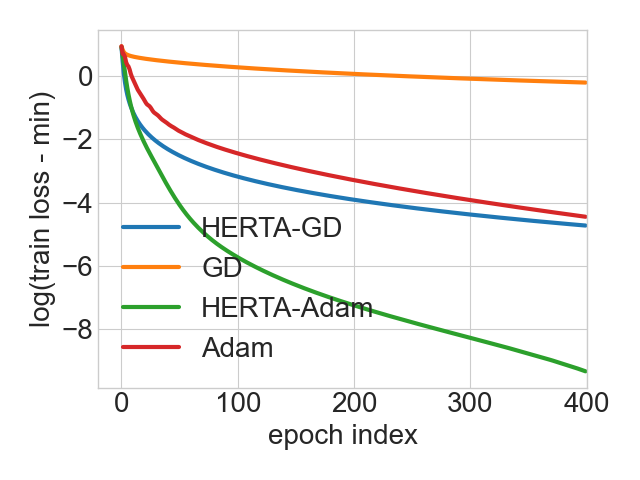}
\end{minipage}
\begin{minipage}{0.35\linewidth}
    \centering
    \includegraphics[width=\linewidth]{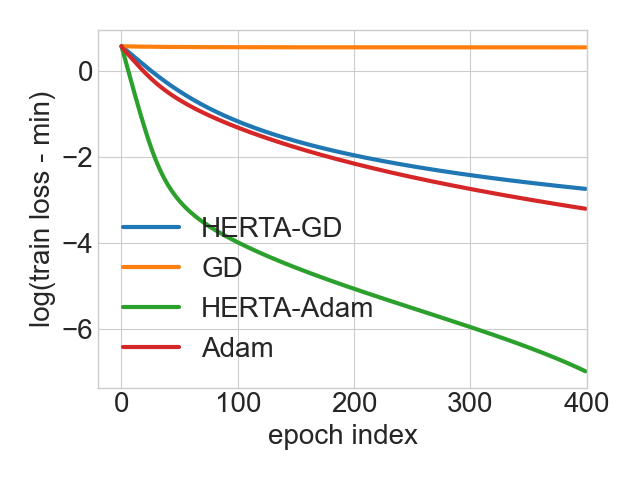}
\end{minipage}
\begin{minipage}{0.35\linewidth}
    \centering
    \includegraphics[width=\linewidth]{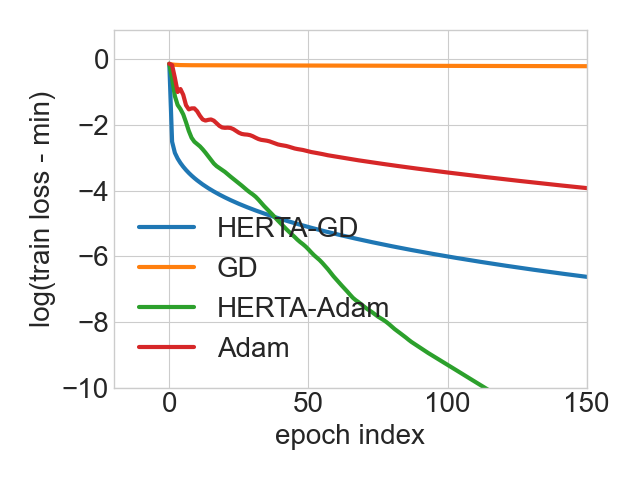}
\end{minipage}
    \caption{The training loss comparison between HERTA and standard optimizers on cross entropy loss with $\lambda = 1$. Dataset used from left to right: ogbn-arxiv, citeseer and pubmed. }
    \label{fig:ce-fastce-1}
\end{figure*}

\textbf{Remark on the Surprising Effectiveness of HERTA on Cross Entropy Loss.}~ 
In \Cref{sec:exp-ce-loss}, we observe that despite not being guaranteed by the theoretical result, HERTA shows a certain degree of universality in that it also works on CE loss. We claim that this phenomenon might originate from the fact the Hessians of TWIRLS under MSE loss and CE loss are very similar. We provide an analysis of the gradient and Hessian of TWIRLS under MSE and CE losses in \Cref{sec:anlysis-of-ce-loss}. The results show that the gradient under CE loss can be viewed as the gradient under MSE loss with one term being normalized by softmax, and the Hessian under CE loss can be viewed as a rescaled version of the Hessian under MSE loss. These comparisons serve as an explanation of why HERTA works so well with CE loss.  

\vspace{-1em}
\section{Conclusions}\label{sec:conclusion}
\vspace{-0.5em}

In this paper we present HERTA: a High-Efficiency and Rigorous Training Algorithm  that solves the problem of training Unfolded GNNs on a graph within a time essentially of the same magnitude as the time it takes to load the input data. As a component of HERTA, we also propose a new spectral sparsifier that works for normalized and regularized graph Laplacian matrices. 

Experimental results on real world datasets show the effectiveness of HERTA. Moreover, it is shown that  HERTA works for various loss functions and optimizers, despite being derived from a specific loss function and optimizer. This shows the universality of HERTA and verifies that it is ready to use in practice. 

\vspace{-0.7em}
\subsection{Future Directions}

In this subsection, we briefly discuss some potential future directions to explore with HERTA. 

\textbf{Extending to more Complex Models.} In this paper, we stick to a rather straightforward implementation of TWIRLS.
 %: no attention mechanism and the $f$ function is implemented as $f({\boldsymbol{X}}; {\boldsymbol{W}}) = {\boldsymbol{XW}}$. 
 Although it is verified in \cite{twirls} that this implementation is already strong enough in many cases, under more general and subtle scenarios we might want to use more complex implementations of TWIRLS or other Unfolded GNN models. Therefore, it is critical to extend HERTA to those more complex settings. We discuss possible directions for implementing HERTA for more complex $f$ as well as incorporating attention mechanisms in \Cref{sec:extending-to-more-complex}.

\textbf{More Detailed Analysis of CE Loss and other Losses.} Although in this paper we provided an analysis explaining why HERTA can work on CE loss, finding a theoretically rigorous bound for CE loss as well as other loss functions is another important direction for future work.

\bibliography{ref}
\bibliographystyle{icml2024}

\newpage
\appendix
\onecolumn

\section{Introduction to the Mathematical Tools}\label{sec:intro-to-math-tools}

In the main paper, due to space limitations, our introduction of certain mathematical tools and concepts remains brief. In this section, we provide a more comprehensive introduction to the mathematical tools utilized in the main paper.

\subsection{Subsampling}
Subsampling is used in multiple places in our Algorithm. When we say subsampling $s$ rows of a matrix ${\boldsymbol{M}} \in \mathbb R^{n \times d}$ with probability $\{p_k\}_{k=1}^n$ and obtain a new matrix $\tilde {\boldsymbol{M}} \in \mathbb R^{s \times d}$, it means each row of $\tilde {\boldsymbol{M}}$ is an i.i.d. random vector which follows the same distribution of a random vector ${\boldsymbol{\xi}}$ that satisfies the following property: \begin{align}
\mathbb P\left\{ {\boldsymbol{\xi}} = \sqrt{ \frac{1}{s p_k} }{\boldsymbol{m}}_k \right\} = p_k,
\end{align}
where ${\boldsymbol{m}}_k$ is the $k$-th row of matrix ${\boldsymbol{M}}$. It is clear that constructing such a subsampled matrix $\tilde {\boldsymbol{M}}$ takes time $O(sd)$. Moreover, if $p_k = \frac{1}{n}$, we say the subsampling is with uniform probability. 

The operator of subsampling $s$ rows of a $n$-row matrix is clearly a linear transformation. Therefore it can be represented by a random matrix ${\boldsymbol{S}} \in \mathbb R^{s \times n}$. Notice that when we apply ${\boldsymbol{S}}$ to a matrix (i.e. ${\boldsymbol{S}} \cdot {\boldsymbol{M}}$) we don't actually need to construct such a matrix ${\boldsymbol{S}}$ and calculate matrix product, since we only carry out sampling.

\subsection{SDD Solvers}\label{sec:sdd-solver}

As mentioned in the main paper, for a sparse SDD matrix ${\boldsymbol{M}}$, it is possible to fast approximate its linear solver. The following \Cref{lem:sdd-solver} rigorously states the bound we can obtain for SDD solvers. 

\begin{lemma}[\citealt{sdd-solver}]\label{lem:sdd-solver}
    For any SDD matrix ${\boldsymbol{M}} \in \mathbb R^{n \times n}$ and any real number $\epsilon > 0$, there exists an algorithm to construct an  operator $\mathsf{SDDSolver}_{\epsilon}\left({\boldsymbol{M}};\cdot\right)$, such that $\mathsf{SDDSolver}_{\epsilon}\left({\boldsymbol{M}};\cdot\right)$ is a linear solver for ${\boldsymbol{M}}$ with $\epsilon$ error rate and for any ${\boldsymbol{x}} \in \mathbb R^n$, calculating $\mathsf{SDDSolver}_{\epsilon}({\boldsymbol{M}};{\boldsymbol{x}})$ takes time \begin{align}
    O\left(m \left(\log \frac{1}{\epsilon}\right) \mathop{ \mathrm{ poly } }\log(m) \mathop{ \mathrm{ poly } }\log\left(\kappa({\boldsymbol{M}})\right)\right),
    \end{align}
    where $m = \mathrm{nnz}({\boldsymbol{M}})$.
\end{lemma}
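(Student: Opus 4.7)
The plan is to reduce the SDD linear system to a Laplacian system, then prove the nearly-linear runtime via a recursive preconditioning hierarchy combined with preconditioned Chebyshev iteration.

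First, I would apply the standard SDD-to-Laplacian reduction: any symmetric $n\times n$ SDD matrix ${\boldsymbol{M}}$ can be embedded into a Laplacian ${\boldsymbol{L}}_{\tilde G}$ on at most $2n$ vertices with $O(\mathrm{nnz}({\boldsymbol{M}}))$ edges, preserving the solution up to a trivial linear transformation and the condition number up to a constant factor. The construction handles negative off-diagonal entries by splitting each vertex into a ``positive'' and ``negative'' copy and routing positive vs. negative couplings through the appropriate copies, while the excess diagonal slack becomes edges to a single auxiliary sink vertex. This lets me focus entirely on solving Laplacian systems ${\boldsymbol{L}}\boldsymbol{x}=\boldsymbol{b}$ of the same sparsity, and the resulting error and condition-number guarantees transfer back by a trivial change of variables.

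The core step is to build a chain ${\boldsymbol{L}}={\boldsymbol{L}}_0,{\boldsymbol{L}}_1,\dots,{\boldsymbol{L}}_d$ of progressively sparser spectral approximations where each ${\boldsymbol{L}}_{i+1}$ is a constant-factor spectral approximation of ${\boldsymbol{L}}_i$ and $\mathrm{nnz}({\boldsymbol{L}}_d)=O(n)$. To construct each level, I would start from a low-stretch spanning tree (following Abraham--Neiman) and then add off-tree edges sampled with probability proportional to their tree-stretch, followed if necessary by further sparsification using effective-resistance sampling in the style of Spielman--Srivastava. Standard matrix-Chernoff concentration guarantees spectral closeness with high probability; each level is built in $\tilde O(m)$ time, and the chain has depth $O(\log m)$.

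Given the chain, I would solve ${\boldsymbol{L}}_0\boldsymbol{x}=\boldsymbol{b}$ via recursive preconditioned Chebyshev iteration: at level $i$, approximately invert ${\boldsymbol{L}}_{i+1}$ via recursion and use this as a preconditioner for ${\boldsymbol{L}}_i$. Because consecutive levels are spectrally close by a constant factor, each level needs only $O(\log(1/\epsilon_i))$ outer iterations. Setting per-level accuracies so that the inner errors telescope cleanly into a final $\epsilon$ bound, the total time satisfies a recurrence that resolves to $O(m\cdot\log(1/\epsilon)\cdot\mathop{\mathrm{poly}}\log m)$, with a $\mathop{\mathrm{poly}}\log\kappa({\boldsymbol{M}})$ factor absorbing the extra bits of precision needed to keep Chebyshev iteration numerically stable when ${\boldsymbol{M}}$ is extremely ill-conditioned.

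The main obstacle is proving that each level of the chain simultaneously (i) keeps a bounded relative condition number with its predecessor and (ii) reduces the edge count by a constant factor, so that the total work is truly linear in $m$ rather than accumulating an extra $\log n$ per recursion. Controlling this trade-off is the technical heart of the Spielman--Teng / Koutis--Miller--Peng line of work: it requires the average stretch of the seed tree to be $\tilde O(m/n)$, together with a sharp matrix concentration argument for the stretch-weighted edge sampling. Once these ingredients are in place, the outer recursion and its runtime analysis follow from standard preconditioned-iteration machinery.
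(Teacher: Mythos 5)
This lemma is not proved in the paper at all: it is imported verbatim from the fast-SDD-solver literature (the citation attached to the statement), and the paper uses it strictly as a black box. Your outline is essentially a summary of how that cited line of work (Spielman--Teng and Koutis--Miller--Peng) actually proves it: the Gremban-style reduction from a general SDD matrix to a Laplacian on a doubled vertex set, a chain of progressively sparser spectral approximations built from low-stretch spanning trees plus stretch/effective-resistance sampling, and recursive preconditioned Chebyshev iteration whose per-level accuracies telescope into the final $\epsilon$, with the $\mathop{\mathrm{poly}}\log\kappa({\boldsymbol{M}})$ factor attributed to finite-precision issues. So you are on the right route, and your attribution of the main difficulty --- simultaneously keeping the relative condition number between consecutive levels bounded and shrinking the edge count by a constant factor, via average stretch $\tilde O(m/n)$ and matrix concentration --- is exactly where the technical weight of those papers lies.

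That said, be clear about what your text is: a proof plan, not a proof. Every quantitatively hard claim is asserted and deferred to ``standard machinery'' --- that the chain can be built in $\tilde O(m)$ total time with depth $O(\log m)$, that the recurrence for the recursive solver resolves to $O(m\log(1/\epsilon)\mathop{\mathrm{poly}}\log m)$ without accumulating an extra factor per level, and that the error of the inexact recursive preconditioner can be controlled so the outer Chebyshev analysis survives. Reproducing these arguments rigorously is the content of the cited works and is far beyond what your sketch establishes. For the purpose this lemma serves in the paper (a cited primitive), your summary is accurate and consistent; as a standalone proof it would be incomplete, but the appropriate resolution --- as the paper itself does --- is to cite the solver literature rather than re-derive it.
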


\subsection{Fast Matrix Multiplication}\label{sec:fast-matrix-multiplication}

As mentioned in the main paper, we use SRHT to achieve fast matrix multiplication. Here we further explain this process. We refer interested readers to \cite{srht_tropp} for more details.
For a positive integer number $n$ that is a power of $2$, the Hadamard transformation of size $n$ is a linear transformation recursively defined as follows: \begin{align}
{\boldsymbol{H}}_n := \begin{bmatrix} 
{\boldsymbol{H}}_{n/2} & {\boldsymbol{H}}_{n/2} \\ {\boldsymbol{H}}_{n/2} & -{\boldsymbol{H}}_{n/2} \end{bmatrix}
\end{align}
and ${\boldsymbol{H}}_1 = 1$. For a vector ${\boldsymbol{x}} \in \mathbb R^n$, we define \begin{align}
\mathsf{Hadamard}({\boldsymbol{x}}) := \frac{1}{\sqrt n}{\boldsymbol{H}}_n {\boldsymbol{x}}.
\end{align}
Notice that when $n$ is not a power of $2$, we pad the vector ${\boldsymbol{x}}$ with $0$-s beforehand, therefore the requirement of $n$ being a power of $2$ is ignored in the actual usage. From its recursive nature, it is not hard to see that applying a Hadamard transformation to each column of an $n \times d$ matrix only takes $O(d \log n)$ time.

Let ${\boldsymbol{R}} \in \mathbb R^{n \times n}$ be a diagonal matrix whose diagonal entries are i.i.d. Rademacher variables, and ${\boldsymbol{S}} \in \mathbb R^{s \times n}$ be a subsampling matrix with uniform probability and $s = O\left( \frac{d}{\beta^2} \log n\right)$. For a matrix ${\boldsymbol{Q}} \in \mathbb R^{n \times d}$, its SRHT with $\beta$ error rate is defined as \begin{align}
\mathsf{SRHT}_\beta({\boldsymbol{Q}}) := {\boldsymbol{S}} \mathsf{Hadamard}\left({\boldsymbol{R}} {\boldsymbol{Q}}\right),
\end{align}
which is exactly the matrix $\tilde {\boldsymbol{Q}}$ we use in \Cref{alg:herta}. A fast matrix multiplication result can be achieved using SRHT.

\begin{lemma}[\citealt{srht_tropp}]\label{lem:SRHT}
    For matrix ${\boldsymbol{Q}} \in \mathbb R^{n \times d}$ where $n \geq d$ and $\mathrm{rank}(\boldsymbol{Q}) = d$, let $\tilde {\boldsymbol{Q}} = \mathsf{SRHT}_\beta({\boldsymbol{Q}})$ where $\beta \in (0,1/4)$, then we have \begin{align}
    \tilde {\boldsymbol{Q}}^\top \tilde {\boldsymbol{Q}} \approx_{\beta} {\boldsymbol{Q}}^\top {\boldsymbol{Q}}
    \end{align}
    with probability at least $1-\frac{1}{2n}$.
\end{lemma}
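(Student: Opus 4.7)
The statement is the standard SRHT spectral approximation guarantee from \cite{srht_tropp}, and my plan is to recover it by the two-stage ``flatten then subsample'' argument. First I would reduce to the case where $\boldsymbol{Q}$ has orthonormal columns: write a thin QR factorization $\boldsymbol{Q} = \boldsymbol{U}\boldsymbol{T}$ with $\boldsymbol{U}^\top\boldsymbol{U} = \boldsymbol{I}_d$ and $\boldsymbol{T} \in \R^{d\times d}$ invertible (using the rank assumption). Since $\mathsf{SRHT}_\beta$ is linear and only touches the left, we have $\tilde{\boldsymbol{Q}} = \boldsymbol{S}\,\mathsf{Hadamard}(\boldsymbol{R}\boldsymbol{U})\,\boldsymbol{T}$, so showing $\tilde{\boldsymbol{U}}^\top\tilde{\boldsymbol{U}} \approx_\beta \boldsymbol{I}_d$ for $\tilde{\boldsymbol{U}} := \boldsymbol{S}\,\mathsf{Hadamard}(\boldsymbol{R}\boldsymbol{U})$ implies the claim after conjugating by $\boldsymbol{T}$ (Loewner order is preserved under congruence).

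Next I would establish the flattening step: set $\boldsymbol{V} := \mathsf{Hadamard}(\boldsymbol{R}\boldsymbol{U})$. Since $\frac{1}{\sqrt n}\boldsymbol{H}_n$ is orthogonal and $\boldsymbol{R}$ is orthogonal (a sign matrix), $\boldsymbol{V}$ still has orthonormal columns, i.e.\ $\boldsymbol{V}^\top\boldsymbol{V} = \boldsymbol{I}_d$. The crucial property to prove is that the row norms of $\boldsymbol{V}$ are uniformly bounded by $O(\sqrt{d\log n / n})$ with probability at least $1 - 1/(4n)$. For each fixed row $i$, $\|\boldsymbol{v}_i\|^2 = \|(\boldsymbol{U}^\top\boldsymbol{H}_n^\top\boldsymbol{R})\boldsymbol{e}_i/\sqrt n\|^2$ is a quadratic form in the Rademacher vector whose diagonal of $\boldsymbol{R}$ picks up; the standard Hanson--Wright / Khintchine inequality gives a sub-exponential tail, and a union bound over the $n$ rows yields the claimed uniform bound. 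This is precisely the statement that the leverage scores of $\boldsymbol{V}$ are $O(d\log n / n)$, meaning uniform row sampling is ``good'' sampling.

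Finally I would run a matrix Chernoff/Bernstein argument on the subsampling. Writing $\tilde{\boldsymbol{U}}^\top\tilde{\boldsymbol{U}} = \tfrac{n}{s}\sum_{j=1}^s \boldsymbol{v}_{i_j}\boldsymbol{v}_{i_j}^\top$ with $i_j$ drawn uniformly from $[n]$, each summand has expectation $\tfrac1n\boldsymbol{I}_d$, so the mean is $\boldsymbol{I}_d$, and on the good event from the previous paragraph each summand has operator norm at most $O(d\log n / n)$. Applying the matrix Chernoff bound (e.g.\ Theorem 1.1 of Tropp's user-friendly tail bounds) then gives
\begin{equation*}
\Pr\!\bigl[\,(1-\beta)\boldsymbol{I}_d \preceq \tilde{\boldsymbol{U}}^\top\tilde{\boldsymbol{U}} \preceq (1+\beta)\boldsymbol{I}_d\,\bigr] \;\geq\; 1 - 2d\exp\!\bigl(-c\beta^2 s /(d\log n)\bigr).
\end{equation*}
Choosing $s = C d\log(n)/\beta^2$ with $C$ large enough makes the failure probability at most $1/(4n)$; combined with the flattening event, a union bound delivers the stated $1 - 1/(2n)$.

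\textbf{Main obstacle.} The delicate part is the flattening step: proving the uniform row-norm bound on $\boldsymbol{V}$ with the right $\log n$ exponent requires a sub-exponential tail for $\|\boldsymbol{U}^\top\boldsymbol{H}_n^\top\boldsymbol{R}\boldsymbol{e}_i\|^2/n$, since a naive Hoeffding bound over each coordinate loses an extra factor of $d$. Handling this cleanly (via Hanson--Wright or Tropp's noncommutative Khintchine variant) and then threading the constants so that the two bad events each contribute at most $1/(4n)$ is where the work really sits; the subsequent matrix Chernoff step is then essentially mechanical.
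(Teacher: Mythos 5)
The paper does not actually prove this lemma: it is imported verbatim from the cited reference (\citealt{srht_tropp}), so there is no in-paper argument to compare against. Your sketch is a correct reconstruction of the standard proof of that result, and it mirrors the structure of the cited source: reduce to an orthonormal basis $\boldsymbol{U}$ via thin QR and conjugation by the invertible factor $\boldsymbol{T}$, show that the randomized Hadamard transform flattens the leverage scores of $\boldsymbol{U}$ to $O\bigl((d+\log n)/n\bigr)$ uniformly over rows (your Hanson--Wright/Khintchine route for the quadratic form $\|\boldsymbol{v}_i\|^2$ in the Rademacher signs works; Tropp's own argument uses a scalar concentration bound for each row plus a union bound, to the same effect), and finish with a matrix Chernoff bound for the uniform subsample, where $s=\Theta(d\log(n)/\beta^2)$ drives the failure probability below $1/(4n)$ because $d\leq n$. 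One small point worth noting: Tropp's analysis treats uniform sampling \emph{without} replacement (via the Gross--Nesme reduction), whereas this paper's definition of subsampling draws rows i.i.d.\ with replacement; your i.i.d.\ matrix Chernoff argument is actually the one that matches the paper's construction, so no adjustment is needed there. The only soft spot is the one you already flag—getting the row-norm flattening with the right $\log n$ dependence and threading the two $1/(4n)$ bad events—but the tools you name suffice, so the plan is sound.
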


\section{Running Time of the Original Implementation Used in \cite{twirls}}\label{sec:complexity-of-naive}

In this section we analyze the time complexity of the implementation used in \cite{twirls}, which uses gradient descent to solve both inner and outer problem.

\paragraph{Inner Problem Analysis.}

We first analyze the time complexity of the inner problem solver used in \cite{twirls}. Recall that for the inner problem \cref{eq:bilevel-inner}, we need to find an approximation of the linear solver of ${\boldsymbol{I}} + \lambda \hat {\boldsymbol{L}}$. In \cite{twirls}, it is implemented by a standard gradient descent. Here we consider approximately solving the following least square problem by gradient descent:
 \begin{align}
{\boldsymbol{v}} = \arg\min_{{\boldsymbol{v}} \in \mathbb R^n}\frac{1}{2}\left\| \left({\boldsymbol{I}} + \lambda \hat {\boldsymbol{L}}\right){\boldsymbol{v}} - {\boldsymbol{u}}\right\|_2^2. \label{eq:the-least-square}
\end{align}
The gradient step with step size $\mu$ is:
\begin{align}
{\boldsymbol{v}}^{(t+1)} = (1-\mu) \left({\boldsymbol{I}} + \lambda \hat {\boldsymbol{L}}\right)^2 {\boldsymbol{v}}^{(t)} + \eta \left({\boldsymbol{I}} + \lambda \hat {\boldsymbol{L}}\right) {\boldsymbol{u}}. \label{eq:naive-inner-iter}
\end{align}
\begin{theorem}[Inner Analysis]\label{thm:inner-analysis-naive}
    If we update ${\boldsymbol{v}}^{(t)}$ through \cref{eq:naive-inner-iter} with initialization ${\boldsymbol{v}}^{(0)} = {\boldsymbol{0}}$ and a proper $\eta$, then after $T = O\left(\lambda^2 \log \frac{{\lambda}}{\epsilon^2}\right)$ iterations we can get \begin{small}
    \begin{align}
       \left \| {\boldsymbol{v}}^{(T)} - \left({\boldsymbol{I}} + \lambda \hat {\boldsymbol{L}}\right)^{-1} {\boldsymbol{u}} \right\|_{({\boldsymbol{I}} + \lambda \hat {\boldsymbol{L}})} \leq \epsilon \left\|\left({\boldsymbol{I}} + \lambda \hat {\boldsymbol{L}}\right)^{-1} {\boldsymbol{u}}\right\|_{({\boldsymbol{I}} + \lambda \hat {\boldsymbol{L}})}
    \end{align}\end{small}
    for any $\epsilon \in (0,1)$.
\end{theorem}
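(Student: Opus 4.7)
The plan is to recognize \cref{eq:naive-inner-iter} as textbook gradient descent on a strongly convex quadratic and then handle a norm conversion. Setting ${\boldsymbol{M}} := {\boldsymbol{I}} + \lambda \hat{\boldsymbol{L}}$, the eigenvalues of the normalized Laplacian lie in $[0,2]$, so ${\boldsymbol{I}} \preceq {\boldsymbol{M}} \preceq (1+2\lambda){\boldsymbol{I}}$, giving $\kappa({\boldsymbol{M}}) \leq 1+2\lambda$ and $\kappa({\boldsymbol{M}}^2) = O(\lambda^2)$. The objective in \cref{eq:the-least-square} has Hessian ${\boldsymbol{M}}^2$ and unique minimizer ${\boldsymbol{v}}^* := {\boldsymbol{M}}^{-1}{\boldsymbol{u}}$, and \cref{eq:naive-inner-iter} is (up to what appears to be a typo in the coefficient of ${\boldsymbol{v}}^{(t)}$) the standard gradient-descent step ${\boldsymbol{v}}^{(t+1)} = ({\boldsymbol{I}} - \eta {\boldsymbol{M}}^2){\boldsymbol{v}}^{(t)} + \eta {\boldsymbol{M}} {\boldsymbol{u}}$.

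First I would derive the error recursion. Defining ${\boldsymbol{e}}^{(t)} := {\boldsymbol{v}}^{(t)} - {\boldsymbol{v}}^*$ and using ${\boldsymbol{M}} {\boldsymbol{v}}^* = {\boldsymbol{u}}$, a one-line calculation gives ${\boldsymbol{e}}^{(t+1)} = ({\boldsymbol{I}} - \eta {\boldsymbol{M}}^2){\boldsymbol{e}}^{(t)}$, and with ${\boldsymbol{v}}^{(0)} = {\boldsymbol{0}}$ we get ${\boldsymbol{e}}^{(T)} = -({\boldsymbol{I}} - \eta {\boldsymbol{M}}^2)^T {\boldsymbol{v}}^*$. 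Choosing the optimal step size $\eta = 2/(\sigma_{\max}({\boldsymbol{M}})^2 + \sigma_{\min}({\boldsymbol{M}})^2)$, the standard min--max argument yields the operator-norm contraction factor $\rho := \|{\boldsymbol{I}} - \eta {\boldsymbol{M}}^2\| = (\kappa({\boldsymbol{M}}^2)-1)/(\kappa({\boldsymbol{M}}^2)+1) = 1 - \Omega(1/\lambda^2)$, hence $\log(1/\rho) = \Omega(1/\lambda^2)$. This immediately yields the $L^2$ contraction $\|{\boldsymbol{e}}^{(T)}\|_2 \leq \rho^T \|{\boldsymbol{v}}^*\|_2$.

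Next I would convert the $L^2$ bound into the desired ${\boldsymbol{M}}$-norm bound via the elementary inequalities $\|{\boldsymbol{x}}\|_{{\boldsymbol{M}}} \leq \sqrt{\sigma_{\max}({\boldsymbol{M}})}\,\|{\boldsymbol{x}}\|_2$ and $\|{\boldsymbol{x}}\|_2 \leq \|{\boldsymbol{x}}\|_{{\boldsymbol{M}}}/\sqrt{\sigma_{\min}({\boldsymbol{M}})}$, which together give $\|{\boldsymbol{e}}^{(T)}\|_{{\boldsymbol{M}}} \leq \sqrt{\kappa({\boldsymbol{M}})}\,\rho^T \|{\boldsymbol{v}}^*\|_{{\boldsymbol{M}}} = O(\sqrt{\lambda})\,\rho^T \|{\boldsymbol{v}}^*\|_{{\boldsymbol{M}}}$. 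Requiring this to be at most $\epsilon \|{\boldsymbol{v}}^*\|_{{\boldsymbol{M}}}$ amounts to $\rho^T \leq \epsilon/\sqrt{\kappa({\boldsymbol{M}})}$, which combined with $\log(1/\rho) = \Omega(1/\lambda^2)$ yields $T = O(\lambda^2 \log(\lambda/\epsilon^2))$, matching the stated bound.

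The main subtle point, rather than a genuine obstacle, is the $\sqrt{\kappa({\boldsymbol{M}})}$ loss incurred in converting between the $L^2$- and ${\boldsymbol{M}}$-norms, which is precisely what introduces the extra $\log \lambda$ inside the logarithm. If one instead works directly in the ${\boldsymbol{M}}$-norm and exploits that ${\boldsymbol{M}}$ commutes with ${\boldsymbol{I}} - \eta {\boldsymbol{M}}^2$ (both are polynomials in ${\boldsymbol{M}}$), simultaneous diagonalization in the eigenbasis of ${\boldsymbol{M}}$ gives the sharper bound $\|{\boldsymbol{e}}^{(T)}\|_{{\boldsymbol{M}}} \leq \rho^T \|{\boldsymbol{v}}^*\|_{{\boldsymbol{M}}}$ and hence $T = O(\lambda^2 \log(1/\epsilon))$. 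Either way, the argument is entirely elementary and uses no earlier machinery from the paper.
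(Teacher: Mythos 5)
Your proof is correct, and it reaches the stated bound by a somewhat more elementary route than the paper. The paper treats \cref{eq:naive-inner-iter} as gradient descent on the residual objective \cref{eq:the-least-square} and invokes its general descent lemma (\Cref{lem:optimization}) for strongly convex, smooth functions with $\mathscr C = 1$ and $L \leq 9\lambda^2$, obtaining $\|{\boldsymbol{H}}{\boldsymbol{v}}^{(T)} - {\boldsymbol{u}}\|_2^2 \leq \left(1 - (9\lambda^2)^{-1}\right)^T\|{\boldsymbol{u}}\|_2^2$; it then converts to the target quantity through the identity $\|{\boldsymbol{v}}^{(T)} - {\boldsymbol{H}}^{-1}{\boldsymbol{u}}\|_{{\boldsymbol{H}}}^2 = \|{\boldsymbol{H}}{\boldsymbol{v}}^{(T)} - {\boldsymbol{u}}\|_{{\boldsymbol{H}}^{-1}}^2$ and the spectral bounds on ${\boldsymbol{H}}^{-1}$, paying a factor $3\lambda$ that lands inside the logarithm. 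You instead bypass the descent lemma entirely: you solve the affine error recursion ${\boldsymbol{e}}^{(t+1)} = ({\boldsymbol{I}} - \eta{\boldsymbol{M}}^2){\boldsymbol{e}}^{(t)}$ explicitly, get the operator-norm contraction $\rho = 1 - \Omega(1/\lambda^2)$, and convert from the $\ell_2$ error to the ${\boldsymbol{M}}$-norm error at cost $\sqrt{\kappa({\boldsymbol{M}})} = O(\sqrt\lambda)$; since $\log(\sqrt\lambda/\epsilon) = \tfrac12\log(\lambda/\epsilon^2)$, this matches the paper's $T = O(\lambda^2\log(\lambda/\epsilon^2))$. Both arguments lose a $\mathrm{poly}(\lambda)$ factor inside the log from a norm conversion; yours makes the source of that loss more transparent, and your closing remark is also right that this loss is an artifact: because ${\boldsymbol{I}} - \eta{\boldsymbol{M}}^2$ and ${\boldsymbol{M}}$ commute, working in the eigenbasis gives $\|{\boldsymbol{e}}^{(T)}\|_{{\boldsymbol{M}}} \leq \rho^T\|{\boldsymbol{v}}^*\|_{{\boldsymbol{M}}}$ directly and hence the slightly sharper $T = O(\lambda^2\log(1/\epsilon))$, a refinement the paper does not pursue (and does not need, since the statement is only used as an upper bound on the baseline's cost). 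You also correctly identified that the coefficient $(1-\mu)$ in \cref{eq:naive-inner-iter} is a typo for ${\boldsymbol{I}} - \eta{\boldsymbol{M}}^2$, which is the reading the paper's own proof uses.
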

\begin{proof}
    Let ${\boldsymbol{H}} = {\boldsymbol{I}} + \lambda \hat {\boldsymbol{L}}$. Notice that the strongly convexity and Lipschitz constant of Problem \cref{eq:the-least-square} is $\mathscr C = 1$ and $L \leq 9\lambda^2$ respectively.
    From \Cref{lem:optimization}, take $\eta = \frac{1}{L}$ and ${\boldsymbol{v}}^{(0)} = {\boldsymbol{0}}$, we have \begin{align}
    \|{\boldsymbol{H}} {\boldsymbol{v}}^{(T)} - {\boldsymbol{u}}\|^2 \leq (1 - 9\lambda^2)^{T} \|{\boldsymbol{u}}\|^2
    \end{align}
    for any $T \in \mathbb N$.
    Therefore, \begin{align}
    \frac{\|{\boldsymbol{v}}^{(T)} - {\boldsymbol{H}}^{-1} {\boldsymbol{u}}\|^2_{{\boldsymbol{H}}}}{\|{\boldsymbol{H}}^{-1} {\boldsymbol{u}}\|^2_{{\boldsymbol{H}}}} & = \frac{\|{\boldsymbol{H}} {\boldsymbol{v}}^{(T)} - {\boldsymbol{u}}\|^2_{{\boldsymbol{H}}^{-1}}}{\|{\boldsymbol{u}}\|^2_{{\boldsymbol{H}}^{-1}}}
    \\ & \leq 3\lambda \frac{\|{\boldsymbol{H}} {\boldsymbol{v}}^{(T)} - {\boldsymbol{u}}\|_2^2}{\|{\boldsymbol{u}}\|_2^2}
    \\ & \leq 3\lambda (1- 9\lambda^2)^{{T}}.
    \end{align}
    To obtain an error rate $\epsilon$, the smallest number of iterations $T$ needed is \begin{align}
        T = \left(\log \frac{1}{1 - (9\lambda^2)^{-1}}\right)^{-1} \log \frac{{3\lambda}}{\epsilon^2} + 1 = O\left[ \left(\log \frac{1}{1 - \lambda^{-2}}\right)^{-1} \log \frac{\kappa}{\epsilon^2}\right] = O\left(\lambda^2 \log \frac{\lambda}{\epsilon^2}\right).
    \end{align}
\end{proof}

\paragraph{Overall Running Time.}

As we proved above in \Cref{thm:inner-analysis-naive}, the number of iterations needed for solving the inner problem to error rate $\beta$ with the implementation of \cite{twirls} is $\tilde O\left(\lambda^2 \log \frac{1}{\beta}\right)$, which is related to the hyper-parameter $\lambda$. For each inner iteration (i.e. \cref{eq:naive-inner-iter}), we need to compute $({\boldsymbol{I}} + \lambda {\boldsymbol{L}}) ({\boldsymbol{X}} {\boldsymbol{w}} - \hat {\boldsymbol{y}})$. Since this is a sparse matrix multiplication, the complexity of this step is $O(m + nd)$. Putting things together, we have the time complexity of calling inner problem solver is $\tilde O\left( (m+nd) \lambda^2 \log \frac{1}{\beta}\right)$.

From \Cref{lem:outer-analysis-naive}, the number of outer iterations needed is $\tilde O\left( \kappa_o \log\frac{1}{\epsilon}\right)$, where $\kappa_o$ is the condition number of the outer problem. Moreover, \Cref{lem:outer-analysis-naive} also indicates that we require $\beta \leq \frac{\epsilon^{1/2}}{25 \kappa({\boldsymbol{X}})\lambda}$, so solving inner problem takes $\tilde O(\lambda^2 \log 1/\beta) = \tilde O(\lambda^2 \log 1/\epsilon)$. In each outer iteration, we need to call inner problem solver constant times, and as well 
as performing constant matrix vector multiplications whose complexity is $O(nd)$. Therefore the overall running time of solving the training problem is $\tilde O\left[ \kappa_o \left(\lambda^2 (m + nd) + nd\right) \left(\log \frac{1}{\epsilon} \right)^2 \right]$.

\section{Proof of Theoretical Results}

We first note that since $\hat {\boldsymbol{L}}$ is normalized, all the eigenvalues of $\hat {\boldsymbol{L}}$ are in the range $[0,2)$. Therefore, we have $\sigma_{\min}({\boldsymbol{I}} + \lambda \hat{\boldsymbol{L}}) = 1$ and $\sigma_{\max}({\boldsymbol{I}} + \lambda \hat{\boldsymbol{L}}) \leq 1 + 2\lambda$. In the whole paper we view $\lambda$ as a large value (i.e. $\lambda \gg 1$). In practice, it is possible to use a small $\lambda$, in which case the algorithm works no worse than the case where $\lambda = 1$. Therefore the $\lambda$ used in the paper should actually be understood as $\max\{\lambda ,1\}$. With this assumption, below we assume $\sigma_{\max}({\boldsymbol{I}} + \lambda \hat{\boldsymbol{L}}) \leq 3\lambda$ for convenience.

\subsection{Descent Lemma}

In this subsection, we introduce a descent lemma that we will use to analyze the convergence rate. This is a standard result in convex optimization, and we refer interested readers to \citep{large-scale-optimization} for more details.

\begin{lemma}\label{lem:optimization}
    If $\ell: \mathbb R^d \to \mathbb R$ is $L$-Lipschitz smooth and $c$-strongly convex, and we have a sequence of points $\left\{{\boldsymbol{w}}^{(t)}\right\}_{t = 1}^T$ in $\mathbb R^d$ such that \begin{align}
        {\boldsymbol{w}}^{(t+1)} = {\boldsymbol{w}}^{(t)} - \eta {\boldsymbol{g}}^{(t)},
    \end{align}
    where $\left\|{\boldsymbol{g}}^{(t)} - \nabla \ell\left({\boldsymbol{w}}^{(t)}\right)\right\|_2 \leq \gamma \left\|\nabla \ell\left({\boldsymbol{w}}^{(t)}\right)\right\|_2$ and $\gamma < 1$, $\eta = \frac{1-\gamma}{(1 + \gamma)^2L}$, then we have \begin{align}
        \ell\left( {\boldsymbol{w}}^{(T)} \right) - \ell^* \leq \left[1 - \kappa^{-1} \left(\frac{1-\gamma}{1 + \gamma}\right)^2\right]^T \left[\ell\left({\boldsymbol{w}}^{(0)}\right) - \ell^*\right],
    \end{align}
    where $\ell^* = \inf_{{\boldsymbol{w}} \in \mathbb R^d} \ell({\boldsymbol{w}})$ and $\kappa = \frac{L}{c}$.
\end{lemma}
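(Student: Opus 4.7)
The plan is to combine the standard descent lemma for $L$-smooth functions with the Polyak--{\L}ojasiewicz inequality implied by $c$-strong convexity, treating the approximate gradient ${\boldsymbol{g}}^{(t)}$ as a controlled perturbation of $\nabla \ell({\boldsymbol{w}}^{(t)})$. The multiplicative (relative) error assumption $\|{\boldsymbol{g}}^{(t)} - \nabla \ell({\boldsymbol{w}}^{(t)})\| \le \gamma \|\nabla \ell({\boldsymbol{w}}^{(t)})\|$ is exactly what is needed to preserve a constant-factor one-step decrease at every iteration.

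First I would apply the smoothness inequality $\ell({\boldsymbol{w}}^{(t+1)}) \le \ell({\boldsymbol{w}}^{(t)}) + \nabla \ell({\boldsymbol{w}}^{(t)})^\top({\boldsymbol{w}}^{(t+1)}-{\boldsymbol{w}}^{(t)}) + \tfrac{L}{2}\|{\boldsymbol{w}}^{(t+1)}-{\boldsymbol{w}}^{(t)}\|^2$ with ${\boldsymbol{w}}^{(t+1)} - {\boldsymbol{w}}^{(t)} = -\eta {\boldsymbol{g}}^{(t)}$. Writing ${\boldsymbol{g}}^{(t)} = \nabla \ell({\boldsymbol{w}}^{(t)}) + {\boldsymbol{e}}^{(t)}$ with $\|{\boldsymbol{e}}^{(t)}\| \le \gamma\|\nabla \ell({\boldsymbol{w}}^{(t)})\|$, Cauchy--Schwarz gives $\nabla \ell({\boldsymbol{w}}^{(t)})^\top {\boldsymbol{g}}^{(t)} \ge (1-\gamma)\|\nabla \ell({\boldsymbol{w}}^{(t)})\|^2$ and the triangle inequality gives $\|{\boldsymbol{g}}^{(t)}\|^2 \le (1+\gamma)^2 \|\nabla \ell({\boldsymbol{w}}^{(t)})\|^2$. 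Substituting these bounds yields a one-step decrease of the form $\ell({\boldsymbol{w}}^{(t+1)}) - \ell({\boldsymbol{w}}^{(t)}) \le -\bigl[\eta(1-\gamma) - \tfrac{L\eta^2(1+\gamma)^2}{2}\bigr]\|\nabla \ell({\boldsymbol{w}}^{(t)})\|^2$, and the prescribed step $\eta = \frac{1-\gamma}{(1+\gamma)^2 L}$ maximizes this quadratic-in-$\eta$ coefficient, producing the clean factor $\frac{(1-\gamma)^2}{2(1+\gamma)^2 L}$.

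Next I would invoke the consequence of $c$-strong convexity that $\|\nabla \ell({\boldsymbol{w}})\|^2 \ge 2c\bigl(\ell({\boldsymbol{w}}) - \ell^*\bigr)$ (obtained by minimizing $\ell({\boldsymbol{w}}) + \nabla \ell({\boldsymbol{w}})^\top({\boldsymbol{y}}-{\boldsymbol{w}}) + \tfrac{c}{2}\|{\boldsymbol{y}}-{\boldsymbol{w}}\|^2$ over ${\boldsymbol{y}}$). This converts the preceding bound into the linear recursion $\ell({\boldsymbol{w}}^{(t+1)}) - \ell^* \le \bigl[1 - \kappa^{-1}\bigl(\tfrac{1-\gamma}{1+\gamma}\bigr)^2\bigr]\bigl(\ell({\boldsymbol{w}}^{(t)}) - \ell^*\bigr)$, and iterating over $t = 0,\dots,T-1$ produces the claimed geometric rate.

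There is no real obstacle here; this is a classical calculation for inexact gradient descent, and the main care needed is in ensuring the relative (rather than absolute) error model lets the $\|\nabla \ell({\boldsymbol{w}}^{(t)})\|^2$ factor survive on the right-hand side so that the PL inequality closes the recursion. The specific choice of $\eta$ in the statement is precisely the one that balances the correct-direction gain $(1-\gamma)$ against the worst-case magnitude inflation $(1+\gamma)^2$ of ${\boldsymbol{g}}^{(t)}$, which is why it yields the stated contraction constant exactly rather than up to absolute constants.
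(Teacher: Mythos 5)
Your proposal is correct and follows essentially the same route as the paper: the smoothness descent inequality with the inexact step, the bounds $\langle \nabla\ell({\boldsymbol{w}}^{(t)}), {\boldsymbol{g}}^{(t)}\rangle \ge (1-\gamma)\|\nabla\ell({\boldsymbol{w}}^{(t)})\|^2$ and $\|{\boldsymbol{g}}^{(t)}\|^2 \le (1+\gamma)^2\|\nabla\ell({\boldsymbol{w}}^{(t)})\|^2$, the optimal choice of $\eta$, and then the strong-convexity (PL) inequality $\|\nabla\ell({\boldsymbol{w}})\|^2 \ge 2c(\ell({\boldsymbol{w}})-\ell^*)$ to close the recursion and iterate. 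The only cosmetic difference is that you obtain the inner-product bound via Cauchy--Schwarz on the error term, while the paper uses the polarization identity; the resulting constants are identical.
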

\begin{proof}

From the condition $\left\|{\boldsymbol{g}}^{(t)} - \nabla \ell\left({\boldsymbol{w}}^{(t)}\right)\right\|_2 \leq \gamma \left\|\nabla \ell\left({\boldsymbol{w}}^{(t)}\right)\right\|_2$, we have that \begin{align}
(1-\gamma) \|{\boldsymbol{g}}^{(t)}\|_2 \leq \left\|\nabla \ell\left({\boldsymbol{w}}^{(t)}\right)\right\|_2 \leq (1+\gamma) \|{\boldsymbol{g}}^{(t)}\|_2
\end{align}
and \begin{align}
-2\left<g^{(t)}, \nabla \ell\left({\boldsymbol{w}}^{(t)}\right)\right> & = \left\|{\boldsymbol{g}}^{(t)} - \nabla \ell\left({\boldsymbol{w}}^{(t)}\right)\right\|_2^2 - \left\|{\boldsymbol{g}}^{(t)}\right\|_2^2 - \left\|\nabla \ell\left({\boldsymbol{w}}^{(t)}\right)\right\|_2^2
\\ & \leq \left[ (\gamma^2 - 1) - (1-\gamma)^2\right] \left\|\nabla \ell\left({\boldsymbol{w}}^{(t)}\right)\right\|_2^2
\\ & =  2(\gamma-1) \left\|\nabla \ell\left({\boldsymbol{w}}^{(t)}\right)\right\|_2^2
\end{align}

    From Lipschitz smoothness, we have \begin{align}
        \ell\left({\boldsymbol{w}}^{(t+1)}\right) - \ell\left({\boldsymbol{w}}^{(t)}\right) & \leq -\eta \left< {\boldsymbol{g}}^{(t) } , \nabla \ell\left({\boldsymbol{w}}^{(t)}\right)\right> + \frac{1}{2} L \eta^2  \|{\boldsymbol{g}}^{(t)}\|_2^2
        \\ & \leq \left( \eta(\gamma-1) + \frac{1}{2} L \eta^2 (1 + \gamma)^2 \right)\left\|\nabla \ell\left({\boldsymbol{w}}^{(t)}\right)\right\|_2^2. \label{eq:descent}
    \end{align}
    It's not hard to show that the optimal $\eta$ for \cref{eq:descent} is $\eta = \frac{1-\gamma}{L(1 + \gamma)^2}$. Substituting $\eta = \frac{1-\gamma}{L(1 + \gamma)^2}$ to \cref{eq:descent} and use convexity we can get \begin{align}
        \ell\left({\boldsymbol{w}}^{(t+1)}\right) - \ell^* & \leq \left[\ell\left({\boldsymbol{w}}^{(t)}\right) - \ell^*\right]-\frac{(1-\gamma)^2}{2L(1 + \gamma)^2} \left\|\nabla \ell\left({\boldsymbol{w}}^{(t)}\right)\right\|_2^2
        \\ & \leq \left[\ell\left({\boldsymbol{w}}^{(t)}\right) - \ell^*\right] -\frac{c(1-\gamma)^2}{L(1 + \gamma)^2} \left[\ell\left({\boldsymbol{w}}^{(t)}\right) - \ell^*\right]
    \end{align}
    By induction we have \begin{align}
        \ell\left({\boldsymbol{w}}^{(T)}\right) - \ell^* & \leq \left[ 1 - \kappa^{-1} \left(\frac{1-\gamma}{1+\gamma}\right)^2 \right] ^T\left[\ell\left({\boldsymbol{w}}^{(0)}\right) - \ell^*\right].
    \end{align}
\end{proof}

\subsection{Bound of Loss Value by Gradient}

In this subsection, we derive an inequality that allows us to bound the value of loss function by the norm of gradient.

\begin{lemma}\label{lem:bound-loss-by-grad}
    If $\ell: \mathbb R^d \to \mathbb R$ is a $\mathscr C$-strongly convex and smooth function, and ${\boldsymbol{w}}^* = \arg\min_{{\boldsymbol{w}} \in \mathbb R^d} \ell({\boldsymbol{w}})$ is a global optimal, then for any ${\boldsymbol{w}} \in \mathbb R^d$ and $\epsilon \in (0,1)$, we have \begin{align}
        \ell({\boldsymbol{w}}) \leq \max\left\{ (1+\epsilon) \ell({\boldsymbol{w}}^*), \frac{1}{\epsilon \mathscr C}\|\nabla \ell({\boldsymbol{w}})\|^2 \right\}.
    \end{align}
\end{lemma}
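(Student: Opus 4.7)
The plan is to deduce the claim from the Polyak–Łojasiewicz (PL) inequality, which is a well-known consequence of $\mathscr{C}$-strong convexity: for any $\boldsymbol{w}$,
\begin{align}
\ell(\boldsymbol{w}) - \ell(\boldsymbol{w}^*) \;\leq\; \frac{1}{2\mathscr{C}}\,\|\nabla \ell(\boldsymbol{w})\|^2.
\end{align}
This follows from the standard quadratic lower bound $\ell(\boldsymbol{w}) + \langle \nabla \ell(\boldsymbol{w}),\boldsymbol{u}-\boldsymbol{w}\rangle + \tfrac{\mathscr{C}}{2}\|\boldsymbol{u}-\boldsymbol{w}\|^2 \leq \ell(\boldsymbol{u})$ by minimizing the left-hand side in $\boldsymbol{u}$. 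I would quote this inequality up front, either as a one-line derivation or by citing it as a standard fact.

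With the PL inequality in hand, I would do a simple dichotomy on whether $\ell(\boldsymbol{w}) \leq (1+\epsilon)\,\ell(\boldsymbol{w}^*)$. If it holds, the conclusion is immediate. Otherwise $\ell(\boldsymbol{w}^*) < \ell(\boldsymbol{w})/(1+\epsilon)$, and substituting this upper bound on $\ell(\boldsymbol{w}^*)$ into the PL inequality gives
\begin{align}
\ell(\boldsymbol{w}) \;<\; \frac{\ell(\boldsymbol{w})}{1+\epsilon} + \frac{1}{2\mathscr{C}}\,\|\nabla \ell(\boldsymbol{w})\|^2,
\end{align}
which, after rearranging, yields $\ell(\boldsymbol{w}) < \frac{1+\epsilon}{2\epsilon\,\mathscr{C}}\|\nabla \ell(\boldsymbol{w})\|^2$.

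The last step is to observe that for $\epsilon \in (0,1)$ one has $\frac{1+\epsilon}{2\epsilon} = \frac{1}{2\epsilon}+\frac{1}{2} \leq \frac{1}{\epsilon}$ (equivalent to $\epsilon \leq 1$), so the bound upgrades to $\ell(\boldsymbol{w}) \leq \frac{1}{\epsilon\,\mathscr{C}}\|\nabla \ell(\boldsymbol{w})\|^2$, matching the second branch of the max.

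I do not foresee a serious obstacle here: the PL inequality does essentially all the work, and the only mildly delicate point is the constant trick $\frac{1+\epsilon}{2\epsilon}\leq \frac{1}{\epsilon}$ that converts the natural PL-flavored bound into the cleaner $\frac{1}{\epsilon\mathscr{C}}$ form stated in the lemma. The only thing to double-check is that the lemma's hypotheses are strong enough to invoke PL (strong convexity alone suffices; smoothness is not actually needed for this direction), so the statement as written is consistent.
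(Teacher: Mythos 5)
Your proposal is correct and follows essentially the same route as the paper: the paper also invokes the PL-type inequality $\ell(\boldsymbol{w}) \leq \ell(\boldsymbol{w}^*) + \frac{1}{2\mathscr{C}}\|\nabla\ell(\boldsymbol{w})\|^2$ (cited as a standard fact from the optimization literature rather than rederived), performs the same dichotomy on whether $\ell(\boldsymbol{w}) \geq (1+\epsilon)\ell(\boldsymbol{w}^*)$, and concludes with the identical bound $\frac{1+\epsilon}{2\epsilon\mathscr{C}} \leq \frac{1}{\epsilon\mathscr{C}}$. No gaps; your remark that smoothness is not needed for this direction is also accurate.
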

\begin{proof}
    Using the inequality (4.12) from \citep{large-scale-optimization}, we have \begin{align}
    \ell({\boldsymbol{w}}) \leq \frac{1}{2\mathscr C}\|\nabla \ell({\boldsymbol{w}})\|^2 + \ell({\boldsymbol{w}}^*).
    \end{align}
    If $\ell({\boldsymbol{w}}) \geq (1+\epsilon) \ell({\boldsymbol{w}}^*)$, we have \begin{align}
        \ell({\boldsymbol{w}}) \leq \frac{1}{2\mathscr  C}\|\nabla \ell({\boldsymbol{w}})\|^2 + \frac{1}{1+\epsilon}\ell({\boldsymbol{w}}). \label{eq:C.2.a}
    \end{align}
    Shifting the terms in \cref{eq:C.2.a} gives $\ell({\boldsymbol{w}}) \leq \frac{1+\epsilon}{2\mathscr C\epsilon} \|\nabla \ell({\boldsymbol{w}})\|^2 \leq \frac{1}{\epsilon \mathscr C} \|\nabla \ell({\boldsymbol{w}})\|^2$, which proves the claim.
    
\end{proof}

\subsection{Proof of \Cref{lem:outer-analysis-naive}}\label{sec:proof-outer-analysis}

Given a point ${\boldsymbol{w}} \in \mathbb R^d$, we first consider the error between the estimated gradient $\widetilde{\nabla \ell_i({\boldsymbol{w}})}$ and the true gradient $\nabla \ell_i({\boldsymbol{w}})$. Below we denote ${\boldsymbol{H}} = {\boldsymbol{I}} + \lambda \hat {\boldsymbol{L}}$ and ${\boldsymbol{z}} = {\boldsymbol{X}}{\boldsymbol{w}} - \hat {\boldsymbol{y}}_i$. It's not hard to notice that $\ell_i({\boldsymbol{w}}) = \frac{1}{2} \left\|{\boldsymbol{H}}^{-1} {\boldsymbol{z}}\right\|^2$ and $\nabla \ell_i({\boldsymbol{w}}) = {\boldsymbol{X}}^\top {\boldsymbol{H}}^{-2} {\boldsymbol{z}}$. Moreover, we denote $\ell^*_i = \inf_{{\boldsymbol{w}}} \ell^*_i$ as the optimal value of $\ell_i$.

we have \begin{align}
         \left\| \widetilde{\nabla \ell_i({\boldsymbol{w}})} -  \nabla \ell_i({\boldsymbol{w}}) \right\| 
        = &  \left\| {\boldsymbol{X}}^\top\left[ \mathcal S\left( {\boldsymbol{H}}^{-1}{\boldsymbol{z}}\right) - {\boldsymbol{H}}^{-2} {\boldsymbol{z}}      
     +  \mathcal S( \mathcal S({\boldsymbol{z}})) - \mathcal S\left( {\boldsymbol{H}}^{-1}{\boldsymbol{z}}\right) \right] \right\|
     \\ \leq & \left\|{\boldsymbol{X}}^\top \left[\mathcal S\left( {\boldsymbol{H}}^{-1}{\boldsymbol{z}}\right) - {\boldsymbol{H}}^{-2} {\boldsymbol{z}}  \right]  \right\|  + \left\|{\boldsymbol{X}}^\top \left[\mathcal S( \mathcal S({\boldsymbol{z}}) - {\boldsymbol{H}}^{-1}{\boldsymbol{z}})  \right] \right\|.
    \end{align}

As shown, the gradient error can be decomposed into two terms, namely $\left\| \widetilde{\nabla \ell_i({\boldsymbol{w}})} -  \nabla \ell_i({\boldsymbol{w}}) \right\|  = E_1 + E_2$, where { $E_1 = \left\|{\boldsymbol{X}}^\top \left[\mathcal S\left({\boldsymbol{H}}^{-1}{\boldsymbol{z}}\right) - {\boldsymbol{H}}^{-2} {\boldsymbol{z}}  \right]  \right\|$ and  $E_2 = \left\|{\boldsymbol{X}}^\top \left[\mathcal S( \mathcal S( {\boldsymbol{z}}) - {\boldsymbol{H}}^{-1}{\boldsymbol{z}})  \right] \right\|$. Below we analyze each part separately.

Notice that, as we assumed ${\boldsymbol{X}}$ is full-rank, $\ell_i$ is a strongly convex function with parameter $\mathscr C = \sigma_{\min}\left( {\boldsymbol{X}}^\top {\boldsymbol{H}}^{-2} {\boldsymbol{X}} \right) \geq \frac{\sigma_{\min}({\boldsymbol{X}})^2}{9 \lambda^2}$. Let $\mathcal D = \left\{ \left. {\boldsymbol{w}} \in \mathbb R^{d}\right| \ell_i({\boldsymbol{w}}) \leq (1+\epsilon) \ell_i^*  \right\}$. If ${\boldsymbol{w}} \in \mathcal D$, then ${\boldsymbol{w}}$ is already a good enough solution. Below we assume ${\boldsymbol{w}} \not \in \mathcal D$, in which case by \Cref{lem:bound-loss-by-grad}, we have $\ell_i({\boldsymbol{w}}) \leq \frac{1}{\epsilon \mathscr C} \|\nabla \ell_i({\boldsymbol{w}})\|^2$, which means \begin{align}
 \left\| {\boldsymbol{H}}^{-1} {\boldsymbol{z}} \right\|^2 \leq \frac{2}{\mathscr C \epsilon } \left\| {\boldsymbol{X}}^{-1} {\boldsymbol{H}}^{-2} {\boldsymbol{z}} \right\|^2 \leq \frac{18 \lambda^2}{\sigma_{\min}({\boldsymbol{X}})^2 \epsilon}.
\end{align}

For $E_1$, we have \begin{align}
E_1 & = \left\|{\boldsymbol{X}}^\top \left[\mathcal S\left({\boldsymbol{H}}^{-1}{\boldsymbol{z}}\right) - {\boldsymbol{H}}^{-2} {\boldsymbol{z}}  \right]  \right\|_2
\\ & \leq \sigma_{\max}({\boldsymbol{X}}) \left\|\mathcal S\left( {\boldsymbol{H}}^{-1}{\boldsymbol{z}}\right) - {\boldsymbol{H}}^{-2} {\boldsymbol{z}}  \right\|_2
\\ & \leq \sigma_{\max}({{\boldsymbol{X}}}) \left\|\mathcal S\left( {\boldsymbol{H}}^{-1}{\boldsymbol{z}}\right) - {\boldsymbol{H}}^{-2} {\boldsymbol{z}}  \right\|_{{\boldsymbol{H}}}
\\ & \leq \sigma_{\max}({{\boldsymbol{X}}}) \mu \left\| {\boldsymbol{H}}^{-2} {\boldsymbol{z}}  \right\|_{{\boldsymbol{H}}}
\\ & \leq \sigma_{\max}({{\boldsymbol{X}}}) \mu \left\| {\boldsymbol{H}}^{-1} {\boldsymbol{z}}  \right\|_2
\\ & \leq  \sqrt{ \frac{18}{\epsilon}}\kappa ({{\boldsymbol{X}}}) \lambda \mu \left\|  {\boldsymbol{X}}^\top {\boldsymbol{H}}^{-2} {\boldsymbol{z}}  \right\|_2
\\ & \leq 5 \epsilon^{-1/2} \kappa({{\boldsymbol{X}}}) \lambda \mu \|\nabla \ell_i({\boldsymbol{w}})\|_2.
\end{align}
For $E_2$, we have \begin{align}
E_2 & = \left\|{\boldsymbol{X}}^\top \left[\mathcal S(\mathcal S( {\boldsymbol{z}}) - {\boldsymbol{H}}^{-1}{\boldsymbol{z}})  \right] \right\|_2
\\ & \leq {\sigma_{\max}({\boldsymbol{X}})} ( 1 + \mu)  \|{\boldsymbol{H}}^{-1} \left(\mathcal S({\boldsymbol{z}}) - {\boldsymbol{H}}^{-1} {\boldsymbol{z}}\right)\|_{{\boldsymbol{H}}}
\\ & \leq {\sigma_{\max}({\boldsymbol{X}})} ( 1 + \mu)  \|\mathcal S({\boldsymbol{z}}) - {\boldsymbol{H}}^{-1} {\boldsymbol{z}}\|_{{\boldsymbol{H}}}
\\ & \leq {\sigma_{\max}({\boldsymbol{X}})} ( 1 + \mu)\mu  \|{\boldsymbol{H}}^{-1} {\boldsymbol{z}}\|_{{\boldsymbol{H}}}
\\ & \leq \sqrt{ \frac{18}{\epsilon}} \kappa({\boldsymbol{X}}) ( 1 + \mu)\mu  \lambda\sqrt{3\lambda} \|{\boldsymbol{X}}^\top{\boldsymbol{H}}^{-2} {\boldsymbol{z}}\|_2
\\ & \leq 20 \epsilon^{-1/2} \kappa({\boldsymbol{X}})  \mu \lambda^2 \|\nabla \ell_i({\boldsymbol{w}}_i)\|_2.
\end{align}

Combine the bound of $E_1$ and $E_2$, we have \begin{align}
\left\|\widetilde{\nabla \ell_i({\boldsymbol{w}}_i)} - \nabla \ell_i({\boldsymbol{w}}_i)\right\| \leq E_1 + E_2 \leq 25 \epsilon^{-1/2} \kappa({\boldsymbol{X}}) \mu \lambda^2 \|\nabla \ell_i({\boldsymbol{w}}_i)\|_2.
\end{align}

Now, consider the optimization. Let $\left\{{\boldsymbol{w}}_i^{(t)}\right\}_{t=1}^T$ be a sequence such that \begin{align}
{\boldsymbol{w}}_i^{(t+1)} = {\boldsymbol{w}}^{(t)} - \eta \widetilde{\nabla \ell_i\left({\boldsymbol{w}}_i^{(t)}\right)}.
\end{align}

Let $\kappa$ be the condition number of ${\boldsymbol{X}}^\top {\boldsymbol{H}}^{-2} {\boldsymbol{X}}$ and let $\ell_i^*$ be the optimal value of $\ell_i$. Let $\gamma = 25 \epsilon^{-1/2}\kappa({\boldsymbol{X}})\lambda^2\mu \leq \frac{1}{2}$. From \Cref{lem:optimization}, we have with a proper value of $\eta$,
\begin{align}
\ell_i\left({\boldsymbol{w}}_i^{(T)}\right) - \ell^* & \leq \left(1 - \left(\frac{1-\gamma}{1+\gamma}\right)^2\kappa^{-1} \right)^T\left[\ell_i\left({\boldsymbol{w}}_i^{(0)}\right) - \ell_i^*\right]
\\ & \leq \left(1 - (9\kappa)^{-1}\right)^T \left[\ell_i\left({\boldsymbol{w}}_i^{(0)}\right) - \ell_i^*\right] .
\end{align}

Therefore, the number of iterations $T$ required to achieve $\epsilon$ error rate is \begin{align}
T = O\left( \left(\log \frac{1}{1 - (9\kappa)^{-1}}\right)^{-1} \log \frac{1}{\epsilon} \right) = O\left( \kappa \log \frac{1}{\epsilon} \right).
\end{align}

\subsection{Proof of \Cref{lem:regularized-spectral-sprsification}}
% \yyy{the lemma needs to be rephrased as "why Algorithm \ref{alg:sparse}" works. instead of "there exists an algorithm that works"}\Jiaming{solved, check Lemma~\ref{lem:regularized-spectral-sprsification}}\yyy{the proof also needs change. for example, there is no need to define ${\boldsymbol{\Pi}}_1$ and ${\boldsymbol{\Pi}}_2$ here as they have been defined in the algorithm (unless you want to explain it more detaily).}
The basic idea of \Cref{alg:sparse} is to use ridge leverage score sampling methods to obtain a spectral sparsifier.
  Let $\hat {\boldsymbol{B}} = {\boldsymbol{B}}{\boldsymbol{D}}^{-1/2}$ be the normalized incidence matrix and $\hat {\boldsymbol{b}}_i$ the $i$-th row of $\hat {\boldsymbol{B}}$.  Given $\lambda^{-1} > 0$, for $i\in \{1,2,\cdots, m\}$, the $i$-th ridge leverage score is defined as
\begin{align}
l_i := & ~ \hat{\boldsymbol{b}}_i^\top (\hat{\boldsymbol{L}} + \lambda^{-1}\boldsymbol{I})^{-1} \hat{\boldsymbol{b}}_i \\
= & ~ \hat{\boldsymbol{b}}_i^\top (\hat{\boldsymbol{L}} + \lambda^{-1}\boldsymbol{I})^{-1} \hat{\boldsymbol{L}}(\hat{\boldsymbol{L}} + \lambda^{-1}\boldsymbol{I})^{-1} \hat{\boldsymbol{b}}_i +\lambda^{-1}\hat{\boldsymbol{b}}_i^\top(\hat{\boldsymbol{L}} + \lambda^{-1}\boldsymbol{I})^{-2} \hat{\boldsymbol{b}}_i \\
= & ~ \|\hat{\boldsymbol{B}}(\hat{\boldsymbol{L}} + \lambda^{-1}\boldsymbol{I})^{-1}\hat{\boldsymbol{b}}_i\|^2 + \lambda^{-1}\|(\hat{\boldsymbol{L}} + \lambda^{-1}\boldsymbol{I})^{-1}\hat{\boldsymbol{b}}_i\|^2.
\end{align}

It is not affordable to compute all the $m$ ridge leverage scores exactly. Therefore, we  first use Johnson–Lindenstrauss lemma to reduce the dimension. Recall that in \Cref{alg:sparse} we define $\boldsymbol{\Pi}_1\in\R^{k \times m}$ and $\boldsymbol{\Pi}_2 \in \R^{k\times n}$ to be Gaussian sketches (that is, each entry of the matrices are i.i.d Gaussian random variables $\mathcal{N}(0,1/k)$). We set $k=O(\log m)$, and by using \Cref{lem:JL} we have the following claim holds with probability at least $1 - \frac{1}{8n}$:
\begin{align}
\|\boldsymbol{\Pi}_1 \hat{\boldsymbol{B}}(\hat{\boldsymbol{L}} + \lambda^{-1}\boldsymbol{I})^{-1}\hat{\boldsymbol{b}}_i\| \approx_{2^{1/4}-1} \| \hat{\boldsymbol{B}}(\hat{\boldsymbol{L}} + \lambda^{-1}\boldsymbol{I})^{-1}\hat{\boldsymbol{b}}_i\| ~~~\text{for all $i\in \{1,2,\cdots, m\}$}.
\end{align}
Similarly,the following claim also holds with probability at least $1 - \frac{1}{8n}$:
\begin{align}
\|\boldsymbol{\Pi}_2(\hat{\boldsymbol{L}} + \lambda^{-1}\boldsymbol{I})^{-1}\hat{\boldsymbol{b}}_i\| \approx_{2^{1/4}-1} \|(\hat{\boldsymbol{L}} + \lambda^{-1}\boldsymbol{I})^{-1}\hat{\boldsymbol{b}}_i\|~~~\text{for all $i \in \{1,2,\cdots, m\}$}.
\end{align}

By summing over above two inequalities (after squared) and taking an union bound, with probability $1 - \frac{1}{4n}$ we have
\begin{align}\label{eq:approx_1}
\|\boldsymbol{\Pi}_1\hat{\boldsymbol{B}}(\hat{\boldsymbol{L}} + \lambda^{-1}\boldsymbol{I})^{-1}\hat{\boldsymbol{b}}_i\|^2 + \lambda^{-1}\|\boldsymbol{\Pi}_2(\hat{\boldsymbol{L}} + \lambda^{-1}\boldsymbol{I})^{-1}\hat{\boldsymbol{b}}_i\|^2\approx_{\sqrt{2}-1} l_i ~~~\text{for all}~~~ i\in \{1,2,\cdots, m\}.
\end{align}
However it is still too expensive to compute $\boldsymbol{\Pi}_1\hat{\boldsymbol{B}}(\hat{\boldsymbol{L}} + \lambda^{-1}\boldsymbol{I})^{-1}$ and $\boldsymbol{\Pi}_2(\hat{\boldsymbol{L}} + \lambda^{-1}\boldsymbol{I})^{-1}$, since computing $(\hat{\boldsymbol{L}} + \lambda^{-1}\boldsymbol{I})^{-1}$ itself takes prohibitive $O(n^3)$ time. Instead, notice that $\hat{\boldsymbol{L}} + \lambda^{-1}\boldsymbol{I}$ is a SDD matrix, thus we can apply the SDD solver to the $k$ columns of matrix $(\boldsymbol{\Pi}_1\hat{\boldsymbol{B}})^\top$ and $(\boldsymbol{\Pi}_2)^\top$ respectively. 
%For simplicity we only consider each column $(\boldsymbol{\Pi}_1\hat{\boldsymbol{B}})^\top_j$ for $j\in [k]$. 
According to \Cref{lem:sdd-solver}, there is a linear operator $\mathsf{SDDSolver}_{\epsilon}({\hat{\boldsymbol{L}}+\lambda^{-1}\boldsymbol{I}};\boldsymbol{x})$ that runs in time $\tilde{O}(\mathrm{nnz}({\hat{\boldsymbol{L}}+\lambda^{-1}\boldsymbol{I}}) \cdot\log 1/\epsilon) = \tilde{O}(m \log1/\epsilon)$ such that for any $\boldsymbol{x}^\top \in \R^{n}$, it outputs $\tilde{\boldsymbol{x}}$ that satisfies $\|\tilde{\boldsymbol{x}} - \boldsymbol{x}^\top({\hat{\boldsymbol{L}}+\lambda^{-1}\boldsymbol{I}})^{-1}\|_{{\hat{\boldsymbol{L}}+\lambda^{-1}\boldsymbol{I}}} \leq\epsilon\|\boldsymbol{x}^\top({\hat{\boldsymbol{L}}+\lambda^{-1}})\|_{{\hat{\boldsymbol{L}}+\lambda^{-1}\boldsymbol{I}}}$. For our purpose we set $\epsilon = 2^{1/4}$. Denote $\boldsymbol{B}_{\mathcal{S}}$ as the matrix obtained by applying each column of $(\boldsymbol{\Pi}_1\hat{\boldsymbol{B}})^\top$ to $\mathsf{SDDSolver}_{\epsilon}({\hat{\boldsymbol{L}}+\lambda^{-1}\boldsymbol{I}};\boldsymbol{x})$, that is, the $j$-th row of $\mathsf{SDDSolver}_{2^{1/4}}({\hat{\boldsymbol{L}}+\lambda^{-1}\boldsymbol{I}};(\boldsymbol{\Pi}_1\hat{\boldsymbol{B}})^\top_j)$. Similarly we denote $\boldsymbol{\Pi}_{\mathcal{S}}$ as the matrix with $j$-th row equals to $\mathsf{SDDSolver}_{2^{1/4}}({\hat{\boldsymbol{L}}+\lambda^{-1}\boldsymbol{I}};(\boldsymbol{\Pi}_2)^\top_j)$. By using Lemma~\ref{lem:sdd-solver} to both solvers we have
\begin{align}\label{eq:approx_2}
\|\boldsymbol{B}_{\mathcal{S}} \hat{\boldsymbol{b}}_i\|^2 + \lambda^{-1} \|\boldsymbol{\Pi}_{\mathcal{S}} \hat{\boldsymbol{b}}_i\|^2 \approx_{\sqrt{2}-1} \|\boldsymbol{\Pi}_1\hat{\boldsymbol{B}}(\hat{\boldsymbol{L}} + \lambda^{-1}\boldsymbol{I})^{-1}\hat{\boldsymbol{b}}_i\|^2 + \lambda^{-1}\|\boldsymbol{\Pi}_2(\hat{\boldsymbol{L}} + \lambda^{-1}\boldsymbol{I})^{-1}\hat{\boldsymbol{b}}_i\|^2 ~~~\text{for all}~~~ i\in \{1,2,\cdots, m\}.
\end{align}
By \Cref{eq:approx_1} and \Cref{eq:approx_2}, if we set $\tilde{l}_i := \|\boldsymbol{B}_{\mathcal{S}} \hat{\boldsymbol{b}}_i\|^2 + \lambda^{-1} \|\boldsymbol{\Pi}_{\mathcal{S}}\hat{\boldsymbol{b}}_i\|^2$, then with probability $1 - \frac{1}{4n}$, we obtain all the approximation of ridge leverage scores $\{\tilde{l}_i\}_{i=1}^m$ such that $\tilde{l}_i \approx_{1/2} l_i$ holds for all $i$.
Notice that since $\boldsymbol{B}_{\mathcal{S}}, \boldsymbol{\Pi}_{\mathcal{S}} \in \R^{k\times n}$, and that each $\hat{\boldsymbol{b}}_i$ only contains $2$ non-zero entries, thus it takes $\tilde{O}(k \cdot m)$ to pre-compute $\boldsymbol{B}_{\mathcal{S}}$ and $\boldsymbol{\Pi}_{\mathcal{S}}$, and takes $O(2k \cdot m)$ to compute $\{\boldsymbol{B}_{\mathcal{S}} \hat{\boldsymbol{b}}_i, \boldsymbol{\Pi}_{\mathcal{S}} \hat{\boldsymbol{b}}_i\}_{i=1}^m$. To summarize, computing all $\tilde{l}_i$ takes $\tilde{O}(km) = \tilde{O}(m\log m) = \tilde{O}(m)$.
With these ridge leverage score approximations, we apply \Cref{lem:rls_sampling} to matrix $\hat{\boldsymbol{B}}$ with choice $\delta = 1/4n$. By setting $\tilde{\boldsymbol{L}} := \hat{\boldsymbol{B}}^\top \boldsymbol{S}^\top\boldsymbol{S}\hat{\boldsymbol{B}}$ we have $\tilde{\boldsymbol{L}} + \lambda^{-1} \boldsymbol{I} \approx_{\epsilon} \hat{\boldsymbol{L}} + \lambda^{-1} \boldsymbol{I}$ holds with probability $1 - 1/4n$. By applying another union bound we obtain our final result:
\begin{align}
\tilde{\boldsymbol{L}} + \lambda^{-1} \boldsymbol{I} \approx_{\epsilon} \hat{\boldsymbol{L}} + \lambda^{-1} \boldsymbol{I} ~~~\text{with probability}~~~ 1 - \frac{1}{2n}.
\end{align}
Finally, according to \Cref{lem:rls_sampling}, the number of edges of $\tilde{\boldsymbol{L}}$ is $s = C n_{\lambda} \log (n^2) / \epsilon^2 = O(n_{\lambda} \log n / \epsilon^2)$. Since the last step of computing $\tilde{\boldsymbol{L}} = (\boldsymbol{S}\hat{\boldsymbol{B}})^\top(\boldsymbol{S}\hat{\boldsymbol{B}})$ only takes $O(s) = \tilde{O}(n_{\lambda}/\epsilon^2)$ due to the sparsity of $\hat{\boldsymbol{B}}$, the overall time complexity of \Cref{alg:sparse} is $\tilde{O}(m+n_{\lambda}/\epsilon^2)$.

\begin{lemma}[Johnson–Lindenstrauss, \cite{jl-gaussian}]\label{lem:JL}
Let $\boldsymbol{\Pi} \in \R^{k \times n}$ be Gaussian sketch matrix with each entry independent and equal to $\mathcal{N}(0,1/k)$ where $\mathcal{N}(0,1)$ denotes a standard Gaussian random variable. If we choose $k = O\left(\frac{\log(1/\delta)}{\epsilon^2}\right)$, then for any vector $\boldsymbol{x}\in\R^n$, with probability $1-\delta$ we have
\begin{align}
(1-\epsilon)\|\boldsymbol{x}\| \leq \|\boldsymbol{\Pi} \boldsymbol{x}\| \leq (1+\epsilon)\|\boldsymbol{x}\|.
\end{align}
\end{lemma}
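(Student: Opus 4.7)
The plan is to reduce the statement to a standard concentration bound for a chi-squared random variable. By homogeneity of both sides of the inequality in $\boldsymbol{x}$, I may assume without loss of generality that $\|\boldsymbol{x}\| = 1$; the case $\boldsymbol{x} = \boldsymbol{0}$ is trivial. Writing $\boldsymbol{\pi}_i^\top$ for the $i$-th row of $\boldsymbol{\Pi}$, each $\boldsymbol{\pi}_i$ is a vector with i.i.d.\ $\mathcal{N}(0,1/k)$ entries, so by linearity and the rotational invariance of the isotropic Gaussian, each inner product $\langle \boldsymbol{\pi}_i, \boldsymbol{x}\rangle$ is distributed as $\mathcal{N}(0, \|\boldsymbol{x}\|^2/k) = \mathcal{N}(0, 1/k)$, and these $k$ scalars are independent across $i$.

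Next I would write
\begin{align}
\|\boldsymbol{\Pi}\boldsymbol{x}\|^2 \;=\; \sum_{i=1}^k \langle \boldsymbol{\pi}_i, \boldsymbol{x}\rangle^2 \;=\; \frac{1}{k}\sum_{i=1}^k Z_i^2,
\end{align}
where $Z_1, \dots, Z_k$ are i.i.d.\ standard normals. Thus $Y := k\|\boldsymbol{\Pi}\boldsymbol{x}\|^2$ follows a $\chi^2_k$ distribution with $\mathbb{E}[Y] = k$, reducing the problem to the concentration of $Y/k$ about $1$.

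For concentration, I would invoke the standard Laurent--Massart bound (proved by a Chernoff argument on the moment generating function of a chi-squared random variable, $\mathbb{E}[e^{tZ^2}] = (1-2t)^{-1/2}$ for $t<1/2$), which yields
\begin{align}
\Pr\!\left[\,Y \geq k(1+t)\,\right] \;\leq\; \exp(-k t^2 / 8), \qquad
\Pr\!\left[\,Y \leq k(1-t)\,\right] \;\leq\; \exp(-k t^2 / 8),
\end{align}
for any $t \in (0,1)$. Applying this with $t = \epsilon$ and union-bounding the two tails, I obtain $\Pr[\,|\|\boldsymbol{\Pi}\boldsymbol{x}\|^2 - 1| \geq \epsilon\,] \leq 2 e^{-k \epsilon^2 / 8}$.

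Finally, I would convert the squared-norm bound to the claimed one-sided bounds on $\|\boldsymbol{\Pi}\boldsymbol{x}\|$. On the event $\{|\|\boldsymbol{\Pi}\boldsymbol{x}\|^2 - 1| \leq \epsilon\}$, taking square roots and using the elementary inequalities $\sqrt{1-\epsilon} \geq 1-\epsilon$ and $\sqrt{1+\epsilon} \leq 1+\epsilon$ for $\epsilon \in (0,1)$ gives $(1-\epsilon) \leq \|\boldsymbol{\Pi}\boldsymbol{x}\| \leq (1+\epsilon)$. Choosing $k \geq (8/\epsilon^2)\log(2/\delta) = O(\log(1/\delta)/\epsilon^2)$ makes the failure probability at most $\delta$, and rescaling $\boldsymbol{x}$ restores the stated multiplicative inequality for arbitrary $\boldsymbol{x}$. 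The only mildly delicate step is tracking the constants in the chi-squared tail inequality so that $k = O(\log(1/\delta)/\epsilon^2)$ suffices; this is routine once the MGF computation is in hand, so I do not anticipate any substantive obstacle.
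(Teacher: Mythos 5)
Your proposal is correct and is the standard proof of the distributional Johnson--Lindenstrauss lemma: reduce to a unit vector, observe that $k\|\boldsymbol{\Pi}\boldsymbol{x}\|^2\sim\chi^2_k$ by rotational invariance, apply a Chernoff/Laurent--Massart tail bound, and pass from the squared norm to the norm via $\sqrt{1\pm\epsilon}$. The paper does not prove this lemma at all --- it is imported as a black box from the cited reference --- so there is nothing to compare against; your argument fills that gap correctly, and the only loose end you flag (the exact constant in the $\chi^2$ tail, which may be $1/8$, $1/12$, or similar depending on the derivation) is immaterial to the $k=O(\log(1/\delta)/\epsilon^2)$ conclusion.
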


% \yyy{not finished}\Jiaming{left solver, first JL then SDD solver, $\log n$ times.}
\begin{lemma}[Spectral approximation, \cite{ridge_score}]\label{lem:rls_sampling}
Let $\boldsymbol{S}$ be an $s \times m$ subsampling matrix with probabilities $p_i = \tilde{\ell}_i / Z$ where $\tilde{\ell}_i \approx_{1/2} \ell_i$ and $Z$ is the normalization constant. If we have $s \geq C n_{\lambda} \log(n/\delta) / \epsilon^2$ for some constant $C > 0$ and $\epsilon, \delta \in (0,1/2]$, then we have
\begin{align}
\hat{\boldsymbol{B}}^\top \boldsymbol{S}^\top\boldsymbol{S}\hat{\boldsymbol{B}} + \lambda^{-1}\boldsymbol{I} \approx_{\epsilon} \hat{\boldsymbol{B}}^\top\hat{\boldsymbol{B}}+\lambda^{-1}\boldsymbol{I}
\end{align}
holds with probability $1-\delta$. Here $n_{\lambda} = \mathrm{Tr} [ \hat {\boldsymbol{L}}  (\hat {\boldsymbol{L}} + \lambda^{-1}{\boldsymbol{I}})^{-1} ]$.
\end{lemma}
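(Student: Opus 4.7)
The plan is to cast the stated spectral approximation as the concentration of a sum of i.i.d.\ positive semidefinite (PSD) random matrices and then invoke a matrix Chernoff bound. Let $\boldsymbol{M} := \hat{\boldsymbol{B}}^\top\hat{\boldsymbol{B}} + \lambda^{-1}\boldsymbol{I} = \hat{\boldsymbol{L}} + \lambda^{-1}\boldsymbol{I}$. Since $\boldsymbol{M}\succ 0$, the conclusion $\hat{\boldsymbol{B}}^\top\boldsymbol{S}^\top\boldsymbol{S}\hat{\boldsymbol{B}} + \lambda^{-1}\boldsymbol{I} \approx_\epsilon \boldsymbol{M}$ is equivalent to showing that the whitened matrix $\boldsymbol{M}^{-1/2}\bigl(\hat{\boldsymbol{B}}^\top\boldsymbol{S}^\top\boldsymbol{S}\hat{\boldsymbol{B}} + \lambda^{-1}\boldsymbol{I}\bigr)\boldsymbol{M}^{-1/2}$ lies within operator norm $\epsilon$ of $\boldsymbol{I}$.

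Unfolding the subsampling matrix as $\boldsymbol{S}^\top\boldsymbol{S} = \sum_{k=1}^s \tfrac{1}{sp_{i_k}}\boldsymbol{e}_{i_k}\boldsymbol{e}_{i_k}^\top$ for i.i.d.\ indices $i_k \sim \{p_i\}_{i=1}^m$, and splitting the regularization term as $\lambda^{-1}\boldsymbol{I} = \sum_{k=1}^s\tfrac{\lambda^{-1}}{s}\boldsymbol{I}$ so that it can be absorbed into the concentration argument, I would define the i.i.d.\ PSD random matrices
\begin{align}
\boldsymbol{X}_k := \tfrac{1}{s}\boldsymbol{M}^{-1/2}\left(\tfrac{1}{p_{i_k}}\hat{\boldsymbol{b}}_{i_k}\hat{\boldsymbol{b}}_{i_k}^\top + \lambda^{-1}\boldsymbol{I}\right)\boldsymbol{M}^{-1/2}.
\end{align}
A direct computation using $\sum_i p_i\cdot\tfrac{1}{p_i}\hat{\boldsymbol{b}}_i\hat{\boldsymbol{b}}_i^\top = \hat{\boldsymbol{L}}$ yields $\mathbb{E}[\boldsymbol{X}_k] = \boldsymbol{I}/s$ and hence $\mathbb{E}\!\left[\sum_k\boldsymbol{X}_k\right] = \boldsymbol{I}$.

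To bound $\|\boldsymbol{X}_k\|$, observe that the rank-one piece contributes operator norm $\tfrac{1}{sp_{i_k}}\hat{\boldsymbol{b}}_{i_k}^\top\boldsymbol{M}^{-1}\hat{\boldsymbol{b}}_{i_k} = l_{i_k}/(sp_{i_k})$. The hypothesis $\tilde{l}_i \approx_{1/2} l_i$ unpacks (by definition of $\approx_{1/2}$ applied to scalars) to $l_i/2 \leq \tilde{l}_i \leq 3l_i/2$, so $l_i/p_i = Zl_i/\tilde{l}_i \leq 2Z$ and $Z = \sum_i\tilde{l}_i \leq \tfrac{3}{2}\sum_i l_i = \tfrac{3}{2}n_\lambda$, where the key identity $\sum_i l_i = \Tr(\boldsymbol{M}^{-1}\hat{\boldsymbol{L}}) = n_\lambda$ follows from cyclicity of trace. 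Combined, $l_{i_k}/(sp_{i_k}) \leq 3n_\lambda/s$. The identity contribution is $\tfrac{\lambda^{-1}}{s}\|\boldsymbol{M}^{-1}\| \leq 1/s$ since $\boldsymbol{M} \succeq \lambda^{-1}\boldsymbol{I}$. Hence $\|\boldsymbol{X}_k\| \leq (3n_\lambda + 1)/s = O(n_\lambda/s)$.

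A standard matrix Chernoff bound for sums of i.i.d.\ PSD matrices with per-summand operator norm at most $R$ and mean $\boldsymbol{I}$ then yields $\|\sum_k\boldsymbol{X}_k - \boldsymbol{I}\| \leq \epsilon$ with probability at least $1 - 2n\exp(-c\epsilon^2/R)$ for a universal constant $c>0$; choosing $s \geq Cn_\lambda\log(n/\delta)/\epsilon^2$ for sufficiently large $C$ drives this probability above $1-\delta$, which is the claim. The main subtlety will be the handling of the $\lambda^{-1}\boldsymbol{I}$ term: it is not a rank-one contribution coming from a row of $\hat{\boldsymbol{B}}$, so one has to artificially split it across the $s$ samples before entering the concentration argument, and then verify that this bookkeeping preserves both the correct mean ($\boldsymbol{I}$) and the per-summand norm bound of order $n_\lambda/s$ — the latter being tight precisely because the ridge leverage scores sum to $n_\lambda$, which is exactly why $n_\lambda$ (rather than $n$) appears in the sample complexity.
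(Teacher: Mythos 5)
Your proposal is correct. The paper does not prove this lemma at all --- it imports it verbatim from the cited reference on ridge leverage score sampling --- and your argument is essentially the standard proof of that result: whiten by $(\hat{\boldsymbol{L}}+\lambda^{-1}\boldsymbol{I})^{-1/2}$, check that the summands are i.i.d.\ PSD with mean $\boldsymbol{I}/s$ and almost-sure operator norm $O(n_\lambda/s)$ (using $l_i/p_i \leq 2Z \leq 3n_\lambda$ and $\lambda^{-1}\|\boldsymbol{M}^{-1}\|\leq 1$), and apply matrix Chernoff. All the key computations ($\sum_i l_i = n_\lambda$, the absorption of the regularizer into the per-sample terms, the resulting $s \gtrsim n_\lambda\log(n/\delta)/\epsilon^2$ sample complexity) check out.
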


% \Jiaming{todo} \yyy{likely you don't need to write proof but just refer to another paper?}\yyy{this one might work: https://arxiv.org/abs/1511.07263}

\subsection{Proof of \Cref{lem:preconditioner}}\label{sec:proof-lem-precond}

We first prove a lemma which is related to the approximation rate of squared matrices.
\begin{lemma}\label{lem:square}
    Suppose that ${\boldsymbol{\Sigma}}$ and $\widetilde {\boldsymbol{\Sigma}}$ are two $n \times n$ PD matrices, and $\widetilde {\boldsymbol{\Sigma}} \approx_{\frac{\beta}{\kappa}} {\boldsymbol{\Sigma}}$, where $\kappa$ is the condition number of ${\boldsymbol{\Sigma}}$ and $\beta \in \left(0, \frac 18\right)$, then we have \begin{align}\widetilde {\boldsymbol{\Sigma}}^2 \approx_{8\beta} {\boldsymbol{\Sigma}}^2.\end{align}
\end{lemma}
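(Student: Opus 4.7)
The plan is to reduce the problem to bounding a small symmetric perturbation on the scale of ${\boldsymbol{\Sigma}}$, by normalizing via conjugation with ${\boldsymbol{\Sigma}}^{-1/2}$. Define ${\boldsymbol{M}} := {\boldsymbol{\Sigma}}^{-1/2}\widetilde{\boldsymbol{\Sigma}}{\boldsymbol{\Sigma}}^{-1/2}$; this is well-defined and symmetric since ${\boldsymbol{\Sigma}}$ is PD. Conjugating the hypothesis $\widetilde{\boldsymbol{\Sigma}}\approx_{\beta/\kappa}{\boldsymbol{\Sigma}}$ by ${\boldsymbol{\Sigma}}^{-1/2}$ (which preserves the Loewner order) yields $(1-\beta/\kappa){\boldsymbol{I}}\preceq{\boldsymbol{M}}\preceq(1+\beta/\kappa){\boldsymbol{I}}$, so writing ${\boldsymbol{M}}={\boldsymbol{I}}+{\boldsymbol{N}}$ I obtain a symmetric matrix ${\boldsymbol{N}}$ with $\|{\boldsymbol{N}}\|\leq\beta/\kappa$. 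Expanding
\[
\widetilde{\boldsymbol{\Sigma}}^2 = {\boldsymbol{\Sigma}}^{1/2}({\boldsymbol{I}}+{\boldsymbol{N}}){\boldsymbol{\Sigma}}({\boldsymbol{I}}+{\boldsymbol{N}}){\boldsymbol{\Sigma}}^{1/2} = {\boldsymbol{\Sigma}}^2 + {\boldsymbol{\Sigma}}^{1/2}({\boldsymbol{N}}{\boldsymbol{\Sigma}}+{\boldsymbol{\Sigma}}{\boldsymbol{N}}+{\boldsymbol{N}}{\boldsymbol{\Sigma}}{\boldsymbol{N}}){\boldsymbol{\Sigma}}^{1/2},
\]
and conjugating once more by ${\boldsymbol{\Sigma}}^{-1/2}$ shows that the target conclusion $\widetilde{\boldsymbol{\Sigma}}^2\approx_{8\beta}{\boldsymbol{\Sigma}}^2$ is equivalent to
\[
-8\beta{\boldsymbol{\Sigma}}\preceq{\boldsymbol{N}}{\boldsymbol{\Sigma}}+{\boldsymbol{\Sigma}}{\boldsymbol{N}}+{\boldsymbol{N}}{\boldsymbol{\Sigma}}{\boldsymbol{N}}\preceq 8\beta{\boldsymbol{\Sigma}}.
\]

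Next I would bound the three center terms in operator norm and then transfer to a Loewner bound relative to ${\boldsymbol{\Sigma}}$. By the triangle inequality $\|{\boldsymbol{N}}{\boldsymbol{\Sigma}}+{\boldsymbol{\Sigma}}{\boldsymbol{N}}\|\leq 2\|{\boldsymbol{N}}\|\|{\boldsymbol{\Sigma}}\|\leq(2\beta/\kappa)\sigma_{\max}({\boldsymbol{\Sigma}})$, and ${\boldsymbol{N}}{\boldsymbol{\Sigma}}{\boldsymbol{N}}={\boldsymbol{N}}^\top{\boldsymbol{\Sigma}}{\boldsymbol{N}}$ is PSD with norm at most $(\beta/\kappa)^2\sigma_{\max}({\boldsymbol{\Sigma}})$, giving
\[
-(2\beta/\kappa)\sigma_{\max}({\boldsymbol{\Sigma}}){\boldsymbol{I}}\preceq{\boldsymbol{N}}{\boldsymbol{\Sigma}}+{\boldsymbol{\Sigma}}{\boldsymbol{N}}+{\boldsymbol{N}}{\boldsymbol{\Sigma}}{\boldsymbol{N}}\preceq(2\beta/\kappa+\beta^2/\kappa^2)\sigma_{\max}({\boldsymbol{\Sigma}}){\boldsymbol{I}}.
\]
Now I use $\sigma_{\max}({\boldsymbol{\Sigma}}){\boldsymbol{I}}\preceq\kappa{\boldsymbol{\Sigma}}$ (which follows from ${\boldsymbol{\Sigma}}\succeq\sigma_{\min}({\boldsymbol{\Sigma}}){\boldsymbol{I}}$) to convert the ${\boldsymbol{I}}$-relative bound into the ${\boldsymbol{\Sigma}}$-relative one
\[
-2\beta{\boldsymbol{\Sigma}}\preceq{\boldsymbol{N}}{\boldsymbol{\Sigma}}+{\boldsymbol{\Sigma}}{\boldsymbol{N}}+{\boldsymbol{N}}{\boldsymbol{\Sigma}}{\boldsymbol{N}}\preceq(2\beta+\beta^2/\kappa){\boldsymbol{\Sigma}}.
\]
Since $\beta\leq 1/8$ and $\kappa\geq 1$ force $\beta^2/\kappa\leq\beta$, both sides sit comfortably inside $\pm 8\beta{\boldsymbol{\Sigma}}$, finishing the proof.

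The main obstacle to be aware of is that $x\mapsto x^2$ is not operator monotone, so one cannot simply square the Loewner inequality from the hypothesis: $\widetilde{\boldsymbol{\Sigma}}\preceq(1+\epsilon){\boldsymbol{\Sigma}}$ does not imply $\widetilde{\boldsymbol{\Sigma}}^2\preceq(1+\epsilon)^2{\boldsymbol{\Sigma}}^2$ in general. The conjugation trick is designed precisely to sidestep this: factoring ${\boldsymbol{\Sigma}}^{1/2}$ out of both sides reduces the task to a bound that loses only \emph{one} factor of $\kappa$ when passing from operator norm to Loewner order, and the assumption $\epsilon=\beta/\kappa$ is exactly calibrated to absorb this single-$\kappa$ loss. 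A naive approach that instead bounded $\|{\boldsymbol{\Sigma}}{\boldsymbol{E}}+{\boldsymbol{E}}{\boldsymbol{\Sigma}}\|$ for ${\boldsymbol{E}}=\widetilde{\boldsymbol{\Sigma}}-{\boldsymbol{\Sigma}}$ and transferred directly through ${\boldsymbol{\Sigma}}^2\succeq\sigma_{\min}({\boldsymbol{\Sigma}})^2{\boldsymbol{I}}$ would lose a full $\kappa^2$ and only yield a much weaker $\beta\kappa$-type bound; the PSD-ness of ${\boldsymbol{N}}{\boldsymbol{\Sigma}}{\boldsymbol{N}}$ is a minor additional observation that tightens the lower bound but is not essential for the $O(\beta)$ rate.
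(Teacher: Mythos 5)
Your proof is correct, but it takes a different route from the paper's. You work additively in the conjugated frame: writing $\widetilde{\boldsymbol{\Sigma}}={\boldsymbol{\Sigma}}^{1/2}({\boldsymbol{I}}+{\boldsymbol{N}}){\boldsymbol{\Sigma}}^{1/2}$ with $\|{\boldsymbol{N}}\|\leq\beta/\kappa$, expanding $\widetilde{\boldsymbol{\Sigma}}^2-{\boldsymbol{\Sigma}}^2={\boldsymbol{\Sigma}}^{1/2}({\boldsymbol{N}}{\boldsymbol{\Sigma}}+{\boldsymbol{\Sigma}}{\boldsymbol{N}}+{\boldsymbol{N}}{\boldsymbol{\Sigma}}{\boldsymbol{N}}){\boldsymbol{\Sigma}}^{1/2}$, bounding the three perturbation terms in operator norm, and converting the identity-relative bound into a ${\boldsymbol{\Sigma}}$-relative one via $\sigma_{\max}({\boldsymbol{\Sigma}}){\boldsymbol{I}}\preceq\kappa{\boldsymbol{\Sigma}}$; each step checks out, and the arithmetic $2\beta+\beta^2/\kappa\leq 3\beta\leq 8\beta$ closes the argument. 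The paper instead avoids expanding the square altogether: it observes that ${\boldsymbol{x}}^\top\widetilde{\boldsymbol{\Sigma}}^2{\boldsymbol{x}}=\|\widetilde{\boldsymbol{\Sigma}}{\boldsymbol{x}}\|^2$, proves the relative vector bound $\|({\boldsymbol{\Sigma}}-\widetilde{\boldsymbol{\Sigma}}){\boldsymbol{x}}\|\leq\beta\|{\boldsymbol{\Sigma}}{\boldsymbol{x}}\|$ by inserting ${\boldsymbol{\Sigma}}^{-1}{\boldsymbol{\Sigma}}$ (which is where the single factor of $\kappa$ is paid), and then applies the triangle inequality and squares the resulting scalar inequality $(1-\beta)\|{\boldsymbol{\Sigma}}{\boldsymbol{x}}\|\leq\|\widetilde{\boldsymbol{\Sigma}}{\boldsymbol{x}}\|\leq(1+\beta)\|{\boldsymbol{\Sigma}}{\boldsymbol{x}}\|$, so no cross terms ever appear. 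Both arguments rest on the same calibration — the hypothesis $\epsilon=\beta/\kappa$ absorbs exactly one condition-number factor, as you correctly emphasize in your closing remark — but your expansion makes the constants (and the one-sided improvement from the PSD term ${\boldsymbol{N}}{\boldsymbol{\Sigma}}{\boldsymbol{N}}$) explicit, whereas the paper's norm-comparison trick is shorter and sidesteps the bookkeeping.
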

\begin{proof}

    Let $\epsilon = \frac{\beta}{\kappa}$. 
    The condition ${\boldsymbol{\Sigma}} \approx_{\epsilon} \widetilde {\boldsymbol{\Sigma}}$ implies that $\left\| {\boldsymbol{\Sigma}} - \widetilde{\boldsymbol{\Sigma}} \right\| \leq \epsilon \left\|{\boldsymbol{\Sigma}}\right\|$. We have \begin{align}
        \left\| \left({\boldsymbol{\Sigma}} - \widetilde {\boldsymbol{\Sigma}}\right) {\boldsymbol{x}} \right\| & = \left\| \left({\boldsymbol{\Sigma}} - \widetilde {\boldsymbol{\Sigma}}\right) {\boldsymbol{\Sigma}}^{-1} {\boldsymbol{\Sigma}} {\boldsymbol{x}}\right\|
        \\ & \leq \left\| \left({\boldsymbol{\Sigma}} - \widetilde {\boldsymbol{\Sigma}}\right) {\boldsymbol{\Sigma}}^{-1} \right\| \times \left\| {\boldsymbol{\Sigma}} {\boldsymbol{x}}\right\|
        \\ & \leq \left\| \left({\boldsymbol{\Sigma}} - \widetilde {\boldsymbol{\Sigma}}\right) \right\| \left\|{\boldsymbol{\Sigma}}^{-1} \right\| \times \left\| {\boldsymbol{\Sigma}} {\boldsymbol{x}}\right\|
        \\ & \leq  \epsilon \left\| {\boldsymbol{\Sigma}}\right\| \left\| {\boldsymbol{\Sigma}}^{-1}\right\| \times \left\|{\boldsymbol{\Sigma}} {\boldsymbol{x}}\right\|
        \\ & = \epsilon \kappa \left\| {\boldsymbol{\Sigma}} {\boldsymbol{x}}\right\|
        \\ & = \beta \left\| {\boldsymbol{\Sigma}} {\boldsymbol{x}}\right\|.
    \end{align}
    
    For any ${\boldsymbol{x}} \in \mathbb R^n$, by triangle inequality we have \begin{align}
        \left\|{\boldsymbol{\Sigma}} {\boldsymbol{x}}\right\| - \left\|\left({\boldsymbol{\Sigma}} - \widetilde {\boldsymbol{\Sigma}}\right){\boldsymbol{x}}\right\| & \leq \left\|\widetilde {\boldsymbol{\Sigma}}{\boldsymbol{x}}\right\|
        \leq \left\|{\boldsymbol{\Sigma}} {\boldsymbol{x}}\right\| + \left\|\left({\boldsymbol{\Sigma}} - \widetilde {\boldsymbol{\Sigma}}\right){\boldsymbol{x}}\right\| \label{eq:square-lemma-triangle}.
    \end{align}
    Subtracting the inequality $\left\|\left({\boldsymbol{\Sigma}} - \widetilde {\boldsymbol{\Sigma}}\right){\boldsymbol{x}}\right\| \leq \beta \left\|{\boldsymbol{\Sigma}} {\boldsymbol{x}}\right\|$ we derived before into \cref{eq:square-lemma-triangle} and squaring all sides, we have \begin{align}
        (1 - \beta)^2 {\boldsymbol{x}}^\top {\boldsymbol{\Sigma}}^2  {\boldsymbol{x}} \leq {\boldsymbol{x}}^\top \widetilde {\boldsymbol{\Sigma}}  {\boldsymbol{x}}\leq (1 + \beta)^2 {\boldsymbol{x}}^\top {\boldsymbol{\Sigma}}^2 {\boldsymbol{x}}.
    \end{align}
    Since $0 < \beta < \frac{1}{8}$, the claim is proved.
\end{proof}

Let ${\boldsymbol{T}} ={\boldsymbol{X}}^\top \left({\boldsymbol{I}} + \lambda \hat {\boldsymbol{L}}\right) ^{-2} {\boldsymbol{X}}$ be the true Hessian. Let $\tilde {\boldsymbol{T}} = {\boldsymbol{X}}^\top \left({\boldsymbol{I}} + \lambda \tilde {\boldsymbol{L}}\right)^{-2} {\boldsymbol{X}}$ be the approximated Hessian using sparsified Laplacian $\tilde {\boldsymbol{L}}$. By \Cref{lem:regularized-spectral-sprsification}, ${\boldsymbol{I}} + \lambda \tilde {\boldsymbol{L}} \approx_{\frac{\beta}{3 \lambda}} {\boldsymbol{I}} + \lambda {\boldsymbol{L}}$ with probability at least $1 - \frac{1}{2n}$. From \Cref{lem:square}, we have $\tilde {\boldsymbol{T}} \approx_{8 \beta} {\boldsymbol{T}}$. 
        
Next, we show that ${\boldsymbol{Q}}^\top {\boldsymbol{Q}} \approx_{O(\beta)} {\boldsymbol{T}}$. Let $\tilde {\boldsymbol{H}} = {\boldsymbol{I}} + \lambda \tilde{ \boldsymbol{L}}$. Let $\mathcal S$ be the operator defined by $\mathsf{SDDSolver}_{\frac{\beta}{\sqrt{3 \lambda}}}\left(\tilde {\boldsymbol{H}}, \cdot\right)$, i.e. ${\boldsymbol{q}}_j = \mathcal S({\boldsymbol{x}}_j)$ where ${\boldsymbol{q}}_j$ and ${\boldsymbol{x}}_j$ are the $j$-th column of ${\boldsymbol{Q}}$ and ${\boldsymbol{X}}$ respectively. From \Cref{lem:sdd-solver} we have for any ${\boldsymbol{z}} \in \mathbb R^d$, \begin{align}
        \left\|{\boldsymbol{Q}} {\boldsymbol{z}} - \tilde{\boldsymbol{H}}^{-1} {\boldsymbol{X}}{\boldsymbol{z}}  \right\|_{2}^2  & = \left\| \sum_{j=1}^dz_j\left(\mathcal S({\boldsymbol{x}}_j) - \tilde {\boldsymbol{H}} {\boldsymbol{x}}_j\right) \right\|_2^2
        \\ & = \left\|\mathcal S\left(\sum_{j=1}^d z_j{\boldsymbol{x}}_j\right) - \tilde {\boldsymbol{H}}\left(\sum_{j=1}^d z_j {\boldsymbol{x}}_j\right)\right\|_2^2
        \\ & \leq \frac{\beta^2}{3\lambda}\left\| \tilde{\boldsymbol{H}}^{-1} {\boldsymbol{X}}{\boldsymbol{z}}  \right\|_{\tilde{\boldsymbol{H}}}^2
        \\ & \leq \beta^2 \|\tilde{\boldsymbol{H}}^{-1} {\boldsymbol{X}}{\boldsymbol{z}}\|_2^2.
\end{align}
    Therefore we have 
    \begin{align}
        (1-\beta)\|\tilde{\boldsymbol{H}}^{-1} {\boldsymbol{X}}{\boldsymbol{z}}\|_2 \leq \|{\boldsymbol{Q}}{\boldsymbol{z}}\| \leq (1 + \beta) \|\tilde{\boldsymbol{H}}^{-1} {\boldsymbol{X}}{\boldsymbol{z}}\|_2,    
    \end{align}
    and this is equivalent to \begin{align}
        (1-\beta)^2 {\boldsymbol{z}}^\top ({\boldsymbol{X}} \tilde{\boldsymbol{H}}^{-2} {\boldsymbol{X}}) {\boldsymbol{z}} \leq {\boldsymbol{z}}^\top {\boldsymbol{Q}}^\top {\boldsymbol{Q}} {\boldsymbol{z}} \leq (1 + \beta)^2 {\boldsymbol{z}}^\top \left({\boldsymbol{X}}^\top\tilde {\boldsymbol{H}}^{-2} {\boldsymbol{X}}\right){\boldsymbol{z}}.
    \end{align}
    Notice that ${\boldsymbol{X}}^\top \tilde {\boldsymbol{H}}^{-2} {\boldsymbol{X}} = \tilde {\boldsymbol{T}}$. We conclude that ${\boldsymbol{Q}}^\top {\boldsymbol{Q}} \approx_{2\beta + \beta^2} \tilde {\boldsymbol{T}}$. When $\beta < \frac{1}{8}$, we have ${\boldsymbol{Q}}^\top {\boldsymbol{Q}} \approx_{4\beta} \tilde {\boldsymbol{T}}$.

    Lastly, from \Cref{lem:SRHT} and the discussions in \Cref{sec:fast-matrix-multiplication}, we have there exists a constant $C' \in (0,1/4)$, such that \begin{align}
{\boldsymbol{P}} = \tilde {\boldsymbol{Q}}^\top \tilde {\boldsymbol{Q}} \approx_{C'} {\boldsymbol{Q}}^\top {\boldsymbol{Q}}
\end{align}
with probability at least $1 - \frac{1}{2n}$.

Put the results above together and we get \begin{align}{\boldsymbol{P}} \approx_{12\beta + C'} {\boldsymbol{T}}.\end{align} Notice that $12\beta + C' < \frac{1}{2}$. Using a union bound can prove that the fail probability of this whole process is bounded by $\frac{1}{n}$.

\subsection{Proof of \Cref{thm:after-preconditioning}}

Let ${\boldsymbol{F}} = ({\boldsymbol{I}} + \lambda \hat {\boldsymbol{L}}) {\boldsymbol{X}}$. We have the Hessian of $\ell'$ is \begin{align}
\nabla^2 \ell' = {\boldsymbol{P}}^{-\frac{1}{2}}  {\boldsymbol{F}}^\top {\boldsymbol{F}} {\boldsymbol{P}}^{-\frac{1}{2}}.
\end{align}

From the condition that ${\boldsymbol{P}} \approx_{c_0} {\boldsymbol{F}}^\top {\boldsymbol{F}}$, we have ${\boldsymbol{F}}^\top {\boldsymbol{F}}  \preceq (1 + c_0) {\boldsymbol{P}}$, which implies \begin{align}
{\boldsymbol{P}}^{-1/2} {\boldsymbol{F}}^\top {\boldsymbol{F}}  {\boldsymbol{P}}^{-1/2} \preceq (1 + c_0){\boldsymbol{I}}.
\end{align}

Similarly, since ${\boldsymbol{P}}\preceq (1 + c_0) {\boldsymbol{F}}^\top {\boldsymbol{F}}$, we have \begin{align}
\left({\boldsymbol{F}}^\top {\boldsymbol{F}}\right)^{-1} \preceq (1 + c_0) {\boldsymbol{P}}^{-1},
\end{align}
which implies
\begin{align}
\left[ \lambda_{\min}({\boldsymbol{P}}^{-1/2} {\boldsymbol{F}}^\top {\boldsymbol{F}}  {\boldsymbol{P}}^{-1/2})\right] ^{-1} & = \lambda_{\max}\left({\boldsymbol{P}}^{1/2} \left({\boldsymbol{F}}^\top {\boldsymbol{F}}\right)^{-1}  {\boldsymbol{P}}^{1/2}\right)
\\ & \leq 1 + c_0.
\end{align}.

Notice that $\ell'$ and $\ell$ have the same global optimal value, let it be $\ell^*$. For any ${\boldsymbol{w}}'$ satisfies $\ell'({\boldsymbol{w}}') - \ell^* \leq \gamma$, we have \begin{align}
\ell\left({\boldsymbol{P}}^{1/2}{\boldsymbol{w}}'\right) - \ell^* = \ell'({\boldsymbol{w}}') - \ell^*  \leq \gamma.
\end{align}
Therefore, if ${\boldsymbol{w}}'$ is a solution of $\ell'$ with $\epsilon$ error rate, then ${\boldsymbol{P}}^{1/2} {\boldsymbol{w}}'$ is a solution of $\ell$ with $\epsilon$ error rate.

\subsection{Proof of \Cref{thm:main}}\label{sec:proof-main}

As we noted in the main paper, \Cref{alg:herta} is composed by two components: constructing the preconditioner ${\boldsymbol{P}}$ and applying it to the optimization.

\paragraph{Constructing the Preconditioner.}~ From \Cref{lem:regularized-spectral-sprsification}, the first step that applies spectral sparsifier to get $\tilde {\boldsymbol{L}}$ requires $\tilde O(m)$ time and the number of non-zero entries in $\tilde {\boldsymbol{L}}$ is $\tilde O\left(n_{\lambda}  \lambda^2 \right)$. By \Cref{lem:sdd-solver}, running the SDD solver with error rate $\frac{\beta}{3 \lambda}$ in the second step requires $ \tilde O\left({n_{\lambda} \lambda^2} \right)$ time. Notice that since ${\boldsymbol{X}}$ is an $n \times d$ matrix, we actually need to run SDD solver for $d$ times, and this introduces another $d$ factor in the time complexity. 

As we noted in \Cref{sec:fast-matrix-multiplication}, applying the Hadamard transformation to ${\boldsymbol{R}} {\boldsymbol{Q}}$ requires $\tilde O(d)$ time, and applying the subsampling requires $\tilde O(n)$ time. Since $\tilde {\boldsymbol{Q}} \in \mathbb R^{s \times d}$, calculating ${\boldsymbol{P}} = \tilde {\boldsymbol{Q}}^\top {\boldsymbol{Q}}$ requires $O(d^2 s)$ time. Since $s = \tilde O(d)$, this step takes $\tilde O(d^3)$ time. We use brute force to calculate ${\boldsymbol{P}}' = {\boldsymbol{P}}^{-1/2}$, and this takes $O(d^3)$ time.

As a summary, constructing the preconditioner ${\boldsymbol{P}}'$ takes $\tilde O(n_\lambda \lambda^2 d + nd + d^3)$ time. Notice that we only perform this step once during the whole algorithm.

\paragraph{Performing the Iterations.}~
Next we consider the time required for each iteration. We calculate ${\boldsymbol{u}}^{(t)}$ from right to left and it takes $O(nd)$ time. Next, we need to perform two SDD solvers with error rate $\mu$. From \Cref{lem:sdd-solver}, it takes $\tilde O\left[ m \log\left(\frac{1}{\mu}\right)\right]$, which is $\tilde O\left[ m \log \left(\frac 1\epsilon\right)\right]$. Calculating ${\boldsymbol{g}}^{(t)}$ and ${\boldsymbol{w}}^{(t)}$ is straight-forward and takes $\tilde O(nd)$ time.

To conclude, performing each iteration requires $\tilde O\left[ m \log \left(\frac 1\epsilon \right) + nd\right]$ time. By \Cref{lem:outer-analysis-naive} and \Cref{lem:preconditioner},  with a proper step size, the number of iterations needed for solving outer problem is $\tilde O\left(\log 1/\epsilon\right)$.

Combining the analysis above together, to overall complexity is \begin{align}
\tilde O\left[ n_\lambda \lambda^2 d + nd + d^3 + \left (m \log\left(\frac 1\epsilon\right)+ nd\right) \log\left(\frac 1\epsilon\right) \right] = \tilde O\left(n_\lambda \lambda^2 d + d^3 + (m+nd) \left(\log 1/\epsilon\right)^2\right),
\end{align}
which proves the claim.

\section{Further Discussions}

In this section, we extend some of the discussions in the main paper.

\subsection{An Analysis of CE Loss v.s. MSE Loss}\label{sec:anlysis-of-ce-loss}

Although HERTA is derived from MSE loss, the experiment result shows it also works on CE loss. In this subsection we provide an analysis showing the similarity of the gradient and Hessian of TWIRLS on CE loss and MSE loss, to offer a intuitive explanation why a method that is derived from MSE loss can work on CE loss. 

In this section, for a node $u \in \{1,2,\cdots,n\}$, we use ${\boldsymbol{y}}^{(u)} \in \mathbb R^c$ to represent the $u$-th row of ${\boldsymbol{Y}}$ (notice we use super-script here to distinguish from ${\boldsymbol{y}}_i$ used before), ${\boldsymbol{h}}_u \in \mathbb R^n$ to represent the $u$-th row of $\left({\boldsymbol{I}}+ \lambda \hat {\boldsymbol{L}}\right)^{-1}$. For $p \in \{1,2,\cdots d\}$, we use ${\boldsymbol{x}}_p \in \mathbb R^n$ to denote the $p$-th column of ${\boldsymbol{X}}$. For $i \in \{1,2,\cdots c\}$, we use $y^{(u)}_i$ to denote the $i$-th entry of ${\boldsymbol{y}}^{(u)}$ (or in other words the $(u,i)$-th entry of ${\boldsymbol{Y}}$), and $w_{p,i}$ to denote the $(p,i)$-th entry of ${\boldsymbol{W}}$.

The MSE loss of TWIRLS defined in  \cref{eq:final-loss} can be decomposed into summation of sub-losses of each node, i.e.  \begin{align}\ell({\boldsymbol{W}})  = \sum_{u = 1}^n \frac{1}{2} \left\| {\boldsymbol{h}}_u^\top {\boldsymbol{X}} {\boldsymbol{W}} - {\boldsymbol{y}}^{(u)}\right\|_{\mathcal F}^2 = \sum_{u = 1}^n \ell^{(u)}\left({\boldsymbol{W}}\right),
\end{align}
where $\ell^{(u)}({\boldsymbol{W}}) = \frac{1}{2} \left\| {\boldsymbol{W}}^\top {\boldsymbol{X}} ^\top {\boldsymbol{h}}_u  - {\boldsymbol{y}}^{(u)}\right\|_{\mathcal F}^2$. For a specific class $i \in \{1,2,\cdots, c\}$, 
we have the gradient of $\ell^{(u)}$ w.r.t. ${\boldsymbol{w}}_i$ is \begin{align}
\frac{\partial \ell^{(u)}}{\partial {\boldsymbol{w}}_i} = {\boldsymbol{X}}^\top {\boldsymbol{h}}_u {\boldsymbol{h}}_u^\top {\boldsymbol{X}}{\boldsymbol{w}}_i - {\boldsymbol{X}}^\top {\boldsymbol{h}}_u 
 \times {{y}}^{(u)}_i.\label{eq:grad-mse}
\end{align}
It is not hard to see from \cref{eq:grad-mse} that the Hessian of $\ell^{(u)}$ with respect to ${\boldsymbol{w}}_i$ is \begin{align}
\nabla^2_{{\boldsymbol{w}}_i} \ell^{(u)}\left({\boldsymbol{W}}\right) = {\boldsymbol{X}}^\top {\boldsymbol{h}}_u {\boldsymbol{h}}_u ^\top {\boldsymbol{X}}. \label{eq:hessian-mse}
\end{align}

For classification tasks, the target ${\boldsymbol{y}}^{(u)}$-s are one-hot vectors representing the class of the node. Notice when we calculate cross entropy loss, we use $\mathsf{softmax}$ to normalize it before feeding it into the loss function. In the following for a vector ${\boldsymbol{v}} \in \mathbb R^c$ whose $i$-th entry is $v_i$, we define \begin{align}
\mathsf{softmax}({\boldsymbol{v}})_i = \frac{\exp(v_i)}{\sum_{j=1}^c \exp(v_j)}.
\end{align}

Suppose the $u$-th node belongs to class $k$, then the cross entropy loss of the $u$-th node is defined as \begin{align}
\mathsf{CE}^{(u)}({\boldsymbol{W}}) & = - \log \mathsf{softmax}\left({\boldsymbol{h}}_u^\top {\boldsymbol{X}} {\boldsymbol{W}}\right)_k
\\ & = -{\boldsymbol{h}}_u^\top  {\boldsymbol{X}}{\boldsymbol{w}}_k + \log \sum_{j=1}^c \exp\left({\boldsymbol{h}}_u^\top  {\boldsymbol{X}}{\boldsymbol{w}}_j\right),
\end{align}
The gradient of $\mathsf{CE}^{(u)}$ w.r.t. ${\boldsymbol{w}}_i$ is \begin{align}
\frac{\partial \mathsf{CE}^{(u)}}{\partial {\boldsymbol{w}}_i} & = -  {\boldsymbol{\delta}}_{i,k} \times {\boldsymbol{X}}^\top {\boldsymbol{h}_u} + \frac{{\boldsymbol{X}}^\top {\boldsymbol{h}}_u \exp\left({\boldsymbol{h}}_u^\top {\boldsymbol{X}} {\boldsymbol{w}}_i\right)}{\sum_{j=1}^c \exp\left({\boldsymbol{h}}_u^\top  {\boldsymbol{X}}{\boldsymbol{w}}_j\right)}
\\ & = {\boldsymbol{X}}^\top {\boldsymbol{h}}_u \mathrm{softmax} \left( {\boldsymbol{h}}_u^\top {\boldsymbol{X}} {\boldsymbol{W}} \right)_i - {\boldsymbol{X}}^\top {\boldsymbol{h}}_u \times {{y}}_i ^{(u)}.\label{eq:grad-ce}
\end{align}
By taking another partial differentiation, we have \begin{align}
\frac{\partial^2 \mathsf{CE}^{(u)}}{\partial w_{p,i} \partial w_{q,i}} & = \frac{ \left( {\boldsymbol{x}}_p^\top {\boldsymbol{h}}_u\right) \left( {\boldsymbol{x}}_q^\top {\boldsymbol{h}}_u\right) \exp\left({\boldsymbol{h}}_u^\top {\boldsymbol{X}} {\boldsymbol{w}}_i\right)}{\sum_{j=1}^c \exp\left({\boldsymbol{h}}_u^\top {\boldsymbol{X}} {\boldsymbol{w}}_j\right)} - \frac{ \left( {\boldsymbol{x}}_p^\top {\boldsymbol{h}}_u\right) \left( {\boldsymbol{x}}_q^\top {\boldsymbol{h}}_u\right) \left[ \exp\left({\boldsymbol{h}}_u^\top {\boldsymbol{X}} {\boldsymbol{w}}_i\right) \right]^2}{\left[\sum_{j=1}^c \exp\left({\boldsymbol{h}}_u^\top {\boldsymbol{X}} {\boldsymbol{w}}_j\right)\right]^2}
\\ & = \left( {\boldsymbol{x}}_p^\top {\boldsymbol{h}}_u\right) \left( {\boldsymbol{x}}_q^\top {\boldsymbol{h}}_u\right) \left[ s_i - s_i^2 \right], \label{eq:ce-hessian-scalar}
\end{align}
where $s_i  = \mathsf{softmax}\left( {\boldsymbol{h}}_u^\top {\boldsymbol{X}} {\boldsymbol{W}} \right)_i$. Rewriting \cref{eq:ce-hessian-scalar} into matrix form, we have \begin{align}
\nabla^2_{{\boldsymbol{w}}_i} \mathsf{CE}^{(u)}({\boldsymbol{W}}) = (s_i - s_i^2){\boldsymbol{X}}^\top {\boldsymbol{h}}_u {\boldsymbol{h}}_u^\top  {\boldsymbol{X}}. \label{eq:ce-hessian}
\end{align}

Comparing \cref{eq:grad-mse} and \cref{eq:grad-ce}, we can notice that the gradient of MSE loss and CE loss are essentially the same except the term ${\boldsymbol{h}}_u^\top {\boldsymbol{X}} {\boldsymbol{w}}_i$ in \cref{eq:grad-mse} is normalized by softmax in \cref{eq:grad-ce}. Additionally, by comparing the Hessian of MSE loss \cref{eq:hessian-mse} and the Hessian of CE loss \cref{eq:ce-hessian}, we have the latter is a rescaled version of the former. These similarities intuitively explains why our method can be also used on CE loss.

\subsection{The Unavoidable $\lambda^2$ in the Running Time}\label{sec:sqaure-tight}

\Cref{lem:square} essentially states the following idea: even when two PD matrices are very similar (in terms of spectral approximation), if they are ill-conditioned, their square can be very different. In this subsection, we prove that the bound obtained in \Cref{lem:square} is strict up to constant factors by explicitly constructing a worst case. Notice that, in this subsection we discuss the issue of squaring spectral approximators in a broader context, thus in this subsection we possibly overload some symbols used before to simplify the notation.

First we consider another definition of approximation rate: for two PD matrices ${\boldsymbol{\Sigma}}$ and $\widetilde {\boldsymbol{\Sigma}}$, we define the approximation rate as \begin{align}
\psi\left({\boldsymbol{\Sigma}}, \widetilde{\boldsymbol{\Sigma}}\right) = \max\left\{ \left\|  {\boldsymbol{\Sigma}}^{-1/2} \widetilde {\boldsymbol{\Sigma}}  {\boldsymbol{\Sigma}}^{-1/2} \right\|, \left\|  \widetilde{\boldsymbol{\Sigma}}^{-1/2}  {\boldsymbol{\Sigma}}  \widetilde{\boldsymbol{\Sigma}}^{-1/2} \right\|\right\}.
\end{align}
This definition is easier to calculate in the scenario considered in this subsection, and can be easily translated to the definition we used in \Cref{sec:preliminaries}: when $\epsilon = \psi\left({\boldsymbol{\Sigma}},\widetilde {\boldsymbol{\Sigma}}\right) \in (0,1)$, we have $\widetilde {\boldsymbol{\Sigma}} \approx_{\epsilon} {\boldsymbol{\Sigma}}$, and when $\psi\left({\boldsymbol{\Sigma}},\widetilde {\boldsymbol{\Sigma}}\right)$ is larger than $1$ we don't have an approximation in the form defined in \Cref{sec:preliminaries}.

Let $\gamma > 1$ and $\delta \in (0,1)$ be real numbers. Define 
\begin{align}{\boldsymbol{\Sigma}} = \begin{bmatrix}1 & 0\\ 0 & 1\end{bmatrix} \begin{bmatrix}\gamma & 0 \\ 0 & 1\end{bmatrix}\begin{bmatrix}1 & 0 \\ 0 & 1\end{bmatrix},\end{align}
and \begin{align}\widetilde {\boldsymbol{\Sigma}} = \begin{bmatrix}\sqrt{1-\delta^2} & -\delta\\ \delta & \sqrt{1-\delta^2}\end{bmatrix} \begin{bmatrix}\gamma & 0 \\ 0 & 1\end{bmatrix}\begin{bmatrix}\sqrt{1-\delta^2} & \delta\\ -\delta & \sqrt{1-\delta^2}\end{bmatrix}.\end{align}
It's not hard to compute the error rate $\psi\left({\boldsymbol{\Sigma}} , \widetilde{\boldsymbol{\Sigma}}\right) = \left\|  {\boldsymbol{\Sigma}}^{-1/2} \widetilde {\boldsymbol{\Sigma}}  {\boldsymbol{\Sigma}}^{-1/2} \right\| \approx \Theta (\gamma \delta^2)$. While if we consider the squared matrices, we have $\psi\left({\boldsymbol{\Sigma}}^2 , \widetilde{\boldsymbol{\Sigma}}^2\right) = \left\| {\boldsymbol{\Sigma}}^{-1} \widetilde {\boldsymbol{\Sigma}}^2  {\boldsymbol{\Sigma}}^{-1} \right\| \approx \gamma^2\delta^2$. In this case, $\psi\left({\boldsymbol{\Sigma}}^2 , \widetilde{\boldsymbol{\Sigma}}^2\right)$ is larger than $\psi\left({\boldsymbol{\Sigma}} , \widetilde{\boldsymbol{\Sigma}}\right)$ by a factor of $\gamma$, which is the condition number of ${\boldsymbol{\Sigma}}$. When $\gamma$ is very large and $\delta$ is very small, we can have a small $\psi\left({\boldsymbol{\Sigma}} , \widetilde{\boldsymbol{\Sigma}}\right)$ while large $\psi\left({\boldsymbol{\Sigma}}^2 , \widetilde{\boldsymbol{\Sigma}}^2\right)$, which .

\paragraph{A More General Construction.}

For a PD matrix ${\boldsymbol{\Sigma}}$ and its spectral approximation $\widetilde{\boldsymbol{\Sigma}}$, we call $\frac{\psi\left({\boldsymbol{\Sigma}}^2 , \widetilde{\boldsymbol{\Sigma}}^2\right)}{\psi\left({\boldsymbol{\Sigma}} , \widetilde{\boldsymbol{\Sigma}}\right)}$ the Squared Error Rate. As noted above, in the worst case the squared error rate can be as large as the condition number of ${\boldsymbol{\Sigma}}$. However, the construction above is limited to $2 \times 2$ matrices. Now we construct a more general worst case of the squared error rate and perform a loose analysis. Although not rigorously proved, the construction and the analysis suggest the origination of large squared error rates: it approaches the upper bound (condition number) when the eigenspace of the two matrices are very well aligned but not exactly the same.

Let ${\boldsymbol{A}}$ be an ill-conditioned matrix with all eigenvalues very large except one eigenvalue equals to $1$, and the eigenvalues of ${\boldsymbol{B}}$  are all closed to the eigenvalues of ${\boldsymbol{A}}$. Specifically, Let the SVD of ${\boldsymbol{A}}$ be ${\boldsymbol{A}} = {\boldsymbol{U}} \mathrm{diag}({\boldsymbol{\lambda}}) {\boldsymbol{U}}^\top$, where ${\boldsymbol{\lambda}} = \begin{bmatrix}\lambda_1 & \lambda_2 & \cdots \lambda_n\end{bmatrix}$, and suppose $\lambda_n = 1$ and $\lambda_k \gg 1, \forall k \leq n-1$. For simplicity we just let ${\boldsymbol{A}}$ and ${\boldsymbol{B}}$ have the same eigenvalues. Suppose the SVD of ${\boldsymbol{B}}$ is ${\boldsymbol{B}} = {\boldsymbol{V}} \mathrm{diag}({\boldsymbol{\lambda}}){\boldsymbol{V}}^\top$ and let ${\boldsymbol{\Lambda}} = \mathrm{diag}({\boldsymbol{\lambda}})$. We denote the one sided error rate of ${\boldsymbol{A}}$ and ${\boldsymbol{B}}$ by $\epsilon$, i.e. $\epsilon = \left\|{\boldsymbol{A}}^{-1/2} {\boldsymbol{B}} {\boldsymbol{A}}^{-1/2}\right\|$. We have   \begin{align}
\epsilon & = \left\| {\boldsymbol{\Lambda}}^{-1/2} {\boldsymbol{U}}^\top {\boldsymbol{V}} {\boldsymbol{\Lambda}} {\boldsymbol{V}}^\top {\boldsymbol{U}} {\boldsymbol{\Lambda}}^{-1/2} \right\|
\\ & = \left\| {\boldsymbol{\Lambda}}^{-1/2} {\boldsymbol{W}} {\boldsymbol{\Lambda}} {\boldsymbol{W}}^\top{\boldsymbol{\Lambda}}^{-1/2} \right\|,
\end{align}
where ${\boldsymbol{W}} = {\boldsymbol{U}}^\top {\boldsymbol{V}}$.

Since $\lambda_n = 1$ and for all $k \leq n-1$, $\lambda_k \gg 1$, we have \begin{align}{\boldsymbol{\Lambda}}^{-\frac{1}{2}} = \begin{bmatrix} \lambda_1^{-1/2} & & & \\ & \lambda_2^{-\frac{1}{2}} & & \\ & & \ddots &  \\& & & \lambda_n^{-\frac{1}{2}}\end{bmatrix} \approx \begin{bmatrix} 0 & & & \\ & 0 & & \\ & & \ddots &  \\& & & 1\end{bmatrix},\end{align}
and therefore \begin{align}
\epsilon & = \left\| {\boldsymbol{\Lambda}}^{-1/2} {\boldsymbol{W}} {\boldsymbol{\Lambda}} {\boldsymbol{W}}^\top{\boldsymbol{\Lambda}}^{-1/2} \right\|  \approx ({\boldsymbol{W}}{\boldsymbol{\Lambda}}{\boldsymbol{W}}^\top)_{n,n} = \sum_{k=1}^n {\boldsymbol{W}}_{k,n}^2 \lambda_k,
\end{align}
where ${\boldsymbol{A}}_{i,j}$ represents the $(i,j)$-th entry of a matrix ${\boldsymbol{A}}$. 

For simplicity, we assume all $\lambda_k (k \leq n-1)$ are approximate equal, i.e. $\lambda_1 \approx \lambda_2 \cdots \approx \lambda_{n-1} = \gamma$. Then we have \begin{align}\epsilon & = \sum_{k=1}^n {\boldsymbol{W}}_{k,n}^2 \lambda_k
\\ & \approx \gamma \left[ \sum_{k=1}^{n-1} {\boldsymbol{W}}_{k,n}^2 +  \frac{1}{\gamma}{\boldsymbol{W}}_{n,n}^2 \right]
\\ & \approx \gamma \sum_{k=1}^{n-1} {\boldsymbol{W}}_{k,n}^2
\\ & = \gamma\left(1 - {\boldsymbol{W}}_{n,n}^2\right).
\end{align}

Recall ${\boldsymbol{W}} = {\boldsymbol{U}}^\top {\boldsymbol{V}}$, we have ${\boldsymbol{W}}_{n,n} = {\boldsymbol{u}}_n^\top {\boldsymbol{v}}_n$, where ${\boldsymbol{u}}_n$ and ${\boldsymbol{v}}_n$ are the $n$-th column of ${\boldsymbol{U}}$ and ${\boldsymbol{V}}$ respectively. Thus we have \begin{equation}
\epsilon \approx \gamma\left[1 - \left({\boldsymbol{u}}_n^\top {\boldsymbol{v}}_n\right)^2\right].
\end{equation}

Now we can see the spectral approximation error $\epsilon$ is determined by two terms: the condition number $\gamma$ and the matchness of the eigenvectors corresponds to small eigenvalues, which is evaluated by $\left[1 - \left({\boldsymbol{u}}_n^\top {\boldsymbol{v}}_n\right)^2\right]$. \textbf{When $\gamma$ is very large but $\left[1 - \left({\boldsymbol{u}}_n^\top {\boldsymbol{v}}_n\right)^2\right]$ is small, ${\boldsymbol{B}}$ can still be a spectral approximation of ${\boldsymbol{A}}$}. For example if $(1-{\boldsymbol{u}}_n^\top {\boldsymbol{v}}_n) \approx \gamma^{-1}$, we can get a spectral approximation error $\epsilon \approx 1$. However, after we square the matrices, the eigenvalues will also get squared, but eigenvectors remains unchanged. That will enlarge  the spectral approximation error by a factor of  $\gamma$, i.e. $\left\|{\boldsymbol{A}}^{-1} {\boldsymbol{B}}^2 {\boldsymbol{A}}^{-1}\right\| \approx \gamma^2 \gamma^{-1} = \gamma$, which becomes very large. In this case the squared error rate is $\gamma$, the condition number of ${\boldsymbol{A}}$.

\subsection{Applying Graph Sparsification in Each Iteration}\label{sec:applying-graph-sparsification-in-each-iteration}

As mentioned in the main paper, unlike most existing work, in our algorithm we don't sparsify the graph in each training iteration. The proof of \Cref{lem:outer-analysis-naive} (see \Cref{sec:proof-outer-analysis}) suggests the reason why performing graph sparsification in each iteration can lead to suboptimal running time. In this subsection we illustrate this claim in detail. The intuition is that, in order to ensure convergence of the training, we require a small error rate in the gradient estimation, which is of the order $\epsilon^{1/2}$ as we have showed in \Cref{sec:proof-outer-analysis}. It is acceptable for the SDD solver because the running time of the SDD solver only logarithmly depends on the error rate. However, if we sparsify the graph, to obtain an $\epsilon^{1/2}$ error rate we will need to sample $O(n_\lambda / \epsilon)$ edges, which grows linearly with $1/\epsilon$, and can be large especially when $\epsilon$ is very small. Below is a more detailed analysis. 

Consider in the for-loop of \Cref{alg:herta} we replace the $\hat {\boldsymbol{L}}$ by a sparsified version ${\boldsymbol{L}}' = \mathsf{Sparsify}_{\omega}\left(\hat {\boldsymbol{L}}\right)$, where $\omega \in (0,1)$. Let ${\boldsymbol{H}}' = {\boldsymbol{I}} + \lambda {\boldsymbol{L}}'$. From \Cref{lem:regularized-spectral-sprsification}, we have ${\boldsymbol{H}}' \approx_{\omega} {\boldsymbol{H}}$. Consider $E_1$ defined in \Cref{sec:proof-outer-analysis}, it now becomes \begin{align}
E_1' & = \left\| {\boldsymbol{X}}^\top \left[\mathcal S\left({\boldsymbol{H}}'^{-1} {\boldsymbol{z}}\right) - {\boldsymbol{H}}^{-2} {\boldsymbol{z}}\right] \right\|
\\ & \leq \sigma_{\max}\left({\boldsymbol{X}}\right) \left\| \mathcal S\left({\boldsymbol{H}}'^{-1}{\boldsymbol{z}}\right) - {\boldsymbol{H}}^{-2}{\boldsymbol{z}}\right\|.
\end{align}
Here we can not proceed by using the fact that $\mathcal S$ is a linear solver, because now $\mathcal S$ is not a linear solver for ${\boldsymbol{H}}$ but for ${\boldsymbol{H}}'$, thus we will have to split the error term again: \begin{align}
E_1' \leq \sigma_{\max}\left({\boldsymbol{X}}\right) \left\| \mathcal S\left({\boldsymbol{H}}'^{-1} {\boldsymbol{z}}\right) - {\boldsymbol{H}}'^{-2} {\boldsymbol{z}}\right\| + \sigma_{\max}({\boldsymbol{X}}) \left\| {\boldsymbol{H}}^{-2} {\boldsymbol{z}} + {\boldsymbol{H}}'^{-2} {\boldsymbol{z}} \right\|.\label{eq:D.3.a}
\end{align}
Of the two terms on the right-hand side of \cref{eq:D.3.a}, the first one can be bounded with a similar method used in \Cref{sec:proof-outer-analysis}, and the second term is bounded by \begin{align}
\sigma_{\max}({\boldsymbol{X}}) \left\| {\boldsymbol{H}}^{-2} {\boldsymbol{z}} - {\boldsymbol{H}}'^{-2} {\boldsymbol{z}} \right\| & \leq \sigma_{\max}\left({\boldsymbol{X}}\right) \left\| I - {\boldsymbol{H}}^{-1}{\boldsymbol{H}}'^{-2}{\boldsymbol{H}}^{-1}\right\| \| {\boldsymbol{H}}^{-2} {\boldsymbol{z}}\|
\\ & \leq \sigma_{\max}\left({\boldsymbol{X}}\right)\omega \left\| {\boldsymbol{H}}^{-2} {\boldsymbol{z}} \right\|
\\ & \leq \sqrt\frac{8}{\epsilon} \kappa({\boldsymbol{X}}) \lambda \omega \left\| \nabla \ell_i\left({\boldsymbol{w}}\right) \right\|.
\end{align}
Therefore, in order to obtain $E_1' \leq \|\nabla \ell_i({\boldsymbol{w}})\|$, we at least require $\omega \leq \frac{\epsilon^{-1/2}}{\sqrt{8} \kappa({\boldsymbol{X}}) \lambda } = O\left(\epsilon ^{-1/2}\right)$. From \Cref{lem:regularized-spectral-sprsification}, the number of edges in the sparsified graph $\omega$ error rate is $O(n_{\lambda} / \epsilon)$. A similar analysis can be also applied to $E_2$, and by repeating the proof in \Cref{sec:proof-outer-analysis}, we obtain an overall running time bound \begin{align}
O(m + n_\lambda \lambda^2 d + d^3 + n_\lambda \epsilon^{-1} \log 1/\epsilon),
\end{align}
which, although eliminates the $m \left(\log \frac{1}{\epsilon}\right)^2$ term, introduces an extra $n_\lambda \epsilon^{-1} \log \frac{1}{\epsilon}$ term, and is usually worse than the original bound we derived in \Cref{thm:main}, especially when requiring a relatively small $\epsilon$. 

That being said, although not very likely, when $\epsilon$ and / or $\lambda$ are large, it is possible that directly sparsifying the graph in each iteration is beneficial. Considering this, we can adopt a mixed strategy: when $n_\lambda \epsilon^{-1} \leq m \log (1/\epsilon)$, we sparsify the graph in each iteration, elsewise we don't. This leads to the following overall running time bound: \begin{align}
O\left[m + n_\lambda \lambda^2 d + d^3 + \min\left\{ m \log\frac{1}{\epsilon} , n_\lambda \epsilon^{-1}\right\} \log\frac{1}{\epsilon}\right],
\end{align}
which is slightly better than the one we presented in \Cref{thm:main}.

\subsection{Possible Directions for Extending HERTA to More Complex Models}\label{sec:extending-to-more-complex}

The assumption that $f({\boldsymbol{X}};{\boldsymbol{W}}) =  {\boldsymbol{X}}{\boldsymbol{W}}$ provides us with a convenience that the Hessian of this model is a constant matrix. Therefore in \Cref{alg:herta} we only need to calculate the preconditioner ${\boldsymbol{P}}$ for one time. However, if $f({\boldsymbol{X}}; {\boldsymbol{W}})$ is implemented by a non-linear network, then the Hessian will change by the time and might be hard to calculate, which will be a key challenge to using a more complex $f$. We note that this can be possibly addressed by constructing a linear approximation of $f$ using its Jaccobian. In each iteration, we can use the Jaccobian to replace the ${\boldsymbol{X}}$ used in current version of HERTA, and recalculate ${\boldsymbol{P}}$ at each iteration. Since the convergence is fast, we only need to recalculate the Jaccobian for a small number of iterations, so should not bring massive change to the running time of the algorithm. 

The attention mechanism of TWIRLS in \cite{twirls} is achieved by adding a concave penalty function to each summand of the the $\mathrm{Tr} \left({\boldsymbol{Z}}^\top \hat {\boldsymbol{L}} {\boldsymbol{Z}}\right)$ term in \cref{eq:energy}. For specific penalty functions, it might still possible to find a inner problem solver, as long as the problem stay convex. We note that this depends on concrete implementation of the penalty term used. Investigating how to fast solve the inner problem under various penalty functions should also be an important problem for future study.

\section{Implementation Details}\label{sec:implementation-details}

In the experiments, the datasets are loaded and processed using the DGL package \cite{dgl}. We use the original inner problem solver in \cite{twirls} since it is not computation bottleneck. It can be in principle replaced by any implementation of SDD solvers. 

For the calculating the gradient of the preconditioned model, we presented a calculation method in \Cref{alg:herta} which maintains the lowest computational complexity. In preliminary experiments, we tested this calculation method with using the autograd module in pytorch\footnote{https://pytorch.org/} and verified that they have the same output, and similar computational efficiency on real world datasets (again in practice this step is not a bottleneck). Therefore, we simply use pytorch autograd module to compute gradients in experiments.

Using the pytorch autograd module also enables us to apply HERTA on various loss functions and optimizers: we only need to perform the preconditioning and indicate the loss function. The gradient and optimization algorithm with be automatically realized by pytorch.

In order to ensure a fair comparison and prevent confounding factors, we don't using any common training regularization techniques like weight decay or dropout. For each setting, we repeat the experiment with learning rates in $\{0.001,0.01,0.1,1,10\}$ and choose the trial which the training loss does not explode and with the lowest final training loss to report in the main paper.

\begin{figure*}[tbp]
\begin{minipage}{0.33\linewidth}
    \centering
    \includegraphics[width=\linewidth]{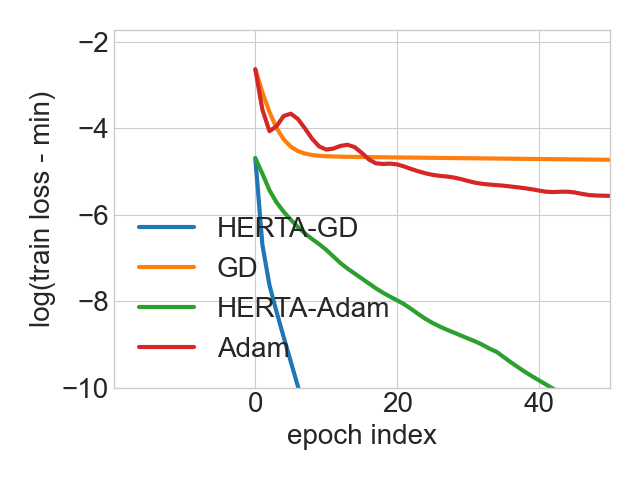}
\end{minipage}
\begin{minipage}{0.33\linewidth}
    \centering
    \includegraphics[width=\linewidth]{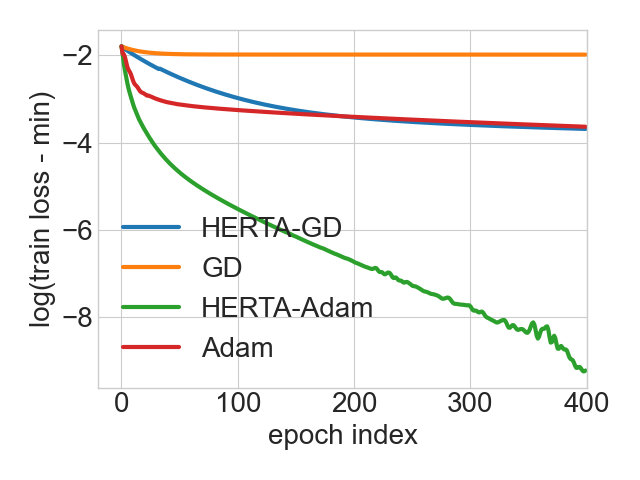}
\end{minipage}
\begin{minipage}{0.33\linewidth}
    \centering
    \includegraphics[width=\linewidth]{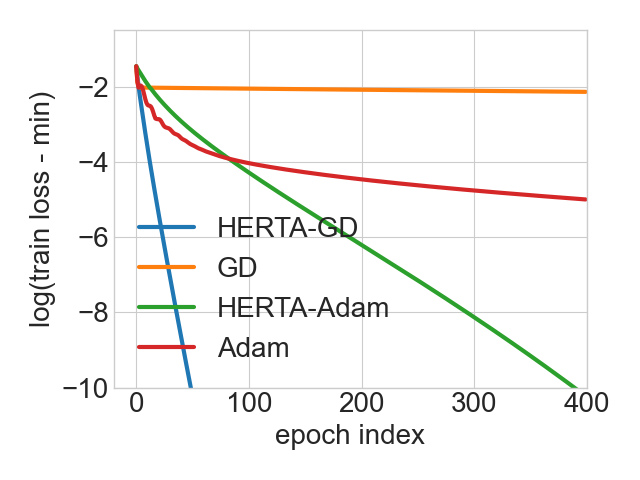}
\end{minipage}
    \caption{The training loss comparison between HERTA and standard optimizers on MSE loss with $\lambda = 20$. Dataset used from left to right: ogbn-arxiv, citeseer, pubmed. }
    \label{fig:ours-mse-20}
\end{figure*}

\begin{figure*}[tbp]
\begin{minipage}{0.33\linewidth}
    \centering
    \includegraphics[width=\linewidth]{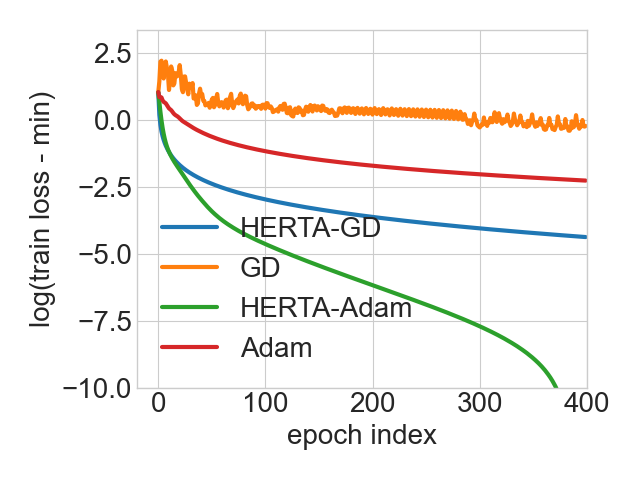}
\end{minipage}
\begin{minipage}{0.33\linewidth}
    \centering
    \includegraphics[width=\linewidth]{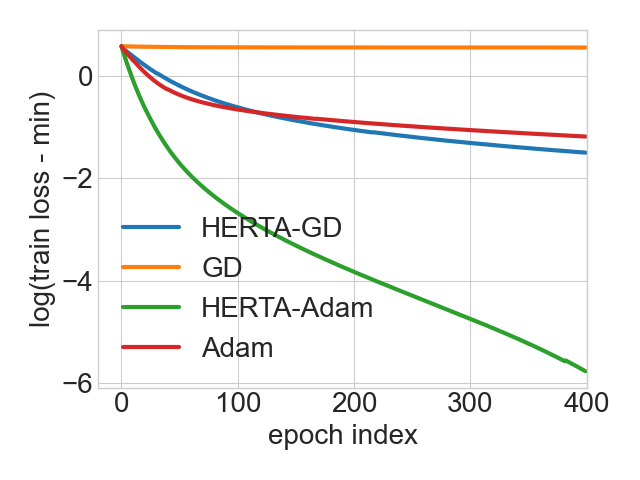}
\end{minipage}
\begin{minipage}{0.33\linewidth}
    \centering
    \includegraphics[width=\linewidth]{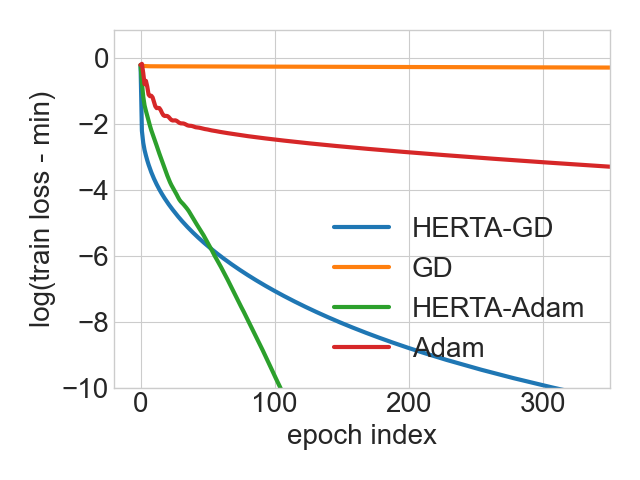}
\end{minipage}
    \caption{The training loss comparison between HERTA and standard optimizers on CE loss with $\lambda = 20$. Dataset used from left to right: ogbn-arxiv, citeseer, pubmed. }
    \label{fig:fastce-ce-20}
\end{figure*}

\begin{figure*}[htbp]
\hfill
\begin{minipage}{0.35\linewidth}
    \centering
    \includegraphics[width=\linewidth]{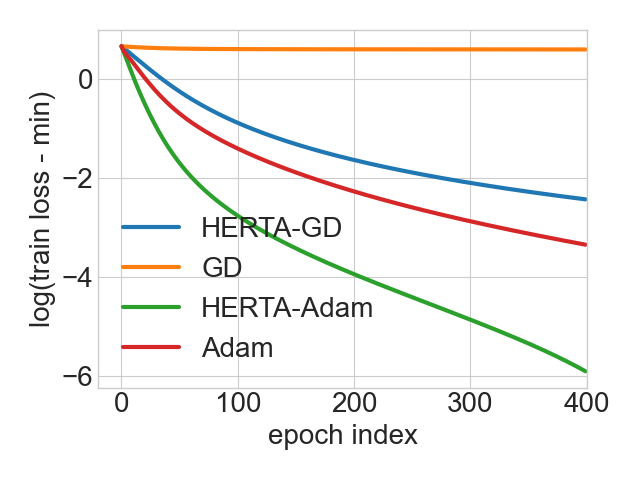}
\end{minipage}
\hfill
\begin{minipage}{0.35\linewidth}
    \centering
    \includegraphics[width=\linewidth]{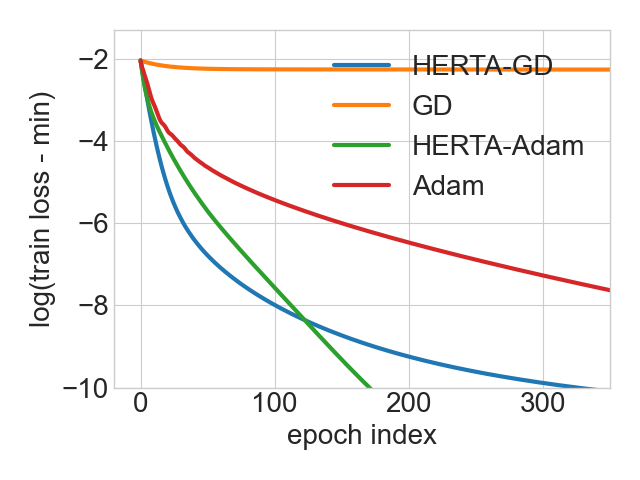}
\end{minipage}
\hfill
    \caption{The training loss comparison between HERTA and standard optimizers on Cora with $\lambda = 1$. Left: CE loss. Right: MSE loss.}
    \label{fig:cora-1}
\end{figure*}

\section{Additional Experiment Results}\label{sec:appendix-experiment}

In this section, we present additional experiment results.

\subsection{Experiments with Larger $\lambda$}

All the experiments presented in the main paper are with $\lambda = 1$. In this subsection, we present results with $\lambda = 20$. See \Cref{fig:ours-mse-20,fig:fastce-ce-20} for results with MSE loss and CE loss respectively. The results supports our observation in the main paper that HERTA works consistently well on all settings.

\subsection{Additional Results on Cora}

In this subsection we present results on Cora, another citation dataset used in \cite{twirls}. See \Cref{fig:cora-1} and \Cref{fig:cora-20} for results with $\lambda = 1$ and $\lambda = 20$ respectively. It is clear that the results on Cora is consistent with our observation on other datasets.

\begin{figure*}[htbp]
\hfill
\begin{minipage}{0.35\linewidth}
    \centering
    \includegraphics[width=\linewidth]{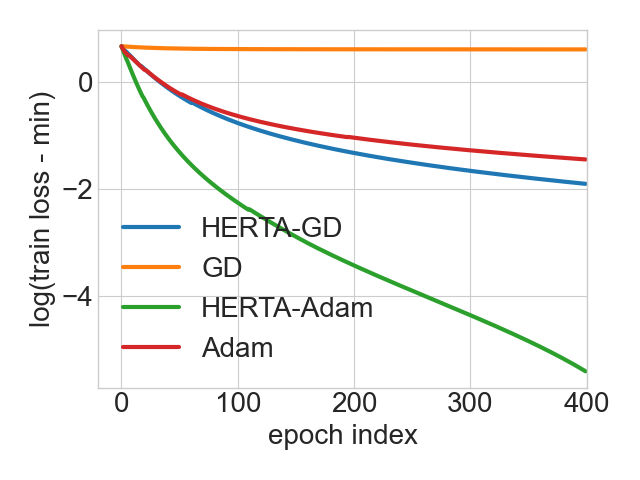}
\end{minipage}
\hfill
\begin{minipage}{0.35\linewidth}
    \centering
    \includegraphics[width=\linewidth]{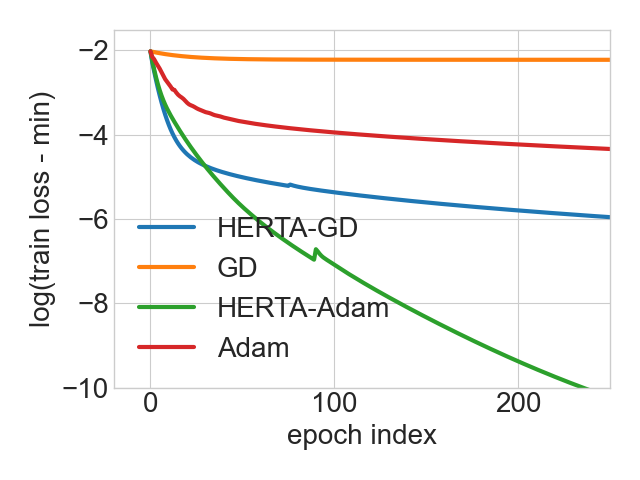}
\end{minipage}
\hfill
    \caption{The training loss comparison between HERTA and standard optimizers on Cora with $\lambda = 20$. Left: CE loss. Right: MSE loss.}
    \label{fig:cora-20}
\end{figure*}

\end{document}